\def\eps{{\epsilon}}
\DeclareMathAlphabet{\mathsfit}{\encodingdefault}{\sfdefault}{m}{sl}
\SetMathAlphabet{\mathsfit}{bold}{\encodingdefault}{\sfdefault}{bx}{n}
\def\gA{{\mathcal{A}}}
\def\gB{{\mathcal{B}}}
\def\gD{{\mathcal{D}}}
\def\gE{{\mathcal{E}}}
\def\gF{{\mathcal{F}}}
\def\gG{{\mathcal{G}}}
\def\gH{{\mathcal{H}}}
\def\gM{{\mathcal{M}}}
\def\gO{{\mathcal{O}}}
\def\gS{{\mathcal{S}}}
\def\gU{{\mathcal{U}}}
\def\gX{{\mathcal{X}}}
\def\gY{{\mathcal{Y}}}
\def\gZ{{\mathcal{Z}}}
\def\sE{{\mathbb{E}}}
\def\sN{{\mathbb{N}}}
\def\sR{{\mathbb{R}}}
\def\sV{{\mathbb{V}}}
\def\unif{{\mathrm{Uniform}}}
\newcommand{\pr}{\mathrm{Pr}}
\newcommand{\subopt}{\mathrm{SubOpt}}
\newcommand{\Ber}{\textrm{Ber}}
\newcommand{\kl}{\textrm{KL}}
\newcommand{\pdim}{\textrm{Pdim}}
\DeclareMathOperator*{\argmax}{arg\,max}
\DeclareMathOperator*{\argmin}{arg\,min}
\newcommand{\keepcomment}{0}%
\newcommand{\keeprevise}{0}
\definecolor{light-gray}{gray}{0.5}
\newcommand{\thanh}[1]{
    \IfEqCase{\keepcomment}{
        {1}{\textcolor{blue}{\emph{[\textbf{thanh}: #1]}}}
        {0}{}
    }
}
\newcommand{\raman}[1]{
    \IfEqCase{\keepcomment}{
        {1}{\textcolor{red}{\emph{[\textbf{Raman}: #1]}}}
        {0}{}
    }
}
\newcommand{\comment}[1]{
\IfEqCase{\keepcomment}{
        {1}{\textcolor{red}{#1}}
        {0}{}
    }
}
\newcommand{\revise}[1]{
    \IfEqCase{\keeprevise}{
        {1}{\textcolor{orange}{#1 }}
        {0}{#1 }
    }
}
\newcommand{\erase}[1]{
    \IfEqCase{\keeprevise}{
        {1}{\textcolor{red}{\st{#1} }}
        {0}{}
    }
}
\theoremstyle{plain}
\newtheorem{theorem}{Theorem}[section]
\newtheorem{proposition}[theorem]{Proposition}
\newtheorem{lemma}[theorem]{Lemma}
\newtheorem{example}[theorem]{Example}
\theoremstyle{definition}
\newtheorem{definition}[theorem]{Definition}
\newtheorem{assumption}[theorem]{Assumption}
\theoremstyle{remark}
\newtheorem{remark}[theorem]{Remark}
\icmltitlerunning{On The Statistical Complexity of Offline Decision-Making}
\begin{document}

\twocolumn[
\icmltitle{On The Statistical Complexity of Offline Decision-Making}

\begin{icmlauthorlist}
\icmlauthor{Thanh Nguyen-Tang}{yyy}
\icmlauthor{Raman Arora}{yyy}
\end{icmlauthorlist}

\icmlaffiliation{yyy}{Department of Computer Science, Johns Hopkins University, Baltimore 21218, USA}

\icmlcorrespondingauthor{Thanh Nguyen-Tang}{nguyent@cs.jhu.edu}

\icmlkeywords{Machine Learning, ICML}

\vskip 0.3in
]

\printAffiliationsAndNotice{\icmlEqualContribution} %

\begin{abstract}

We study the statistical complexity of offline decision-making with function approximation, establishing (near) minimax-optimal rates for stochastic contextual bandits and Markov decision processes. The performance limits are captured by the pseudo-dimension of the (value) function class and a new characterization of the behavior policy that \emph{strictly} subsumes all the previous notions of data coverage in the offline decision-making literature. In addition, we seek to understand the benefits of using offline data in online decision-making and show nearly minimax-optimal rates in a wide range of regimes.

\end{abstract}

\section{Introduction}

Reinforcement learning (RL) has achieved remarkable empirical success in a wide range of challenging tasks, from playing video games at the same level as humans~\citep{mnih2015human}, surpassing champions at the game of Go~\citep{silver2018general}, {to} defeating top-ranked professional players in StarCraft~\citep{vinyals2019grandmaster}. However, many of these systems require extensive online interaction in game-play with other players who are experts at the task or some form of self-play~\cite {li2016deep, ouyang2022training}. Such online interaction may not be affordable in many real-world scenarios due to concerns about cost,  safety, and ethics (e.g., healthcare and autonomous driving). 
Even in domains where online interaction is possible (e.g., dialogue systems), we would still prefer to utilize available historical, pre-collected datasets to learn useful decision-making policies efficiently. 
Such an approach would allow leveraging plentiful data, possibly replicating the success that supervised learning has had recently \citep{lecun2015deep}. Offline RL has emerged as an alternative to allow learning from existing datasets and is particularly attractive %
when online interaction is prohibitive~\citep{ernst2005tree, lange2012batch, levine2020offline}.

Nevertheless, learning good policies from offline data presents a unique challenge not present in online decision-making: \ul{distributional shift}. 
In essence, the policy that interacts with the environment and collects data differs from the target policy we aim to learn. 
This challenge becomes more pronounced in real-world problems with large state spaces, where it necessitates function approximation to generalize from observed states to unseen ones.

Representation learning is a basic challenge in machine learning. It is not surprising, then, that function approximation plays a pivotal role in reinforcement learning (RL) problems with large state spaces, mirroring its significance in statistical learning theory \citep{vapnik2013nature}. Empirically, deep RL, which employs neural networks for function approximation, has achieved remarkable success across diverse tasks~\citep{mnih2015human, schulman2017proximal}. The choice of function approximation class determines the inductive bias we inject into learning, e.g., our belief that the learner's environment is relatively simple even though the state space may be large.

It is natural, then, to understand different function approximation classes in terms of a tight characterization of their complexity and learnability. In statistical supervised learning, specific combinatorial properties of the function class are known to completely characterize sample-efficient supervised learning in both realizable and agnostic settings~\citep{vapnik1971uniform, alon1997scale, attias2023optimal}. For offline RL, a similar characterization is not known. With that as our motivation, we pose the following fundamental question that has largely remained unanswered:
\emph{What is a sufficient and necessary condition for learnability in offline RL with function approximation?} 

We note that given the additional challenge of distribution shift in offline RL, such a characterization would 
depend not only on the properties of the function class but, more importantly, on the quality of the offline dataset. 
The existing literature on offline RL provides theoretical understanding only for limited scenarios of distributional shifts. These works capture the quality of offline data via a notion of data coverage. The strongest of these notions is that of uniform coverage, which requires that the behavior policy (aka, the policy used for data collection) 
covers the space of all feasible trajectories with sufficient probability~\citep{DBLP:journals/jmlr/MunosS08,chen2019information}. Recent works consider weaker assumptions, including single-policy concentrability, which only requires coverage for the trajectories of the target policy that we want to compete against~\citep{DBLP:conf/uai/LiuSAB19,rashidinejad2021bridging,yin2021towards} and relative condition numbers~\citep{agarwal2021theory,uehara2021pessimistic}. While the works above do not take into account function approximation in their notion of data coverage, %
Bellman residual ratio~\citep{xie2021bellman}, Bellman error transfer coefficient~\citep{song2022hybrid} and data diversity~\citep{nguyen-tang2023on} 
directly incorporate function approximation into the notion of offline data quality. 
Many of these notions are incompatible in that they give differing views of the landscape of offline learning. Further, there are no known lower bounds establishing the necessity of any of these assumptions. This suggests that there are gaps in our understanding of when offline learning is feasible. 

In this paper, we work towards giving a more comprehensive and tighter characterization of problems that are learnable with offline data. Our key contributions are as follows. 
\begin{itemize}
\item We introduce the notion of policy transfer coefficients, which subsumes other notions of data coverage. 
\item In conjunction with the pseudo-dimension of the function class, policy transfer coefficients give a tight characterization of offline learnability. Specifically, the class of offline learning problems characterized as learnable by policy transfer coefficients subsumes the problems characterized as learnable in prior literature. We provide (nearly) matching minimax lower and upper bounds for offline learning. %

\item Our results encompass offline learning in the setting of multi-armed bandits, contextual bandits, and Markov decision processes. We consider a variety of function approximation classes, including linear, neural-network-based, and, more generally, any function class with bounded $L_1$ covering numbers. 
\item We extend our results to the setting of hybrid offline-online learning and formally characterize the value of offline data in online learning problems. 
\item We overcome various technical challenges such as giving the uniform Bernstein's inequality for Bellman-like loss using empirical $L_1$ covering numbers (see \Cref{section: uniform Bernstein's inequality} and \Cref{remark: compare uniform Bernstein's inequality to Krish's uniform Freedman-type inequality})
and removing the blowup of the number of iterations of the Hedge algorithm (see \Cref{remark: negligible hedging cost}). These may be of independent interest in themselves. 
\end{itemize}

    The rest of the paper is organized as follows. In~\Cref{chapter: background}, we introduce a formal setup for offline decision-making problems, where we focus mainly on the contextual bandit model to avoid deviating from the main points. In~\Cref{section: offline DM as transfer learning}, we introduce policy transfer coefficients, a new notion of data coverage. In \Cref{section: lower bounds for offline CB} and \Cref{section: upper bound ofr CB offline}, we provide lower bounds and upper bounds, respectively,  for offline decision-making (in the contextual bandit model). In~\Cref{section: extension to hybrid setting}, we consider a hybrid offline-online setting. We extend our results to Markov decision processes (MDPs) in~\Cref{section: extension to MDPs}. We conclude with a discussion in \Cref{section: discussion}.

\section{Background and Problem Formulation}
\label{chapter: background}

\subsection{Stochastic contextual bandits} 
We represent a stochastic contextual bandit environment with a tuple  $(\gX, \gA, \gD)$, where $\gX$ denotes the set of contexts, $\gA$ denotes the space of actions and $\gD \in \Delta(\gX \times \gY)$ denotes an unknown joint distribution over the contexts and rewards. Without loss of generality, we take $\gY:=[0,1]^\gA$. 
A learner interacts with the environment as follows. At each time step, the environment samples $(X,Y) \sim \gD$,
the learner is presented with the context $X$, she commits to an action $a \in \gA$, and observes reward $Y(a)$. 

We model the learner as stochastic. The learner maintains a stochastic policy $\pi: \gX \rightarrow \Delta(\gA)$, i.e., a map from the context space to a distribution over the action space. The value, $V^{\pi}$, of a policy $\pi$ is defined to be its expected reward, 
\begin{align*}
    V^{\pi}_\gD := \sE_{(X,Y) \sim \gD, A \sim \pi(\cdot|X)}[Y(A)]. 
\end{align*}
The sub-optimality of $\hat{\pi}$ w.r.t. any policy $\pi$ is defined as:
\begin{align}
    \subopt_{\gD}^{\pi}(\hat{\pi}) = V_{\gD}^{\pi} - V_{\gD}^{\hat{\pi}}. 
    \label{eq: suboptimality}
\end{align}
We often suppress the subscript in $V^{\pi}_\gD$ and  $\subopt_{\gD}^{\pi}(\hat{\pi})$. %

\subsection{Offline data} Let $S = \{(x_i, a_i, r_i)\}_{i \in [n]}$ be a dataset collected by a (fixed, but unknown) ``behavior'' policy $\mu$, i.e., $(x_i, y_i) \overset{i.i.d.}{\sim} \gD$, $a_i \sim \mu(\cdot|x_i)$, and $r_i = y_i(a_i)$ for all $i \in [n]$. The goal of offline learning is to learn a policy $\hat{\pi}$ from the offline data such that it has small sub-optimality $\subopt_{\gD}^{\pi}(\hat{\pi})$ for as wide as possible a range of comparator policies $\pi$ (possibly including an optimal policy $\pi_{\gD}^* \in \argmax_{\pi} V^{\pi}_{\gD}$). 

\subsection{Function approximation} A central aspect of any value-based method for a sequential decision-making problem is to employ a certain function class {$\gF \subset [0,1]^{\gX \times \gA }$} for modeling rewards in terms of contexts and actions; it is typical to solve a regression problem using squared loss. 
The choice of the function class reflects learner's inductive bias or prior knowledge about the task at hand. In particular, we often make the following realizability assumption~\citep{chu2011contextual,agarwal2012contextual,foster2020beyond}.

\begin{assumption}[Realizability] There exists an $f^* \in \gF$ such that $f^*(x,a) = \sE[Y(a) | X = x], \forall (x,a) \in (\gX \times \gA)$. 
  \label{assumption: realizability cb}
\end{assumption}  

We will utilize a well-understood characterization of the complexity of real-valued function classes from statistical learning, that of pseudo-dimension~\citep{pollard1984convergence}. 

\begin{definition}[Pseudo-dimension]
    A set $\{z_1, \ldots, z_d\} \subset \gZ$ is said to be shattered by $\gH \subset \sR^{\gZ}$ if 
there exists a vector $r \in \mathbb{R}^d$ such that for all $\epsilon \in \{\pm 1 \}^m$, there exists $h \in \mathcal{H}$ such that $\textrm{sign}(h(z_i) - r_i)=\epsilon_i$ for all $i \in [n].$ 
The pseudo-dimension $\pdim(\gH)$ is the cardinality of the largest set shattered by $\gH$.  
\end{definition}

{For example, neural networks of depth $L$, with ReLU activation,  and total number of parameters (weights and biases) equal to $W$ have pseudo-dimension of $\gO(W L \log(W))$ \citep{bartlett2019nearly}.}

\begin{assumption} 
Define the function class associated with each fixed action, $\gF(\cdot,a) := \{f(\cdot,a): f \in \gF\}$. We assume that 
    $\sup_{a \in \gA} \pdim \gF(\cdot,a) \leq d$.
    \label{assumption: pseudo-dimension of F for CB}
\end{assumption}
    The reason we assume that the function class associated with each action has a bounded pseudo-dimension is that we can provide (nearly) matching lower and upper bounds for such a function class. We also provide upper bounds in terms of the covering number of a function class.

\begin{definition}[Covering number]
    Let $S = \{z_1, \ldots, z_n\} \subset \gZ$, and $\hat{P}_S(\cdot) = \frac{1}{n} \sum_{i=1}^n \delta_{z_i}(\cdot)$ be the empirical distribution, where $\delta_{z}$ is the Dirac function at $z$. For any $p >0$ and any $\gH \subset \sR^{\gZ}$, let $N_p(\gH, \eps, L_p(\hat{P}_S))$ denote the size of the smallest $\gH'$ such that: 
    \begin{align*}
        \forall h \in \gH, \exists h' \in \gH':  \|h - h'\|_{ L_p(\hat{P}_S)} \leq \eps,
    \end{align*}
    where $\|h - h'\|_{ L_p(\hat{P}_S)} := \left(\frac{1}{n} \sum_{i=1}^n |h(z_i) - h'(z_i)|^p\right)^{1/p}$ is a pseudo-metric on $\gH$. We define the (worst-case) $L_p$ covering number $N_p(\gH, \eps, n)$ as: 
    \begin{align*}
        N_p(\gH, \eps, n) = \sup_{S: |S|=n}N_p(\gH, \eps, L_p(\hat{P}_S)).
    \end{align*}
\end{definition}

\paragraph{Notation.} Let $f(x,\pi) := \sE_{a \sim \pi(\cdot|x)}f(x,a), \forall x$, and $\gF(\cdot, \Pi) := \{f(\cdot, \pi): f \in \gF, \pi \in \Pi \}$, where {$\Pi := \Delta(\gA)^{\gX}$ is the set of all possible (Markovian) policies}. Let $\gD \otimes \pi$ denote the distribution of the random variable $(x,a,r)$, where $(x,y) \sim \gD, a \sim \pi(\cdot|x), r=y(a)$. Let $l_f$ denote the random variable $(f(x,a) -r)^2$ where $(x,a,r) \sim \gD \otimes \mu$. The empirical and the population means of $l_f$ are represented as 
$\hat{P} l_f = \frac{1}{n} \sum_{i=1}^n l_f(x_i, a_i, r_i)$ and $P l_f = \sE_{(x,a,r) \sim \gD \otimes \mu} [l_f(x,a,r)]$, respectively.
The empirical and the population means of $f$ under policy $\pi$ are denoted as  $\hat{P} f(\cdot, \pi) = \frac{1}{n} \sum_{i=1}^n f(x_i, \pi)$, and $P f(\cdot, \pi) := \sE_{x \sim \gD}[f(x,\pi)]$, respectively. 
We write $a \lesssim b$ to mean $a = \gO(b)$, suppressing only absolute constants, and $a ~\widetilde{<}~ b$ to mean $a = \widetilde{\gO}(b)$, further suppressing log factors.  Define 
$[x]_1 := \max\{\sqrt{x}, x\}$. 

\section{Offline Decision-Making as Transfer Learning}
\label{section: offline DM as transfer learning}
We view offline decision-making as transfer learning where the goal is to 
utilize pre-collected experiences for learning new tasks. 
A key observation we leverage is that there are parallels in how the two areas capture distribution shift -- a common challenge in both settings. Transfer learning uses various notions of distributional discrepancies \citep{ben2010theory,david2010impossibility,germain2013pac,sugiyama2012density,mansour2012multiple,tripuraneni2020theory,watkins2023optimistic} to capture distribution shift between the source tasks and the target task, much like how offline decision-making uses various notions of data coverage to measure the distributional mismatch due to offline data. We consider a new notion of data coverage inspired by transfer learning, which will be shown shortly to tightly capture the statistical complexity of offline decision-making from a behavior policy.

\begin{definition}[Policy transfer coefficients]
Given any policy $\pi$, $\rho \geq 0$ is said to be a \ul{policy transfer exponent} from $\mu$ to $\pi$ w.r.t. $(\gD, \gF)$ if there exists a finite constant $C$, called \ul{policy transfer factor}, such that:
    \begin{align}
        \forall f \in \gF: \left( \sE_{\gD \otimes \pi}[f^* - f] \right) ^{2 \rho} \leq C \sE_{\gD \otimes \mu}[(f^* - f)^2].
        \label{eq: transfer exponent}
    \end{align}
    Any such pair $(\rho, C)$ is said to be a \ul{policy transfer coefficient} from $\mu$ to $\pi$ w.r.t. $(\gD, \gF)$.
We denote the \emph{minimal} policy transfer %
exponent by $\rho_\pi$.\footnote{If $\rho$ is a policy transfer exponent, so is any $\rho' \geq \rho$.} The \emph{minimal} policy transfer factor corresponding to $\rho_{\pi}$ is denoted as $C_\pi$.

    \label{definition: transfer exponent}
\end{definition}

\begin{remark}
    Our definition of policy transfer resembles and is directly inspired by the notion of transfer exponent by \citep{hanneke2019value}, which we refer to as  Hanneke-Kpotufe (HK) transfer exponent for distinction. A direct adaptation of the HK transfer exponent would result in: 
      \begin{align}
        \forall f \in \gF:  \sE_{\gD \otimes \pi}[(f^* - f)^2] ^{\rho} \leq C \sE_{\gD \otimes \mu}[(f^* - f)^2]. 
        \label{eq: HK transfer exponent}
    \end{align}
    Note the difference in the LHS of \Cref{eq: transfer exponent} and that of \Cref{eq: HK transfer exponent}. When defining policy transfer coefficients, we %
    use $\ell_2$-distance\footnote{The %
    {$\ell_2$-distance} from $f$ to $f^*$ corresponds to the excess risk of $f$ w.r.t. squared loss when assuming realizability.}  $ \sE_{\gD \otimes \mu}[(f^* - f)^2]$ w.r.t. the behavior policy to control the \emph{expected value gap} $\sE_{\gD \otimes \pi}[f^* - f]$, whereas the HK transfer exponent directly requires a bound on squared distance $\sE_{\gD \otimes \pi}[(f^* - f)^2]$ w.r.t. to the target policy.
    While this appears to be a small change, %
    our notion bears a deeper connection with offline decision-making that the HK transfer exponent cannot capture. Perhaps, the best way to demonstrate this is through a concrete example where the policy transfer coefficient tightly captures the learnability of offline decision-making while the HK transfer exponent fails -- we return to this in 
    \Cref{example: halfspace-and-spheres}. 
    For now, it suffices to argue the tightness of our notion more generally. 
    Indeed, our policy transfer exponent notion allows a tighter characterization of the transferability for offline decision-making, as indicated by Jensen's inequality: 
    \begin{align*}
        \left( \sE_{\gD \otimes \pi}[f^* - f] \right)^2 \leq \sE_{\gD \otimes \pi}[(f^* - f)^2].
    \end{align*}
     Now, consider a random variable, $\zeta$, that is distributed according to the Bernoulli distribution $\textrm{Ber}(p)$. Then, we have $\sE[\zeta^2]/ |\sE[\zeta]|^2 = 1/p$. This means that in this example, the transfer factor $C_1$ (corresponding to $\rho=1$) in our definition is smaller than the transfer factor implied by the original definition of \citep{hanneke2019value} by a factor of $p$, which is significant for small values of $p$. %

\end{remark}
\subsection{Relations with other notions of data coverage}
In this section, we highlight the properties of transfer exponents and compare them with other notions of data coverage considered in offline decision-making literature. Perhaps the most common notions of data coverage are that of single-policy concentrability coefficients~\citep{DBLP:conf/uai/LiuSAB19,rashidinejad2021bridging}, relative condition numbers (for linear function classes) \citep{agarwal2021theory,uehara2021pessimistic}, and data diversity \citep{nguyen-tang2023on}.\footnote{Data diversity of \citet{nguyen-tang2023on}, in fact,  generalizes Bellman error transfer coefficient of~\citet{song2022hybrid} to allow an additive error.} We demonstrate that policy transfer coefficients strictly generalize all these prior notions, in the sense that bounds on the prior notions of data coverage always imply bounds on transfer exponents but not vice versa. Specifically, there are problem instances for which the existing measures of data coverage tend to infinity, yet these problems are learnable given the characterization in terms of transfer coefficients.

\paragraph{Compared with concentrability coefficients.} The concentrability coefficient between $\pi$ and $\mu$ %
is defined as $\kappa_\pi := \sup_{x,a} \frac{\pi(a|x)}{\mu(a|x)}$. The finiteness of $\kappa_\pi$ is widely used as one of the sufficient conditions for sample-efficient offline decision-making. By definition, the policy transfer factor corresponding to the policy transfer exponent of $1$, is always upper-bounded by $\kappa_\pi$. The finiteness of $\kappa_\pi$ requires the support of $\mu$ to contain that of $\pi$. However, offline decision-making does not even need overlapping support between a target policy and the behavior policy. 
\begin{example}
Consider $|\gX| = 1$ and $\gA = \sR^d$. Let $\gF$ be the class of $d$-dimensional halfspaces that pass through the origin, and $\Pi$ be the set of $d$-dimensional spheres centered at the origin, $\Pi = \{\textrm{Uniform}(\{a \in \sR^d: \|a\|=r\}): r \geq 0\}$ (see the figure below). 
 \begin{figure}[h!]
     \centering
     \includegraphics[scale=0.4]{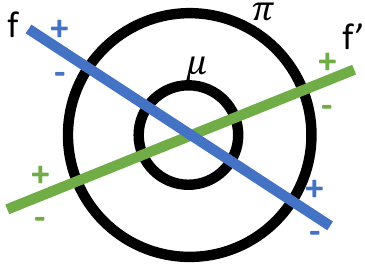}
 \end{figure}
\label{example: halfspace-and-spheres}
\end{example}

 \Cref{example: halfspace-and-spheres}, taken from \cite{hanneke2019value}, gracefully reflects the ``correctness'' of policy transfer exponent at characterizing learnability in offline settings, whereas both the HK transfer exponent and the concentrability coefficient fail to do so. In this case, by direct computation we have that $\sE_{\gD \otimes \pi}[f^* - f] = 0$, thus the minimal policy transfer exponent from any behavior policy $\mu \in \Pi$ to any different policy $\pi \in \Pi$ approaches zero, i.e., $\rho_\pi \rightarrow 0$, while the HK transfer exponent, denoted $\rho_{\pi}^{\textrm{HK}}$, is $1$ and the concentrability coefficient $\kappa_\pi$ is infinity. These different values of data quality measures translate into different predictive bounds of offline learning for \Cref{example: halfspace-and-spheres}. For concreteness, we summarize these bounds in \Cref{tab: predictive bounds for the halfspace example}.
 \begin{table}[h!]
     \centering
     \begin{tabular}{|c|c|}
     \hline
    \textbf{Data quality measure}  & \textbf{Predictive bound} \\
    \hline 
    \hline
       Policy transfer exponent $\rho_{\pi} \rightarrow 0$ & $0$ \\
       \hline
       HK transfer exponent $\rho_{\pi}^{\textrm{HK}}=1$ & $\sqrt{\frac{C_{\pi}}{n}}$ \\ 
       \hline
       Concentrability coefficient $\kappa_\pi \!\!=\! \infty$ & $\infty$\\
       \hline
     \end{tabular}
     \caption{Predictive bounds for offline learning of \Cref{example: halfspace-and-spheres} stemming from different measures of data quality. The bounds for the policy transfer exponent and the HK transfer exponent are obtained from \Cref{theorem: upper bounds of OfDM-Hedge CB} (note that our results are applicable to HK transfer exponents as well), while the bound for the concentrability coefficient is obtained from \citet{rashidinejad2021bridging}.}
     \label{tab: predictive bounds for the halfspace example}
 \end{table}
 
 Which of the above data quality measures give a tight characterization of learnability of problem in \Cref{example: halfspace-and-spheres}?  With the realizability assumption, the value $V^{\pi}$ of any policy in $\Pi$ equals $1/2$, regardless of the policy. Thus, the true sub-optimality is zero. This is tightly bounded by the predictive bound of our policy transfer exponent, while those by HK transfer exponent and concentrability coefficient give vacuous bounds, as shown in \Cref{tab: predictive bounds for the halfspace example}.

 Another interesting remark is that for offline decision-making, it is not always necessary to learn the true reward function. This task requires the sample complexity of $\tilde{\Theta}(\frac{d}{\eps})$ -- this is captured by the HK transfer exponent at the value of $1$; see \citep[Example~1]{hanneke2019value}.

\paragraph{Compared with the data diversity of \citet{nguyen-tang2023on}.} The notion of data diversity, recently proposed by \citet{nguyen-tang2023on} is motivated by the notion of task diversity in transfer learning~\citep{tripuraneni2020theory}. \citet{nguyen-tang2023on} show that data diversity subsumes both the concentrability coefficients and the relative condition numbers and that it can be used to derive strong guarantees for offline decision-making and state-of-the-art bounds when the function class is finite or linear. Their data diversity notion is, in fact, the policy transfer factor corresponding to policy transfer exponent $\rho=1$. The following example, which is modified from \citep[Example~3]{hanneke2019value}, shows that data diversity can be infinite while the minimal transfer exponent is finite.

\begin{example}
  Consider $|\gX| = 1, \gA = [-1,1], \gF = \{f_t: t \in [-1,1]\}$ where $f_t(a) = \mathbbm{1}\{a \geq t\}$ is \erase{a step function} \revise{a $1$-dimensional threshold}, and let $f_0$ be the optimal mean reward function. Let $\iota > 0$ be any positive scalar. Consider the following distribution: $\mu(a) \propto a^{2\iota-1}$ for $a \geq 0$ and $\mu(a)$ is uniform for $ a \in [-1,0]$. Let $\pi$ be the uniform distribution over $\gA$. By direct computation, for any $t \geq 0$, $|\sE_{\pi}[f_0 - f_t]| \propto t$ and $\sE_{\mu}[(f_0-f)^2] \propto t^{2\iota}$. Thus, no $\rho < 2\iota$ can be a policy transfer exponent, as $\lim_{t \rightarrow 0} |\sE_{\pi}[f_0 - f_t]|^{\rho} / \sE_{\mu}[(f_0-f)^2] \rightarrow \infty$. 
  Thus, the bound in \citet{nguyen-tang2023on} becomes vacuous. However, as long as the squared Bellman error $\sE_{\mu}[(f_0-f_t)^2]$ can predict the Bellman error $|\sE_{\pi}[f_0 - f_t]|$, one should expect that offline learning is still possible, albeit at a slower rate. 
  \label{example: minimal policy transfer exponent is chosen}
\end{example}

As our policy transfer coefficients cover the data diversity of \citet{nguyen-tang2023on} as a special case ($\rho=2$), which, in turn, is a generalization of the relative condition number, we refer to \citet{nguyen-tang2023on} for a detailed comparison with the relative condition number. 

\section{Lower Bounds}
\label{section: lower bounds for offline CB}
Let $\gB(\rho, C, d)$ denote the class of offline learning problem instances with any distribution $\gD$ over contexts and rewards, any function class $\gF$ that satisfies Assumptions~\ref{assumption: realizability cb} and \ref{assumption: pseudo-dimension of F for CB}, 
a behavior policy $\mu$, and all policies $\pi \in \Pi$ such that policy transfer coefficients w.r.t. $\mu$ are $(\rho, C)$. For this class, we give a lower bound on the sub-optimality of any offline learning algorithm.

\begin{theorem}
       For any $C > 0, \rho \geq 1, n \geq d \cdot \max\{2^{2 \rho - 4} C,  C^{\frac{1}{\rho-1}}/32 \}$, we have
    \begin{align*}
        \inf_{\hat{\pi}(\cdot)} \sup_{(\gD, \mu, \pi, \gF) \in \gB(\rho,C,d)} \sE_{\gD} \left[ \subopt_\gD^{\pi}(\hat{\pi}) \right] \gtrsim \left(\frac{Cd}{n} \right)^{\frac{1}{2 \rho}},
    \end{align*}
    \revise{where the infimum is taken over all offline algorithm $\hat{\pi}(\cdot)$ (a randomized mapping from the offline data to a policy).}
    \label{theorem: lower bound for offline CB}
\end{theorem}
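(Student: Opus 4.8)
The plan is to establish the bound by reducing to hypothesis testing over a finite family of hard instances and applying Assouad's lemma (equivalently, a $d$-fold product of two-point arguments). I would index the family by sign vectors $\theta \in \{\pm1\}^d$, built as follows: take $\gX = \{x_1,\dots,x_d\}$ with $\gD$ uniform over contexts and two actions $\gA = \{1,2\}$, where action $2$ is a ``reference'' arm with reward deterministically equal to $1/2$, whereas action $1$ at context $x_j$ returns a Bernoulli reward with mean $f^*_\theta(x_j,1) = 1/2 + \theta_j\Delta$ for a gap $\Delta$ to be chosen. Let $\gF = \{f_{\theta'} : \theta'\in\{\pm1\}^d\}$ with $f_{\theta'}(x_j,1) = 1/2 + \theta'_j\Delta$ and $f_{\theta'}(x_j,2) = 1/2$; this meets \Cref{assumption: realizability cb} by construction, and since $\gF(\cdot,1)$ shatters exactly the $d$ contexts while $\gF(\cdot,2)$ is a singleton, $\sup_a \pdim\gF(\cdot,a) \le d$ as required by \Cref{assumption: pseudo-dimension of F for CB}. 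The behavior policy $\mu$ plays action $1$ with a fixed probability $q$ (independent of context) and action $2$ otherwise, and the comparator is the optimal policy $\pi^*_\theta$, which plays action $1$ exactly at contexts with $\theta_j = +1$. A short computation gives $\subopt^{\pi^*_\theta}_\gD(\hat\pi) = \frac{\Delta}{d}\sum_{j=1}^d \big|\hat\pi(1\mid x_j) - \mathbbm{1}[\theta_j=+1]\big|$, so it suffices to show that no algorithm, on average over $\theta$, recovers more than a constant fraction of the bits $\theta_j$ from the data.

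The first key calculation fixes $q$ so that the instances lie in $\gB(\rho,C,d)$. For $f = f_{\theta'}$ only action $1$ contributes a discrepancy, so $\sE_{\gD\otimes\mu}[(f^*_\theta - f_{\theta'})^2] = \frac{4q\Delta^2}{d}\|\theta-\theta'\|_0$, while $\sE_{\gD\otimes\pi^*_\theta}[f^*_\theta - f_{\theta'}] = \frac{2\Delta}{d}\,|\{j : \theta_j=+1,\ \theta'_j=-1\}| \ge 0$. Maximizing the ratio $(\sE_{\gD\otimes\pi}[\,\cdot\,])^{2\rho}/\sE_{\gD\otimes\mu}[(\cdot)^2]$ over $f_{\theta'}\in\gF$ and over $\theta$ in the support of the prior (e.g.\ a balanced sub-family of the hypercube) shows that $(\rho,C)$ is a valid policy transfer coefficient precisely when $q \gtrsim \Delta^{2\rho-2}/C$, up to an absolute constant and a $2^{\Theta(\rho)}$ factor; I would take $q$ equal to this threshold, i.e.\ the \emph{least} coverage $\mu$ may have while keeping the transfer factor at $C$. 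This is exactly where $\rho$ enters: the $2\rho$-th power on the left of \Cref{eq: transfer exponent} lets $\mu$ be dramatically less informative as $\rho$ grows.

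The second key calculation is the information bound. Conditioning on the (action, context) pattern of the sample — which does not depend on $\theta$ — the only $\theta_j$-dependent part of the data is the list of Bernoulli$(1/2+\theta_j\Delta)$ rewards seen on pulls of action $1$ at $x_j$, whose count has mean $nq/d$. Hence the KL divergence between the data laws of two hypotheses differing only in coordinate $j$ is at most $\frac{nq}{d}\cdot O(\Delta^2) = O\!\big(n\Delta^{2\rho}/(Cd)\big)$ after substituting $q$. Choosing $\Delta \asymp (Cd/n)^{1/(2\rho)}$ makes this KL a small absolute constant, so by Pinsker's inequality the laws are confusable; Assouad's lemma then gives $\sE_\theta\,\sE_S\big[\subopt^{\pi^*_\theta}_\gD(\hat\pi)\big] \gtrsim \Delta \asymp (Cd/n)^{1/(2\rho)}$ for every offline algorithm $\hat\pi$, and taking a supremum over the family yields the theorem. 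The hypotheses $n \ge d\cdot 2^{2\rho-4}C$ and $n \ge dC^{1/(\rho-1)}/32$ are precisely what keeps this choice of $(\Delta,q)$ admissible — so that $\Delta$ stays at most a constant (keeping all reward means in $[0,1]$ and validating the KL estimates) and the induced probability $q$ does not exceed $1$.

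The step I expect to be the main obstacle is the transfer-coefficient bookkeeping in the first calculation: since \Cref{eq: transfer exponent} pairs the $2\rho$-th power of the \emph{signed} value gap under $\pi$ with the \emph{squared} $L_2$ error under $\mu$, one must track the worst case over $f\in\gF$ and over the comparator carefully so that the resulting $q$ is simultaneously small enough for the information bound and at most $1$, and so that these two demands are compatible exactly in the stated range of $n$. A secondary technical point is to run Assouad cleanly despite the random number of informative pulls per context, which the conditioning above handles. As a sanity check, at $\rho = 1$ the construction collapses to the classical single-gap bandit lower bound and recovers the $\sqrt{Cd/n}$ rate, matching the known data-diversity lower bound.
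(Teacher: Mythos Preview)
Your proposal is correct and follows essentially the same route as the paper: the paper likewise builds a two-action contextual bandit on $d$ contexts indexed by $\sigma\in\{\pm1\}^d$, sets the behavior policy to play the informative arm with probability $\eps^{2\rho-2}/C$, chooses the gap $\eps=(Cd/(32n))^{1/(2\rho)}$, and finishes via Assouad's lemma and a KL bound obtained from the convexity of KL (which plays the role of your conditioning argument). One simplification relative to your plan: there is no need to restrict to a ``balanced sub-family'' of the hypercube---verifying the transfer condition via the stronger squared-error (HK-type) inequality $\gE_{\sigma,\pi^*}(f)^\rho\le C\,\gE_{\sigma,\mu}(f)$ works for every $\sigma\in\{\pm1\}^d$ and every $f\in\gF$ directly, so the full hypercube and a clean Assouad argument suffice.
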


The lower bound in \Cref{theorem: lower bound for offline CB} is information-theoretic, i.e., it applies to \emph{any} algorithm for problem class $\gB(\rho, C, d)$. The lower bound is obtained 
by constructing a set of hard contextual bandit (CB) instances $\{\gD_i\}$ that are supported on $d$ data points. Then, for each $\gD_i$, we pick the hardest comparator policy $\pi = \pi^*_{\gD_i}$ and design a behavior policy that satisfies the policy transfer condition. We pick a simple enough function class that satisfies realizability and ensures that the policy transfer exponents and the pseudo-dimension are bounded. 
 We then proceed to show that given a behavior policy $\mu$, for any two CB instances $\gD_i$ and $\gD_j$ that are close to each other (i.e., $\textrm{KL}[(\gD_i \otimes \mu)^n \| (\gD_j \otimes \mu)^n]$ is small)  the corresponding optimal policies disagree. A complete proof is given in \Cref{section: proofs of lower bounds for offline CB}.

\section{Upper Bounds}
\label{section: upper bound ofr CB offline}
Next, we show that there exists an offline learning algorithm that is agnostic to the minimal policy transfer coefficient of any policy, yet it can compete uniformly with all comparator policies, as long as their minimal policy transfer exponent is finite. For this algorithm, we give an upper bound for VC-type classes that matches the lower bound in the previous section up to log factors, ignoring the dependence on $K = |\gA|$. For more general function classes, we only provide upper bounds. %

The general recipe for our algorithm (\Cref{algorithm: OfDM-Hedge}) is rather standard. 
We follow the actor-critic framework for offline RL studied in several prior works \citep{zanette2021provable,xie2021bellman,nguyen-tang2023on}. The algorithm alternates between computing a pessimistic estimate of the actor and improving the actor with the celebrated Hedge algorithm~\citep{freund1997decision}.

\begin{algorithm}
   \caption{Hedge for Offline Decision-Making (OfDM-Hedge)}
\begin{algorithmic}[1]
   \STATE {\bfseries Input:} Offline data $S$, function class $\gF$
   \STATE {\bfseries Hyperparameters:} Confidence parameter $\beta$, learning rate $\eta$, number of iterations $T$
   \STATE Initialize $\pi_1(\cdot|x) = \textrm{Uniform}(\gA)$, $\forall x \in \gX$
   \FOR{$t=1$ {\bfseries to} $T$}
   \STATE Pessimism: $f_t = \displaystyle\argmin_{f \in \gF: \hat{P} l_f - \hat{P} l_{\hat{f}} \leq \beta} \hat{P} f(\cdot, \pi_t)$
   \STATE Hedge: $\pi_{t+1}(a|x) \propto \pi_t(a|x) e^{\eta f_t(x,a)}, \forall (x,a)$
   \ENDFOR
   \STATE {\bfseries Output:} A randomized policy $\hat{\pi}$ as a uniform distribution over $\{\pi_t\}_{t \in [T]}$.
   \label{algorithm: OfDM-Hedge}
\end{algorithmic}
\end{algorithm}

The following result bounds the suboptimality of  OfDM-Hedge up to absolute constants which  we ignore for ease of exposition (see \Cref{theorem: theorem: upper bounds of OfDM-Hedge CB with exact constants} for exact constants). 
\begin{theorem}
    Fix any $\delta \in [0,1], \eps \geq 0$. Assume that $|\gA| = K$. Then, for any $(\gD,\gF)$ such that \Cref{assumption: realizability cb} holds, invoking  \Cref{algorithm: OfDM-Hedge} with $\beta = \Theta \left(\eps + \frac{ \log (N_1(\gF, \eps,n)/\delta)}{n} \right)$ and $\eta = \Theta(\sqrt{\frac{\log K}{T}})$ returns a policy $\hat{\pi}$ such that with probability at least $1 - \delta$,   $\forall \pi \in \Pi$, $\forall T \in \sN$, %
    \begin{eqnarray}
\sE[\subopt_{\gD}^\pi(\hat{\pi}) | S] \lesssim C_\pi^{\frac{1}{2 \rho_\pi}} \left(\! \eps + \frac{\log (N_1(\gF, \eps,n)/\delta)}{n} \right)^{\frac{1}{2 \rho_{\pi}}} \nonumber \\
   \hspace*{-30pt} \!\!\!\!\!\!\!\!\!\!\!\!\! + \mathbbm{1}_{\{|\gX| > 1\}} \left\{   \left[\frac{\log (N_1(\gF(\cdot, \Pi), \eps,n)/\delta) }{n} \right]_{1} \!\!\!+ \epsilon \right\} +\sqrt{\frac{\log K}{T}}. \nonumber
    \end{eqnarray}
    \label{theorem: upper bounds of OfDM-Hedge CB}
\end{theorem}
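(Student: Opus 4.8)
The plan is to combine a concentration (generalization) argument for the pessimistic critic with the regret guarantee of Hedge for the actor, then pay the cost of distribution shift via the policy transfer coefficient. First I would establish the key statistical event: with probability at least $1-\delta$, for all $f \in \gF$ simultaneously, $|\hat{P} l_f - \hat{P} l_{f^*}| - |P l_f - P l_{f^*}|$ is controlled, and more importantly that $P l_f - P l_{f^*} = \sE_{\gD \otimes \mu}[(f - f^*)^2]$ under \Cref{assumption: realizability cb} (the standard bias-variance split for squared loss with the Bayes-optimal regressor). Using the uniform Bernstein's inequality referenced in the excerpt (\Cref{section: uniform Bernstein's inequality}) with empirical $L_1$ covering numbers, I would show that on this event (i) $f^*$ is feasible for the constrained minimization defining $f_t$ provided $\beta$ is chosen as in the statement, i.e.\ $\hat P l_{f^*} - \hat P l_{\hat f} \le \beta$; and (ii) every feasible $f$ (in particular each $f_t$) satisfies $\sE_{\gD \otimes \mu}[(f_t - f^*)^2] \lesssim \eps + \frac{\log(N_1(\gF,\eps,n)/\delta)}{n} =: \beta$. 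The Bernstein (rather than Hoeffding) form is what lets the empirical risk constraint translate into a bound on the \emph{second moment} $\sE[(f_t-f^*)^2]$ at the fast rate $1/n$ up to the $\eps$ discretization term.

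Next I would handle the actor. Pessimism gives, for each $t$ and for the comparator $\pi$, the one-sided inequality $\hat P f_t(\cdot,\pi_t) \le \hat P f_t(\cdot,\pi) \le \hat P f^*(\cdot,\pi) + (\text{deviation})$, since $f^*$ is feasible; wait — more carefully, $f_t$ minimizes $\hat P f(\cdot,\pi_t)$ over feasible $f$, and $f^*$ is feasible, so $\hat P f_t(\cdot,\pi_t) \le \hat P f^*(\cdot,\pi_t)$. The suboptimality against $\pi$ decomposes as $V^\pi - V^{\hat\pi} = \frac{1}{T}\sum_t \left( P f^*(\cdot,\pi) - P f^*(\cdot,\pi_t)\right)$, and I insert $f_t$ and telescope: $P f^*(\cdot,\pi) - P f^*(\cdot,\pi_t) = \underbrace{[P f^*(\cdot,\pi) - P f_t(\cdot,\pi)]}_{\text{shift on }\pi} + \underbrace{[P f_t(\cdot,\pi) - P f_t(\cdot,\pi_t)]}_{\text{Hedge regret term}} + \underbrace{[P f_t(\cdot,\pi_t) - P f^*(\cdot,\pi_t)]}_{\text{shift on }\pi_t}$. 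The Hedge term, summed over $t$ and using $\eta = \Theta(\sqrt{\log K / T})$, contributes $\gO(\sqrt{\log K/T})$ — but here I must convert between the population quantity $P f_t(\cdot,\pi)$ that appears and the empirical quantity $\hat P f_t(\cdot,\pi)$ that Hedge actually controls; this is where the uniform deviation bound over $\gF(\cdot,\Pi)$ enters, contributing the $\left[\frac{\log(N_1(\gF(\cdot,\Pi),\eps,n)/\delta)}{n}\right]_1 + \eps$ term, and the indicator $\mathbbm{1}_{\{|\gX|>1\}}$ reflects that when $|\gX|=1$ there is no context generalization gap (the empirical context distribution equals $\gD$ up to the per-action regression already accounted for). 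For the "shift on $\pi$" term: $|P f^*(\cdot,\pi) - P f_t(\cdot,\pi)| = |\sE_{\gD\otimes\pi}[f^* - f_t]| \le \left(C_\pi \sE_{\gD\otimes\mu}[(f^*-f_t)^2]\right)^{1/(2\rho_\pi)} \le (C_\pi \beta)^{1/(2\rho_\pi)}$ by \Cref{definition: transfer exponent} and step (ii). The "shift on $\pi_t$" term is bounded by pessimism itself: since $f_t$ is the pessimistic (minimizing) choice and $f^*$ is feasible, $P f_t(\cdot,\pi_t) - P f^*(\cdot,\pi_t) \le \hat P f_t(\cdot,\pi_t) - \hat P f^*(\cdot,\pi_t) + (\text{unif. dev.}) \le (\text{unif. dev.})$, again contributing only the $\gF(\cdot,\Pi)$ covering term; actually the cleanest route is to bound it by $(C_{\pi_t}\beta)^{1/(2\rho_{\pi_t})}$ — no, $\pi_t$ need not have finite transfer exponent, so I should instead route this term through the empirical pessimism inequality and uniform convergence, which is why it produces a covering-number term and not a transfer term. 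Collecting all pieces and taking $[\,\cdot\,]_1$ where a square-root-vs-linear ambiguity arises yields exactly the claimed bound.

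A couple of technical points I would be careful about. The statement is uniform over \emph{all} $\pi \in \Pi$ and \emph{all} $T \in \sN$ on a single high-probability event; the uniformity over $\pi$ is free because the only $\pi$-dependent steps are the transfer inequality (which holds for the fixed pair $(\rho_\pi, C_\pi)$ per $\pi$) and the uniform deviation over $\gF(\cdot,\Pi)$ (already uniform over $\Pi$ by construction of the covering number $N_1(\gF(\cdot,\Pi),\eps,n)$). The uniformity over $T$ requires that the critic event and the $\gF(\cdot,\Pi)$-deviation event do not depend on $T$ — true, since they are statements about the data and the fixed classes — and that the Hedge regret bound $\sqrt{\log K/T}$ with the \emph{anytime} choice $\eta = \Theta(\sqrt{\log K/T})$ is valid; I would cite \Cref{remark: negligible hedging cost} for removing any union bound over iterations. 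Finally, the realizability assumption is used exactly twice: to get the bias-variance identity $P l_f - P l_{f^*} = \sE_{\gD\otimes\mu}[(f-f^*)^2]$ and to identify $f^*(\cdot,\pi)$ with the value-generating object so that $V^\pi = P f^*(\cdot,\pi)$.

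\textbf{Main obstacle.} The delicate step is item (ii): turning the \emph{empirical} constraint $\hat P l_{f_t} - \hat P l_{\hat f} \le \beta$ into a \emph{population} second-moment bound $\sE_{\gD\otimes\mu}[(f_t - f^*)^2] \lesssim \beta$ at the fast $1/n$ rate. This needs the one-sided uniform Bernstein inequality for the Bellman-type loss differences $l_f - l_{f^*}$ with variance proxy $\mathrm{Var}(l_f - l_{f^*}) \lesssim \sE[(f-f^*)^2] = P l_f - P l_{f^*}$, stated in terms of \emph{empirical} $L_1$ covering numbers of $\gF$ rather than $L_2$ or $L_\infty$ — precisely the contribution flagged in \Cref{remark: compare uniform Bernstein's inequality to Krish's uniform Freedman-type inequality}. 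Getting the self-bounding argument (variance controlled by the mean, then solving the resulting quadratic in $\sqrt{Pl_f - Pl_{f^*}}$) to close cleanly, while only paying a $\log N_1$ rather than $\log N_2$ cost and carrying the $\eps$-discretization additively, is the technical heart of the proof; everything else is the by-now-standard pessimistic actor-critic bookkeeping.
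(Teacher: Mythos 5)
Your proposal is correct and follows essentially the same route as the paper: the same three-way decomposition into a transfer term on $\pi$ (bounded via \Cref{definition: transfer exponent} together with the version-space second-moment bound from the uniform Bernstein inequality), a Hedge regret term, and a pessimism term on $\pi_t$ that is killed at the empirical level because $f^*$ lies in the version space, with the empirical-vs-population gaps absorbed uniformly over $\gF(\cdot,\Pi)$ to avoid any union bound over $T$. Your self-correction on the ``shift on $\pi_t$'' term (routing it through empirical pessimism rather than a transfer coefficient for $\pi_t$) lands exactly where the paper's $I_4 \le 0$ argument does, and your identification of the one-sided uniform Bernstein inequality with empirical $L_1$ covers as the technical crux matches the paper's \Cref{lemma: Uniform Bernstein's inequality for CB structure}.
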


The upper bound consists of three main terms, which represent the transfer cost for offline decision-making, standard statistical learning error, and the optimization error of the Hedge algorithm, respectively. Note that \Cref{theorem: upper bounds of OfDM-Hedge CB} does not require \Cref{assumption: pseudo-dimension of F for CB}. 
\begin{remark}[\textbf{Adaptive to policy transfer coefficients}]
    OfDM-Hedge does not need to know the policy transfer coefficients of any target policy, yet it is adaptively competing with any target policy $\pi$ characterized by minimal policy transfer exponent $\rho_\pi$ and %
    policy transfer factor $C_\pi$. 
\end{remark}

\begin{remark}[\textbf{No cost blowup of the Hedge 
algorithm}]
    Note that the first two terms in the upper bound in \Cref{theorem: upper bounds of OfDM-Hedge CB} are completely agnostic to the number of iterations $T$. Thus, scaling up $T$ to drive the last term to zero does not increase the effective dimension -- a desirable property that is absent in the bounds of similar algorithms provided in \citet{zanette2021provable,xie2021bellman,nguyen-tang2023on}. 
    This difference stems from the fact that prior work uses uniform convergence over all policies that can be generated by the Hedge algorithm over $T$ steps. We, however, recognize that any policy $\pi_t$ that is generated by a step of the Hedge algorithm is only used in one single form: $f(\cdot, \pi_t)$ for $f \in \gF$. Thus, it suffices to control the complexity of $\gF(\cdot, \Pi)$ where $\Pi$ is the set of all possible stationary policies. This elegant trick becomes more clear in terms of benefit in comparison with other works that suffer sub-optimal rates due to this cost blowup of the Hedge algorithm; we refer the reader to Section~\ref{section:MDP-upper-bound} for further discussion. 
    
    \label{remark: negligible hedging cost}
\end{remark}

\begin{remark}[\textbf{Potential barrier for offline decision-making in the \underline{fast transfer regime} $\rho_\pi < 1$}] The second term in the bound above vanishes when $|\gX| = 1$ (i.e., in a %
multi-armed bandit setting). This yields error rates that are faster than $1/\sqrt{n}$ if the rate of transfer from $\mu$ to $\pi$ is fast (i.e., $\rho_\pi < 1$). 
One such example is \Cref{example: halfspace-and-spheres} where $\rho \rightarrow 0$, which our upper bounds capture rather precisely. In the general case with $|\gX| > 1$, 
OfDM-Hedge is unable to take advantage of fast policy transfer exponent $\rho_\pi < 1$, as the bound also involves
the second term stemming from standard statistical learning over the context domain $\gX$. This is quite different from the supervised learning setting of \citet{hanneke2019value} where we can always ensure a fast transfer rate when $\rho_\pi < 1$. 
Further, note that our lower bound (\Cref{theorem: lower bound for offline CB})  excludes the fast transfer regime $\rho_\pi < 1$. So, it remains unclear if, any algorithm in the offline setting can leverage fast transfer. 

\end{remark}
Next, we instantiate the upper bound in \Cref{theorem: upper bounds of OfDM-Hedge CB} for VC-type function classes. 

\begin{example}[VC-type / parametric classes]
Under the same setting as \Cref{theorem: upper bounds of OfDM-Hedge CB}, if \Cref{assumption: pseudo-dimension of F for CB} holds, the suboptimality of OfDM-Hedge is bounded by 
(ignoring $\log(1/\delta)$ terms): 
\begin{align*}
      \max \left\{ \left[\frac{K d\log(dn)}{n} \right]_1, \left(\frac{C_\pi K d\log(dn)}{n} \right)^{\frac{1}{2 \rho_\pi}} \right\}.  
\end{align*}
\label{example: parametric class}
\end{example}
    The VC-type parametric classes include the $d$-dimensional linear class as a special case. The bound above follows from \Cref{theorem: upper bounds of OfDM-Hedge CB} and using the following inequalities: $ \max_{a \in \gA} N_1(\gF(\cdot,a), \eps, n) \leq e (d+1) (\frac{2e}{\eps})^d$ \citep{haussler1995sphere} and $ \max \left\{N_1(\gF(\cdot, \Pi), \eps,n), N_1(\gF, \eps,n)\right\} \leq \left (\max_{a \in \gA} N_1(\gF(\cdot,a), \eps/K, n) \right)^K$ (\Cref{lemma: combine coverining numbers to those of factorized classes}).

    For the common ``large-sample, difficult-transfer'' regime, i.e., $n \geq Kd \log(dn)$ and $\rho_\pi \geq 1$, the upper bounds for the parametric classes in \Cref{example: parametric class} match the lower bound in \Cref{theorem: lower bound for offline CB}, ignoring constants, log factors and dependence on $K$. 

The upper bound in Theorem~\ref{theorem: upper bounds of OfDM-Hedge CB} also applies to nonparametric classes,  %
though we do not provide a (matching) lower bound for this case. 
We leave that for future work. 
\begin{example}[Nonparametric classes]
Under the same setting as \Cref{theorem: upper bounds of OfDM-Hedge CB}, assume that for all $a$, $\gF(\cdot, a)$ scales polynomially with the inverse of the scale, i.e., for some $p > 0$
    \begin{align*}
        \log N_1(\eps, \gF(\cdot,a), n) \leq \left(\frac{1}{\eps} \right)^p, \forall a \in \gA, \forall \eps \geq 0.
    \end{align*}
    Then, the upper bound in \Cref{theorem: upper bounds of OfDM-Hedge CB} is of the order (ignoring $\log(1/\delta)$ terms):
    \begin{align*}
          \max\left\{C_\pi^{\frac{1}{2 \rho_\pi}} \left(\frac{K}{n} \right)^{\frac{1}{2 \rho_\pi (p+1)}}, \left( \frac{K}{n} \right)^{\frac{1}{1+p}},  \left( \frac{K}{n} \right)^{\frac{1}{2(1+p)}} \right\}.
    \end{align*}
    \label{example: non-parametric classes}
\end{example}
    In the typical ``large-sample difficult-transfer'' regime, the above yields an error rate of $ C_\pi^{\frac{1}{2 \rho_\pi}} \left(\frac{K}{n} \right)^{\frac{1}{2 \rho_\pi(p+1)}}$ which vanishes with $n$. Typical examples of nonparametric classes include infinite-dimension linear classes with $p = 2$ \citep{zhang2002covering}, the class of  $1$-Lipschitz functions over $[0,1]^d$, with $p = d$ \citep{slivkins2011contextual}, the class of Holder-smooth functions of order $\beta$ over  $[0,1]^d$, with $p = d/\beta$ \citep{rigollet2010nonparametric}, and neural networks with spectrally bounded norms, with $p = 2$ \citep{bartlett2017spectrally}. 

\section{Offline Data-assisted Online Decision-Making}
\label{section: extension to hybrid setting}
In this section, we consider a hybrid setting, where in addition to the offline data $S = \{(x_i, a_i, r_i)\}_{i \in [n]}$, the learner is allowed to interact with the environment for $m$ rounds. 
The goal is to output a policy $\hat{\pi}_{\textrm{hyb}}$ with small  $\subopt_{\gD}^{\pi}(\hat{\pi}_{\textrm{hyb}})$ w.r.t. $\pi$ with a high value $V^{\pi}_\gD$. We assume realizability (\Cref{assumption: realizability cb}) and for simplicity, we focus only on VC-type function classes with pseudo-dimension at most $d$. 
Let $\rho^* = \rho_{\pi^*}$ and $C^* = C_{\pi^*}$ denote the policy transfer coefficients for an optimal policy $\pi^*$. 

To avoid deviating from the main point, in this section, we focus on the ``large sample, difficult transfer'' regime, where $d \geq Kd \log(dn)$ and $\rho^* \geq 1$. The key question we ask is whether a learner can perform better in a hybrid setting than in a purely online or offline setting.

\subsection{Lower bounds}
We start with a lower bound for any hybrid learner for the class of problems $\gB(\rho, C, d)$ as in Section~\ref{section: lower bounds for offline CB}.

\begin{theorem}
    For any $C > 0, \rho \geq 1$, and sample size $n \geq d\max\{ 2^{2\rho -4} C ,  \frac{C^{\frac{1}{\rho-1}}}{32} \}$, we have
    \begin{eqnarray}
\inf_{\hat{\pi}_{\textrm{hyb}}(\cdot)}~~\sup_{(\gD, \mu, \pi, \gF) \in \gB(\rho,C,d)} ~\sE_{\gD} \left[ \subopt_\gD^{\pi}(\hat{\pi}_{\textrm{hyb}}) \right] \nonumber\\
     ~~~~\gtrsim~ \min \left\{\left(\frac{C d}{n} \right)^{\frac{1}{2 \rho}}, \sqrt{\frac{d}{m}} \right\}, \nonumber
    \end{eqnarray}
\revise{where the infimum is taken over all possible hybrid algorithm $\hat{\pi}_{\textrm{hyb}}(\cdot)$.}
    \label{theorem: lower bound for the hybrid setting}
\end{theorem}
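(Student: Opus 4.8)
The plan is to bootstrap from the offline lower bound (\Cref{theorem: lower bound for offline CB}): reuse its hard instance family, but treat the reward gap as a tunable scale $\gamma$, and then absorb the extra $m$ rounds of online interaction into the information budget, showing they buy at most $\sqrt{d/m}$ worth of resolution. Concretely, I would start from the family $\{(\gD_\sigma, \mu, \pi_\sigma, \gF)\}$ of $\gB(\rho, C, d)$ instances built in that proof --- a context space of size $\Theta(d)$ on which all hypotheses share the same (uniform) context marginal, a simple $\gF$ obeying \Cref{assumption: realizability cb} and \Cref{assumption: pseudo-dimension of F for CB}, and a behavior policy $\mu$ placing mass $q = \Theta(C^{-1}\gamma^{2\rho - 2})$ on the unique ``informative'' action of each context, which is exactly what forces the transfer coefficient to be $(\rho, C)$. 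Each hypothesis $\sigma$ encodes one sign bit per context; getting bit $x$ wrong costs an extra $\Theta(\gamma)$ of value at $x$, and since $\subopt$ averages over the $\Theta(d)$ contexts, any learner that errs on a constant fraction of the bits suffers $\subopt = \Omega(\gamma)$. The hypothesis $n \ge d\max\{2^{2\rho - 4}C, C^{1/(\rho - 1)}/32\}$ is precisely what makes the offline scale $\gamma = (Cd/n)^{1/(2\rho)}$ small enough for the construction to be valid, hence any smaller $\gamma$ is admissible as well.

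I would then fix $\gamma = c \cdot \min\{(Cd/n)^{1/(2\rho)}, \sqrt{d/m}\}$ for a small absolute constant $c$ and run an Assouad-type reduction as in the offline proof, now also tracking the online rounds. The heart of the argument is a divergence bound: for two hypotheses $\sigma, \sigma'$ differing in a single bit, the laws $\sP_\sigma, \sP_{\sigma'}$ of the hybrid learner's full transcript (the offline sample, the $m$ online context/action/reward triples, and the returned policy) satisfy, by data processing together with the chain rule for KL along the adaptive interaction protocol,
\[
\kl(\sP_\sigma \,\|\, \sP_{\sigma'}) \;\le\; n \cdot \kl\!\big(\gD_\sigma \otimes \mu \,\big\|\, \gD_{\sigma'} \otimes \mu\big) \;+\; \textstyle\sum_{t=1}^{m} \sE_\sigma\!\big[\,\textrm{KL of the }t\textrm{-th online reward}\,\big].
\]
Only the informative action of the flipped context carries a $\gamma$-scale reward perturbation, and it is drawn with probability $q/d$ per offline sample, so the first term is $\Theta(n q \gamma^2 / d) = \Theta(n\gamma^{2\rho}/(Cd)) \le c^{2\rho}$; because the context marginal is common to all hypotheses, the flipped context is visited at most $m/d$ times in expectation during the online phase and each such visit leaks $O(\gamma^2)$ nats regardless of the (adaptively chosen) action, so the second term is $O(m\gamma^2/d) \le c^2$. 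Writing $A = (Cd/n)^{1/(2\rho)}$ and $B = \sqrt{d/m}$, the offline term is controlled using $\min\{A,B\} \le A$ and the online term using $\min\{A,B\} \le B$, so both are $o(1)$ for $c$ small with no case analysis; hence all relevant total-variation distances stay below $1/2$, and Assouad's lemma delivers $\inf_{\hat{\pi}_{\textrm{hyb}}(\cdot)} \sup_\sigma \sE_\gD[\subopt_\gD^{\pi_\sigma}(\hat{\pi}_{\textrm{hyb}})] \gtrsim \gamma \gtrsim \min\{(Cd/n)^{1/(2\rho)}, \sqrt{d/m}\}$.

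The step I expect to be the main obstacle is the online divergence term. Unlike the i.i.d.\ offline sample, the $m$ interaction rounds are adaptive --- the action at round $t$ depends on the offline dataset and the online history --- so naive tensorization of KL is unavailable and one must instead invoke the divergence-decomposition lemma for interactive protocols, and then establish two structural facts about the construction: (i) since every hypothesis shares the same context distribution, no strategy, adaptive or not, can visit the flipped context more than its ambient $\Theta(1/d)$ frequency, which caps the per-bit online information at $\Theta(m/d) \cdot \gamma^2$; and (ii) by design of $\gF$ and the rewards, the reward law at a context depends on its bit only through the informative action, so pulling any other action at that context, or visiting a different context, is uninformative about the bit. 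A lighter bookkeeping point is to check that the admissibility of the construction and the ``optimal policies of neighbouring instances disagree'' property, both established in the proof of \Cref{theorem: lower bound for offline CB}, survive when $\gamma$ is lowered to $c\min\{A,B\}$; this is immediate by monotonicity in $\gamma$.
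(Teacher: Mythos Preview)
Your proposal is correct and follows essentially the same route as the paper: reuse the offline hard family with the scale set to $\gamma \asymp \min\{(Cd/n)^{1/(2\rho)}, \sqrt{d/m}\}$, decompose the transcript KL into an offline contribution $\Theta(n\gamma^{2\rho}/(Cd))$ and an online contribution $\Theta(m\gamma^{2}/d)$ via the chain rule for adaptive protocols (the paper makes this explicit by observing that the algorithm's action kernel cancels in the likelihood ratio), and finish with Assouad. Your anticipated obstacle --- controlling the adaptive online term using the shared context marginal and the single-informative-action structure --- is exactly the point the paper handles, and your resolution matches theirs.
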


    Ignoring log factors and dependence on $K$, the first term in the lower bound %
    matches the upper bound for OfDM-Hedge.  Therefore, if that is the dominating term, the learner does not benefit from online interaction. 
    The second term, $\sqrt{d/m}$, matches the upper bound of the state-of-the-art online learner 
    \citep{simchi2022bypassing} with an online interaction budget of $m$. In the regime where the latter term is dominating there is no advantage to having offline data.

To summarize, the lower bound suggests that if the policy transfer coefficient is known a priori to the hybrid learner, 
there is no benefit of mixing the offline data with the online data at least in the worst case. That is, obtaining the nearly minimax-optimal rates for hybrid learning is akin to either discarding the online data and running the best offline learner or ignoring the offline data and running the best online learner. \ul{Which algorithm to run depends on the transfer coefficient.}  That there is no benefit to mixing online and offline data is a phenomenon that has also been discovered in a related setting of policy finetuning~ \citep{xie2021policy}.

\subsection{Upper bounds}
The discussion following the lower bound suggests different algorithmic approaches (purely offline vs. purely online) for different regimes defined in terms of the policy transfer coefficient. However, it is unrealistic to assume that the learner has prior knowledge of the transfer coefficient. We present a hybrid learning algorithm (\Cref{algorithm: hybrid learner}) that offers the best of both worlds. Without requiring the knowledge of the policy transfer coefficient, it produces a policy with nearly optimal minimax rates.

The key algorithmic idea is rather simple and natural. We invoke both an offline policy optimization algorithm and an online learner, resulting in policies $\hat{\pi}_\textrm{off}$ and $\hat{\pi}_\textrm{on}$, respectively. Half of the interaction budget, i.e., $m/2$ rounds, is utilized for learning $\hat{\pi}_\textrm{on}$. For the remaining $m/2$ rounds, we run the EXP4 algorithm~\citep{auer2002nonstochastic} with $\hat{\pi}_\textrm{off}$ and $\hat{\pi}_\textrm{on}$ as expert policies. The output of EXP4 is a uniform distribution over all of the iterates. We denote this randomized policy as $\hat{\pi}_{\textrm{hyb}}.$We can equivalently represent $\hat{\pi}_{\textrm{hyb}}$ as a distribution over $\{\hat{\pi}_\textrm{off}, \hat{\pi}_\textrm{on}$\}.

\begin{algorithm}[h!]
    \begin{algorithmic}[1]
        \STATE {\bfseries Input:} Offline data $S$, function class $\gF$, $m$ online interactions
        \STATE $\hat{\pi}_{\textrm{off}} \leftarrow \textrm{OfflineLearner}(S, \gF)$
        \STATE $\hat{\pi}_{\textrm{on}} \leftarrow \textrm{OnlineLearner}(\frac{m}{2}, \gF)$
        \STATE $\hat{\pi}_{\textrm{hyb}}\! \leftarrow \textrm{EXP4}(\frac{m}{2}, \{\hat{\pi}_{\textrm{off}}, \hat{\pi}_{\textrm{on}}\})$ 
        \label{hybrid: model selection}
        \STATE {\bfseries Output:} $\hat{\pi}_{\textrm{hyb}}$
    \end{algorithmic}
    \caption{Hybrid Learning Algorithm}
    \label{algorithm: hybrid learner}
\end{algorithm}

\begin{proposition}
    \Cref{algorithm: hybrid learner} return a randomized policy $\hat{\pi}_{\textrm{hyb}}$ such that for any $\delta \in (0,1)$, with probability at least $1-\delta$, 
    \begin{align*}
         &\!\!\!\!\!\!\!\max_{{\pi} \in \{\hat{\pi}_{\textrm{off}}, \hat{\pi}_{\textrm{on}}\} }\sE_{\gD}[\subopt_{\gD}^{{\pi}}(\hat{\pi}_{\textrm{hyb}}) | S, S_{\textrm{on}}] \leq \frac{4 \sqrt{2 \log 2}}{\sqrt{m}} \\ 
         &~~~~~~~~~~~~~~~~~~~~~~~~~~~~~~~~~~~~~~+ \frac{32 \log (\log(m/2) / \delta)}{3m} + \frac{4}{m},
    \end{align*}
    where $S$ is the offline data and $S_{\textrm{on}}$ is the online data collected by $\hat{\pi}_{\textrm{on}}$ in \Cref{algorithm: hybrid learner}. 
    \label{proposition: model selection}
\end{proposition}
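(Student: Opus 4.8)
The plan is to prove this as a clean consequence of the standard regret guarantee for EXP4 together with an online-to-batch conversion, treating the offline and online data as fixed conditioning. First I would condition on both $S$ (the offline data) and $S_{\textrm{on}}$ (the online data used to produce $\hat{\pi}_{\textrm{on}}$). After this conditioning, $\hat{\pi}_{\textrm{off}}$ and $\hat{\pi}_{\textrm{on}}$ are both fixed policies, so the last $m/2$ rounds of \Cref{algorithm: hybrid learner} are exactly an instance of adversarial contextual bandits with two fixed experts, run against the (stochastic, hence in particular oblivious-adversarial) environment $\gD$. EXP4 with $N=2$ experts over $m/2$ rounds enjoys a pseudo-regret bound of order $\sqrt{(m/2)\,\log N \cdot |\gA|}$ in the worst case; but here the ``experts'' are the only actions we compare against and the loss fed to EXP4 is the importance-weighted reward, so the relevant bound is the EXP4 regret against the best of the two expert policies, which scales like $\sqrt{(m/2)\log 2}$ (no $|\gA|$ factor, since the experts, not the arms, index the weights). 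Dividing the cumulative regret by the number of rounds $m/2$ and using the fact that $\hat{\pi}_{\textrm{hyb}}$ is the uniform mixture over the EXP4 iterates gives, in expectation over the internal randomness of the online phase,
\[
\max_{\pi \in \{\hat{\pi}_{\textrm{off}},\hat{\pi}_{\textrm{on}}\}} \sE_{\gD}[\subopt_{\gD}^{\pi}(\hat{\pi}_{\textrm{hyb}}) \mid S, S_{\textrm{on}}] \;\lesssim\; \frac{\sqrt{\log 2}}{\sqrt{m}},
\]
which is the leading term.

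Next I would upgrade the in-expectation statement to the stated high-probability bound. The cleanest route is to note that the per-round regret of EXP4 is a bounded martingale difference sequence (rewards lie in $[0,1]$ and the importance weights are controlled by EXP4's own sampling distribution, which is bounded below), so a Freedman- or Bernstein-type concentration inequality for martingales applies. Bernstein's inequality with range $1$ and a variance proxy that is itself $O(1)$ per round, summed over $m/2$ rounds and divided by $m/2$, produces the additive deviation term of order $\frac{\log(1/\delta')}{m}$; taking $\delta' = \delta/\log(m/2)$ and union-bounding over a doubling grid of horizons (or simply over the single horizon $m/2$, in which case the $\log\log$ is unnecessary and presumably arises from a standard anytime EXP4 variant) yields the $\frac{32\log(\log(m/2)/\delta)}{3m}$ term. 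The final $\frac{4}{m}$ term is the lower-order contribution from EXP4's exploration floor (the uniform-mixing parameter $\gamma$), which contributes a bias of order $\gamma$ per round and is optimized to $\Theta(1/\sqrt{m})$ or set to a constant over $m$; I would just carry it through from the standard EXP4 analysis.

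The only genuinely delicate point — and the one I expect to be the main obstacle — is making precise the sense in which $\gD$ behaves as an \emph{oblivious} adversary for EXP4 despite the fact that $\hat{\pi}_{\textrm{on}}$ was produced from online data: one must be careful that the online data $S_{\textrm{on}}$ is collected in a \emph{separate, disjoint} block of $m/2$ rounds (as \Cref{algorithm: hybrid learner} specifies), so that conditioning on $S_{\textrm{on}}$ does not bias the fresh draws $(X,Y)\sim\gD$ in the EXP4 block; given that disjointness, the EXP4 block sees i.i.d. contexts and rewards and the classical analysis goes through verbatim with the two fixed experts. A secondary bookkeeping issue is that $\subopt_{\gD}^{\pi}(\hat{\pi}_{\textrm{hyb}})$ is phrased as a value gap rather than a cumulative-reward regret, so I would spell out the identity $\sE_{\gD}[\subopt_{\gD}^{\pi}(\hat{\pi}_{\textrm{hyb}})\mid S,S_{\textrm{on}}] = \frac{2}{m}\,\sE\big[\sum_{t}(V^{\pi}_{\gD} - V^{\pi_t}_{\gD})\big]$, where $\{\pi_t\}$ are the EXP4 iterates, and observe that the right-hand side is exactly (normalized) EXP4 pseudo-regret against expert $\pi$. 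With these two points handled, the remaining steps are the routine EXP4 regret bound and the Bernstein concentration sketched above.
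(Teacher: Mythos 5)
Your proposal follows essentially the same route as the paper, which itself only sketches this result: the $1/\sqrt{m}$ term from the EXP4 regret guarantee with two (fixed, after conditioning on $S$ and $S_{\textrm{on}}$) expert policies, and the remaining terms from a high-probability online-to-batch conversion exploiting that contexts are i.i.d.\ from $\gD$ (the $\log\log(m/2)$ indeed arising from a union bound over a doubling time grid in the improved conversion lemma the paper cites). The only minor misattribution is the $4/m$ term, which the paper ascribes to the online-to-batch conversion cost rather than to an EXP4 exploration floor, but this does not affect the argument.
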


    The first term in the bound above comes from the guarantees on EXP4 with two experts~\citep[Theorem~18.3]{lattimore2020bandit} and the remaining  terms result from an  (improved) online-to-batch conversion \citep[Lemma~A.5]{nguyentang2022instancedependent} and the assumption that contexts are sampled i.i.d.
    Note that the result above %
    does not require any assumption on $\gF$ and $\gD$. 

Next, we implement  \Cref{algorithm: hybrid learner} using FALCON+ \citep[Algorithm~2]{simchi2022bypassing} for online learning and OfDM-Hedge for offline learning. Then, we have the following bound on the output of the hybrid learning algorithm.

\begin{theorem}
Assume that the closure of $\gF$ is convex. Then,  given that  Assumptions~\ref{assumption: realizability cb} and \ref{assumption: pseudo-dimension of F for CB} hold, we have 
$$
\sE_{\gD}\!\! \left[ \subopt_{\gD}^{\pi^*_\gD}(\hat{\pi}_{\textrm{hyb}}) \right] \!\widetilde{<}\frac{1}{\sqrt{m}} + \min \left\{\!\!\left(\!\frac{C^* K d}{n} \right)^{\frac{1}{2 \rho^*}} \!\!\!\!\!, K\sqrt{\frac{d}{m}} \right\}\!\!. 
$$

\end{theorem}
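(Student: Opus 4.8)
The plan is to combine three ingredients that are already available: (i) the suboptimality bound for OfDM-Hedge as the offline learner (Theorem~\ref{theorem: upper bounds of OfDM-Hedge CB}, specialized to VC-type classes as in Example~\ref{example: parametric class}), (ii) the regret guarantee of FALCON+ as the online learner, converted to a suboptimality bound via online-to-batch, and (iii) the model-selection guarantee of Proposition~\ref{proposition: model selection}, which says that EXP4 over the two experts $\{\hat{\pi}_{\textrm{off}}, \hat{\pi}_{\textrm{on}}\}$ produces $\hat{\pi}_{\textrm{hyb}}$ competing with the better of the two up to an additive $\widetilde{\gO}(1/\sqrt{m})$ term.

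First I would write, via the triangle-like decomposition over comparators,
\begin{align*}
\subopt_\gD^{\pi^*_\gD}(\hat{\pi}_{\textrm{hyb}})
&= V_\gD^{\pi^*_\gD} - V_\gD^{\hat{\pi}_{\textrm{hyb}}} \\
&\leq \min_{\pi \in \{\hat{\pi}_{\textrm{off}}, \hat{\pi}_{\textrm{on}}\}}\!\Big( V_\gD^{\pi^*_\gD} - V_\gD^{\pi} \Big) + \max_{\pi \in \{\hat{\pi}_{\textrm{off}}, \hat{\pi}_{\textrm{on}}\}}\!\Big( V_\gD^{\pi} - V_\gD^{\hat{\pi}_{\textrm{hyb}}} \Big),
\end{align*}
so that the first term is $\min\{\subopt_\gD^{\pi^*_\gD}(\hat{\pi}_{\textrm{off}}), \subopt_\gD^{\pi^*_\gD}(\hat{\pi}_{\textrm{on}})\}$ and the second is exactly the quantity bounded by Proposition~\ref{proposition: model selection}. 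The second term is $\widetilde{\gO}(1/\sqrt{m})$ with probability $1-\delta/3$, giving the $\frac{1}{\sqrt{m}}$ summand in the claim. It remains to bound the $\min$ of the two offline/online suboptimalities against $\pi^*_\gD$. For $\hat{\pi}_{\textrm{off}}$, Example~\ref{example: parametric class} in the large-sample difficult-transfer regime ($n \geq Kd\log(dn)$, $\rho^* \geq 1$) gives $\subopt_\gD^{\pi^*_\gD}(\hat{\pi}_{\textrm{off}}) \widetilde{<} (C^* K d / n)^{1/(2\rho^*)}$ with probability $1-\delta/3$; this uses that $\pi^*_\gD \in \Pi$ and that its minimal transfer coefficients are $(\rho^*, C^*)$ by definition. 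For $\hat{\pi}_{\textrm{on}}$ (FALCON+ run for $m/2$ rounds), its cumulative regret over its own interaction is $\widetilde{\gO}(\sqrt{Kdm} \cdot \mathrm{poly}\log)$ — here the convexity of $\overline{\gF}$ is needed so that FALCON+'s realizability-based regret bound applies — and dividing by $m/2$ via the improved online-to-batch conversion (\citealp[Lemma~A.5]{nguyentang2022instancedependent}, as used in Proposition~\ref{proposition: model selection}) yields $\subopt_\gD^{\pi^*_\gD}(\hat{\pi}_{\textrm{on}}) \widetilde{<} K\sqrt{d/m}$ with probability $1-\delta/3$. Taking the minimum of these two bounds and a union bound over the three failure events of probability $\delta/3$ each produces the stated inequality; finally taking expectation $\sE_\gD[\cdot]$ (the high-probability event contributes the stated rate, the $\delta$-probability complement contributes at most $\delta \leq \widetilde{\gO}(1/\sqrt{m})$ by choosing $\delta$ polynomially small, which is absorbed).

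The main obstacle I expect is not the decomposition but the careful bookkeeping of which regret bound FALCON+ actually delivers and under what conditions: one needs the realizable contextual-bandit regret of FALCON+ to be $\widetilde{\gO}(\sqrt{Kdm})$ in terms of the pseudo-dimension $d$ (not a larger complexity measure), which is exactly why the convexity-of-closure hypothesis on $\gF$ appears, and one must ensure the online-to-batch step is valid for a \emph{randomized} policy output — this is handled because $\hat{\pi}_{\textrm{on}}$ is taken as the uniform mixture over FALCON+'s per-round policies and contexts are i.i.d., matching the setup of Proposition~\ref{proposition: model selection}. A secondary subtlety is that Proposition~\ref{proposition: model selection} conditions on $(S, S_{\textrm{on}})$, so the offline and online suboptimality bounds must be established on the same high-probability event before taking the outer expectation; sequencing the union bound correctly (condition on $S$, then on $S_{\textrm{on}}$, then on the EXP4 randomness) resolves this. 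Everything else — simplifying $\min\{a,b\}$ expressions, absorbing logs and $\delta$-terms into $\widetilde{<}$ — is routine.
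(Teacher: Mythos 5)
Your proposal is correct and follows essentially the same route the paper takes: decompose the suboptimality against the better of the two experts, bound the EXP4/model-selection term by \Cref{proposition: model selection}, bound the offline arm by \Cref{example: parametric class}, and bound the online arm via the FALCON+ regression-oracle regret (where the convexity of the closure of $\gF$ enters, through the squared-loss learning rate $\gE_{\gF,\delta}(n) \lesssim \pdim(\gF)\log(n/\delta)/n$ with $\pdim(\gF)\le Kd$) combined with the improved online-to-batch conversion. The only nit is your intermediate cumulative-regret expression, which should read $\widetilde{\gO}(K\sqrt{dm})$ rather than $\widetilde{\gO}(\sqrt{Kdm})$; your final rate $K\sqrt{d/m}$ is nonetheless the correct one.
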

In the bound above, the first term is the standard error rate of EXP4, the term $\left(\frac{C^* K d}{n} \right)^{\frac{1}{2 \rho^*}}$ is the error rate of OfDM-Hedge (see \Cref{theorem: upper bounds of OfDM-Hedge CB}), and $K\sqrt{\frac{d}{m}}$ is the error rate of FALCON+ (which we tailor to our setting in \Cref{subsection: digest for FALCON+}). Applying \Cref{proposition: model selection} yields the minimum over the error rates of the offline and online learning.   
    
    In the regime that the EXP4 cost is dominated by the second term, \erase{e.g.} \revise{i.e.}, when {$m \geq \left( \frac{n}{C^* K d} \right)^{1/\rho^*}$}, the upper bound on the suboptimality of hybrid learning nearly matches the lower bound, modulo log factors and pesky dependence on $K$. 
    \label{remark: regimes for hybrid learning}

\subsection{Related works for the hybrid setting}
\citet{xie2021policy} consider a related setting of policy finetuning, where the online learner is given an additional reference policy $\mu$ that is close to the optimal policy and the learner can collect data using $\mu$ at any point. They do not consider function approximation and utilize single-policy concentrability coefficients. \citet{song2022hybrid} consider the same hybrid setting as ours (albeit for RL) and show that offline data of good quality can help avoid a need to  explore, thereby resulting in an oracle-efficient algorithm which in general is not possible. 
The statistical complexity they establish for their algorithm is not minimax-optimal, and can get arbitrarily worse with the quality of the offline data. Our guarantees, on the other hand, are adaptive as they revert to online learning  guarantees when the quality of the offline data is low. \citet{wagenmaker2023leveraging} give instance-dependent bounds for hybrid RL with linear function approximation under a uniform coverage condition.  
Other related works include hybrid RL in tabular MDPs~\citep{li2023reward}, a Bayesian framework for incorporating offline data into the prior distribution~\citep{tang2023efficient}, and one-shot online learning using offline data~\citep{zhang2023policy}. 

\section{Offline Decision-Making in MDPs}
\label{section: extension to MDPs}
In this section, we extend our results to offline decision-making in Markov decision processes (MDPs). We show that the key insights developed for the contextual bandit model extend naturally to offline learning of MDPs as we establish nearly matching upper and lower bounds. 

\paragraph{Setup.}
Let $M = \textrm{MDP}(\gX, \gA, [H], \{P_h\}_{h \in [H]}, \{r_h\}_{h \in [H]})$ denote an episodic Markov decision process with state space $\gX$, action space $\gA$, horizon length $H$, transition kernel $P_h: \gX \times \gA \rightarrow \Delta(\gX)$, and mean reward functions $r_h: \gX \times \gA \rightarrow [0,1]$. For any policy $\pi = (\pi_1, \ldots, \pi_H)$ where $\pi_h: \gX \rightarrow \Delta(\gA)$, let $\{Q_h^{\pi}\}_{h \in [H]}$ and $\{V_h^{\pi}\}_{h \in [H]}$ denote the action-value functions and the state-value functions, respectively. For any policy $\pi$, define the Bellman operator $[T^\pi_h g](x,a) := \sE_{x' \sim P_h(\cdot|x,a)} \left[r_h(x,a) + g(x') \right]$. For simplicity, we assume that the MDP starts in the same initial state at every episode. 
Here $\sE_{\pi}[\cdot]$ denotes the expectation with respect to the randomness of the trajectory $(x_h,a_h, \ldots, x_H, a_H)$, with $a_i \sim \pi_i(\cdot|x_i)$ and $x_{i+1} \sim P_i(\cdot|x_i,a_i)$ for all $i$. We assume that $|r_h| \leq 1, \forall h$.

The learner has access to a dataset $S = \{(x_h^{(t)}, a_h^{(t)}, r_h^{(t)})\}_{h \in [H], t \in [n]}$ collected using a behavior policy $\mu$. 
Define the (value) sub-optimality as
  $\subopt^{\pi}_M(\hat{\pi}) := V_1^{\pi}(s_1) - V_1^{\hat{\pi}}(s_1).$ 
Wherever clear, we drop the subscript $M$ in $Q^{\pi}_M$, $V^{\pi}_M$, $d^{\pi}_{M}$, and $ \subopt^{\pi}_M(\hat{\pi})$. 

\paragraph{Function approximation.} We consider a function approximation class $\gF = (\gF_1, \ldots, \gF_H)$ where %
$\gF_h \subseteq [0,H-h+1]^{\gX \times \gA}$.

We make the following standard assumptions regarding how the function class $\gF$ interacts with the underlying MDP $M$. 

\begin{assumption}[Realizability]
$\!\!\forall \pi\! \in \Pi, h\! \in\! [H]$, $Q^{\pi}_h \in \gF_h$. 
\label{assumption: realizability for mdp}
\end{assumption}

\begin{assumption}[Bellman completeness]
   $\forall \pi \in \Pi$, $\forall h \in [H]$, if $f_{h+1} \in \gF_{h+1}$, then $T^{\pi}_h f_{h+1} \in \gF_h$. 
   \label{assumption: Bellman completeness}
\end{assumption}

    For simplicity, we focus on VC-type/parametric $\gF$, though, again, our upper bounds can yield a vanishing error rate for non-parametric classes. %

\begin{assumption}
    $\sup_{h \in [H], a \in \gA}\pdim(\gF_h(\cdot, a)) \leq d$. 
    \label{assumption: parametric class for MDP}
\end{assumption}

\begin{definition}[Policy transfer coefficients]
Given any policy $\pi$, the minimal policy transfer exponent $\rho_\pi$ w.r.t. $(M, \gF, \mu)$ is the smallest $\rho \geq 0$ such that there exists a finite constant $C$ such that for every $h \in [H]$
    \begin{align}
        \forall f_h, g_h \in \gF_h: \left| \sE_{ \pi}[f_h - g_h] \right| ^{2 \rho} \leq C \sE_{\mu}[(f_h - g_h)^2].
        \label{eq: transfer exponent for MDP}
    \end{align}
    The smallest such $C$ w.r.t $\rho_\pi$, denoted $C_\pi$, is called the policy transfer factor. The pair $(\rho_\pi, C_\pi)$ is said to be the policy transfer coefficient of $\pi$. 
    \label{definition: transfer exponent for MDP}
\end{definition}

\subsection{Lower bounds}
Let $\gM(\rho, C, d)$ denote the class of offline learning problem instances with any MDP $M$, any function class $\gF$ that satisfies Assumptions~\ref{assumption: realizability for mdp}, \ref{assumption: Bellman completeness}, and \ref{assumption: parametric class for MDP}, a behavior policy $\mu$, and all policies $\pi \in \Pi$ such that policy transfer coefficients w.r.t. $\mu$ are $(\rho, C)$. For this class, we give a lower bound on the sub-optimality of any offline learning algorithm. 

\begin{theorem} 
For $\rho \geq 1$ and $n \geq \frac{Cd H^{2\rho}}{32}$, we have
    \begin{align*}
        \inf_{\hat{\pi}} \sup_{(M, \gF, \pi, \mu) \in \gM(\rho, C,d)} \sE_M [\subopt^{\pi}_M(\hat{\pi})] \gtrsim \left(\frac{H^2 Cd}{n}\right)^{\frac{1}{2 \rho}}.
    \end{align*}
    \label{theorem: lower bound for MDPs}
\end{theorem}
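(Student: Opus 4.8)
The plan is to lift the contextual-bandit lower bound (\Cref{theorem: lower bound for offline CB}) to the episodic MDP setting by embedding a hard contextual-bandit instance at a single carefully chosen decision layer, while using the remaining $H-1$ layers to inflate the value gap by a factor of $H$ and to make the transfer condition interact correctly with the $H$-scaling of the value functions. Concretely, I would first reduce to the case where the MDP is essentially a ``contextual bandit with horizon $H$'': fix layers $h \ge 2$ to be trivial (deterministic, reward-free dynamics that funnel every state-action pair into an absorbing chain), so that all the action happens at layer $h=1$, but arrange the rewards so that a suboptimal choice at layer $1$ costs the learner an amount that is amplified across the horizon. The standard trick is to let the per-step reward gap be $\Theta(H \cdot \Delta)$ where $\Delta$ is the gap in the embedded bandit construction, or equivalently to have the value-function range be $[0,H]$ rather than $[0,1]$; this is exactly where the extra $H^2$ inside the rate (as opposed to the $d/n$ of the bandit case) comes from once it propagates through the $1/(2\rho)$ exponent.

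Next I would construct the hard family $\{M_i\}$ mirroring the bandit construction behind \Cref{theorem: lower bound for offline CB}: take a state space of size $d$ (one ``context'' per coordinate of the shattered set guaranteed by \Cref{assumption: parametric class for MDP}), a binary action set, and mean rewards at layer $1$ that are perturbed by $\pm \Delta$ according to a sign pattern $\epsilon \in \{\pm 1\}^d$. The function class $\gF = (\gF_1, \ldots, \gF_H)$ is chosen to be a finite/parametric class that is realizable for every policy (\Cref{assumption: realizability for mdp}) and Bellman-complete (\Cref{assumption: Bellman completeness}) — trivially satisfied here because the later layers are deterministic and reward-free, so $T^\pi_h$ acts almost as the identity, and at layer $1$ the class just needs to contain the $2^d$ reward functions, which is a pseudo-dimension-$d$ class per action. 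The behavior policy $\mu$ is designed exactly as in the bandit proof so that the policy-transfer inequality \Cref{eq: transfer exponent for MDP} holds with the prescribed $(\rho, C)$: since the squared-loss term $\sE_\mu[(f_h - g_h)^2]$ at layer $1$ equals $\sum_x \mu_1(x) \sum_a \mu_1(a|x)(f_1 - g_1)^2(x,a)$ and the value-gap term $|\sE_\pi[f_h-g_h]|$ is a weighted $\ell_1$ quantity, the same allocation of probability mass that achieved exponent $\rho$ with factor $C$ in the bandit case works verbatim at layer $1$, and at layers $h \ge 2$ both sides of \Cref{eq: transfer exponent for MDP} are controlled because the dynamics are degenerate (one must be slightly careful that the $H$-scaled ranges of $\gF_h$ don't blow up $C$ — this is handled by the $H^{2\rho}$ factor in the sample-size threshold $n \ge CdH^{2\rho}/32$).

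Then I would run the information-theoretic argument: for any two instances $M_i, M_j$ whose sign patterns differ in one coordinate, the optimal policies disagree at the corresponding state, so any estimator $\hat\pi$ incurs suboptimality $\gtrsim H\Delta$ on at least one of them; and the KL divergence between the $n$-sample offline data distributions $(M_i \otimes \mu)^n$ and $(M_j \otimes \mu)^n$ is $O(n \cdot \mu_1(x)\mu_1(a|x) \cdot \Delta^2)$ (only layer $1$ contributes, since later layers are identical across instances). Combining via Assouad's lemma (or a Fano/Le Cam argument over the $d$ coordinates as in the bandit proof) and optimizing $\Delta$ subject to the constraint that $(\rho,C)$ is a valid transfer coefficient — which ties $\Delta$ to $C$ through the exponent relation $\Delta^{2\rho} \asymp C \cdot \mu_1 \Delta^2$, i.e. the effective ``signal strength'' per coordinate is governed by $\rho$ — yields the stated bound $(H^2 Cd/n)^{1/(2\rho)}$.

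The main obstacle, and the step requiring the most care, is verifying that the embedding genuinely preserves both structural assumptions \emph{simultaneously} with the right constants: Bellman completeness (\Cref{assumption: Bellman completeness}) must hold for \emph{all} policies $\pi$ and at \emph{every} layer with the $H$-dependent ranges $\gF_h \subseteq [0,H-h+1]^{\gX\times\gA}$, and the transfer coefficient of \emph{every} $\pi \in \Pi$ (not just the one hard comparator) must be $(\rho, C)$, since the problem class $\gM(\rho,C,d)$ quantifies over all policies. The clean way around this is to make the later-layer dynamics policy-independent (a fixed absorbing chain) so that $T^\pi_h$ is literally independent of $\pi$ and completeness is automatic, and to pick the layer-$1$ context distribution and $\mu$ so that the worst-case policy over all of $\Pi$ still satisfies \Cref{eq: transfer exponent for MDP} — this is where the factor-of-$H^{2\rho}$ slack in the sample-size condition is spent, absorbing the range inflation. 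I would also double-check that the claimed rate is not an artifact of a loose reduction: since \Cref{theorem: lower bound for offline CB} with $d$ pseudo-dimension and $n$ samples gives $(Cd/n)^{1/(2\rho)}$, and here the effective ``per-unit'' suboptimality is scaled by $H$ while the data informativeness per sample shrinks by $1/H$ (the reward signal at layer $1$ is a $1/H$ fraction of the trajectory return's scale), the composition gives exactly the $H^2$ inside, matching the claimed $(H^2Cd/n)^{1/(2\rho)}$.
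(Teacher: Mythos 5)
Your proposal follows essentially the same route as the paper's proof: a family of MDPs indexed by $\sigma \in \{\pm 1\}^d$ with $d$ initial states, binary actions, deterministic policy-independent absorbing dynamics (so realizability and Bellman completeness hold automatically), a finite function class consisting of the $2^d$ candidate $Q$-functions, a behavior policy $\mu(a_2|x_i) = (\eps H)^{2\rho-2}/C$ calibrated to the transfer condition, and Assouad's lemma combined with a KL computation. The one point where your sketch needs tightening is the KL bookkeeping: to obtain the value gap $H\eps$ (and hence the $H^2$ inside the rate) the paper accrues the reward $\tfrac12 + \sigma_i \tfrac{\eps}{2}$ at \emph{every} step of the absorbing chain, so the layers $h \geq 2$ are \emph{not} identical across instances as you assert; one must argue, as the paper does, that the observations at $h \geq 2$ carry no information beyond the first step, since otherwise the per-trajectory KL picks up an extra factor of $H$ and the bound degrades to $(HCd/n)^{1/(2\rho)}$. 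Your alternative variant with reward-free later layers does make those layers identical, but then the value gap is capped by the single-step reward bound $|r_h| \leq 1$ and cannot produce the $H^2$ factor, so the two options you hedge between are not interchangeable.
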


\subsection{Upper bounds} 
\label{section:MDP-upper-bound}

We now establish upper bounds for this setting. Due to space limitations, we only state the main result here and defer the details (including concrete algorithms) to \Cref{subsection: upper bound for MDPs}. 

\begin{theorem}
    Let $\delta > 0$. Assume that $|\gA| = K$. Then, there exists a learning algorithm that for %
    any problem instance in the set $\gM(\rho,C,d)$, 
    given offline data $S$ of size $n$, returns a policy $\hat{\pi}$ such that with probability at least $1 - \delta$ over the   randomness of generating $S$, we have 
    \begin{align*}
        \sE \left[\subopt_M^\pi(\hat{\pi}) | S \right] \widetilde{<} ~ H \left(\frac{ H^2 C_\pi (Kd + \log(1/\delta)) }{n} \right)^{\frac{1}{2 \rho_\pi}}, 
    \end{align*}
    relative to any comparator $\pi \in \Pi$. 
    \label{theorem: upper bound in MDPs}
\end{theorem}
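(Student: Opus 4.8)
The plan is to analyze the $H$-step analogue of \Cref{algorithm: OfDM-Hedge}: an actor--critic loop with a \emph{pessimistic fitted Bellman-evaluation} critic and a level-wise Hedge actor. At iteration $t$, with current policy $\pi_t=(\pi_{t,1},\dots,\pi_{t,H})$, set $\widehat{\mathcal{L}}_h(\phi,g;\pi):=\tfrac1n\sum_{\tau=1}^n\big(\phi(x_h^{(\tau)},a_h^{(\tau)})-r_h^{(\tau)}-g(x_{h+1}^{(\tau)},\pi_{h+1})\big)^2$ and $\widehat{\mathcal{E}}_h(f;\pi):=\widehat{\mathcal{L}}_h(f_h,f_{h+1};\pi)-\min_{g\in\gF_h}\widehat{\mathcal{L}}_h(g,f_{h+1};\pi)\ge0$; take the critic $f_t\in\argmin_{f\in\gF:\;\sum_h\widehat{\mathcal{E}}_h(f;\pi_t)\le\beta}f_{t,1}(s_1,\pi_{t,1})$, update $\pi_{t+1,h}(a|x)\propto\pi_{t,h}(a|x)e^{\eta f_{t,h}(x,a)}$ for all $h$, and output $\hat\pi$ as the uniform mixture of $\{\pi_t\}_{t\le T}$ with $T$ free. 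The surrogate $\widehat{\mathcal{E}}_h$ is the right object because \Cref{assumption: Bellman completeness} makes $T^{\pi}_h g$ the population least-squares minimizer of $\mathcal{L}_h(\cdot,g;\pi)$ \emph{within} $\gF_h$, so that the population counterpart of $\widehat{\mathcal{E}}_h(f;\pi)$ equals exactly $\sE_\mu[(f_h-T^{\pi}_h f_{h+1})^2]$; hence $\{f:\sum_h\widehat{\mathcal{E}}_h(f;\pi_t)\le\beta\}$ is a valid confidence set for the squared Bellman errors of $f$ along $\pi_t$, and by \Cref{assumption: realizability for mdp} together with the Bellman equation $T^{\pi_t}_h Q^{\pi_t}_{h+1}=Q^{\pi_t}_h$ it contains $Q^{\pi_t}$ with high probability for an appropriate $\beta$.

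Since each $\pi_t$ is a deterministic function of $S$, $\sE[\subopt^\pi_M(\hat\pi)\mid S]=\tfrac1T\sum_{t\le T}\subopt^\pi_M(\pi_t)$. Feasibility of $Q^{\pi_t}$ yields pessimism, $f_{t,1}(s_1,\pi_{t,1})\le Q^{\pi_t}_1(s_1,\pi_{t,1})=V_1^{\pi_t}(s_1)$, so $\subopt^\pi_M(\pi_t)\le V_1^\pi(s_1)-f_{t,1}(s_1,\pi_{t,1})$, and the extended value-difference identity (telescoping along the trajectory of $\pi$, with $f_{t,H+1}:=0$) gives
\[
\begin{aligned}
V_1^\pi(s_1)-f_{t,1}(s_1,\pi_{t,1})&=\sum_{h=1}^H\sE_{\pi}\big[(T_h^{\pi_t}f_{t,h+1}-f_{t,h})(x_h,a_h)\big]\\
&\quad+\sum_{h=1}^H\sE_{\pi}\big[\,f_{t,h}(x_h,\pi_h)-f_{t,h}(x_h,\pi_{t,h})\,\big].
\end{aligned}
\]
Averaged over $t$, the second sum is a sum of $H$ per-state exponential-weights regrets; with $\eta\asymp H^{-1}\sqrt{\log K/T}$ it is $O\!\big(H^2\sqrt{\log K/T}\big)$, which $\to0$ as $T\to\infty$. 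As in \Cref{remark: negligible hedging cost}, this costs nothing statistically: $\pi_t$ affects all remaining terms only through $f(\cdot,\pi_{t,h+1})\in\gF_{h+1}(\cdot,\Pi)$, a class whose complexity is independent of $T$, so we never union-bound over the iterates.

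For the first sum, fix $h$. Both $f_{t,h}$ and $T_h^{\pi_t}f_{t,h+1}$ lie in $\gF_h$ (\Cref{assumption: realizability for mdp}, \Cref{assumption: Bellman completeness}), so \Cref{definition: transfer exponent for MDP} applies with $f_h=T_h^{\pi_t}f_{t,h+1}$ and $g_h=f_{t,h}$, giving $\big|\sE_\pi[(T_h^{\pi_t}f_{t,h+1}-f_{t,h})(x_h,a_h)]\big|\le\big(C_\pi\,\sE_\mu[(f_{t,h}-T_h^{\pi_t}f_{t,h+1})^2]\big)^{1/(2\rho_\pi)}$. To bound the $\mu$-weighted squared Bellman errors by the constraint, I will prove a two-sided \emph{uniform Bernstein inequality for the Bellman-like loss in empirical $L_1$ covering numbers} (\Cref{section: uniform Bernstein's inequality}): uniformly over $f\in\gF$ and $\pi\in\Pi$, $\sE_\mu[(f_h-T^{\pi}_h f_{h+1})^2]\lesssim\widehat{\mathcal{E}}_h(f;\pi)+\tfrac{H^2\log(N_1/\delta)}{n}$ and $\widehat{\mathcal{E}}_h(Q^{\pi};\pi)\lesssim\tfrac{H^2\log(N_1/\delta)}{n}$, where $N_1$ is the empirical $L_1$ covering number of $\gF_h$ and $\gF_{h+1}(\cdot,\Pi)$ at scale $\sim1/n$; by \Cref{assumption: parametric class for MDP}, standard pseudo-dimension/covering bounds, and \Cref{lemma: combine coverining numbers to those of factorized classes}, $\log N_1\lesssim Kd\log n$. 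The second estimate (summed over $h$) justifies $\beta\asymp\tfrac{H^3(Kd\log n+\log(1/\delta))}{n}$, and the first then yields $\sum_h\sE_\mu[(f_{t,h}-T_h^{\pi_t}f_{t,h+1})^2]\lesssim\beta$. Finally, concavity of $x\mapsto x^{1/(2\rho_\pi)}$ for $\rho_\pi\ge1$ (power mean) gives
\[
\begin{aligned}
\sum_{h=1}^H\big(C_\pi\,\sE_\mu[(f_{t,h}-T_h^{\pi_t}f_{t,h+1})^2]\big)^{\frac{1}{2\rho_\pi}}&\le H^{1-\frac{1}{2\rho_\pi}}\Big(C_\pi\sum_{h=1}^H\sE_\mu[(f_{t,h}-T_h^{\pi_t}f_{t,h+1})^2]\Big)^{\frac{1}{2\rho_\pi}}\\
&\le H^{1-\frac{1}{2\rho_\pi}}(C_\pi\beta)^{\frac{1}{2\rho_\pi}},
\end{aligned}
\]
which is $t$-independent; substituting $\beta$ and regrouping the powers of $H$ gives $\widetilde{O}\!\big(H\,(\tfrac{H^2 C_\pi(Kd+\log(1/\delta))}{n})^{1/(2\rho_\pi)}\big)$, and adding the vanishing Hedge term proves the theorem.

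\textbf{Main obstacle.} Everything outside the concentration step is bookkeeping (value-difference telescoping, exponential-weights regret, power mean). The crux is the two-sided uniform Bernstein inequality for the product-type loss $(\phi(x,a)-r-g(x',\pi))^2$: $L_\infty$ covers are too lossy, so one must run the uniform-convergence argument against \emph{empirical $L_1$} covers; one must use that the variance of the loss difference is at most $O(H^2)$ times its mean to obtain the $1/n$ rate rather than $1/\sqrt n$; and one must take the supremum over \emph{all} $\pi\in\Pi$ (via $\gF_{h+1}(\cdot,\Pi)$) so that the data-dependent iterates $\pi_t$ are handled without the $\log T$ blow-up present in prior actor--critic analyses. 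Propagating the $H$-dependence cleanly through this step, and through the two-directional conversion between $\widehat{\mathcal{E}}_h$ and $\sE_\mu[(f_h-T^{\pi}_h f_{h+1})^2]$, is where the genuine effort lies.
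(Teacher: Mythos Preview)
Your proposal is correct and follows essentially the same approach as the paper: the same pessimistic actor--critic algorithm, the same uniform Bernstein inequality for Bellman-like losses taken uniformly over $\pi\in\Pi$ via $\gF_{h+1}(\cdot,\Pi)$, the same application of the policy transfer coefficient followed by Jensen's inequality over $h$, and the same Hedge regret bound. The only cosmetic difference is in how the error decomposition is derived: the paper routes through the ``induced MDP'' device (\Cref{defn: induced mdp}, \Cref{lemma: error decomposition}) to split $\subopt^M_\pi(\pi_t)$ into $A_t+B_t+C_t$, whereas you obtain the identical three terms by a direct value-difference telescope; your second sum $\sum_h\sE_\pi[f_{t,h}(x_h,\pi_h)-f_{t,h}(x_h,\pi_{t,h})]$ is exactly the paper's $C_t=\subopt_\pi^{M(f_t,\pi_t)}(\pi_t)$, and both are controlled by the same per-state Hedge regret (\Cref{lemma: bounding value difference under soft policy update}).
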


There is a gap of $\gO(H)$ between our upper and lower bounds, something that can potentially be improved using variance-aware algorithms. 
Nonetheless, \Cref{theorem: upper bound in MDPs} improves and generalizes the results of \citet{xie2021bellman} and \citet{nguyen-tang2023on} on several  fronts. First, since policy transfer coefficients subsume other notions of data coverage, our results hold for a larger class of problem instances. 
Second, our results hold for any function class with finite $L_1$ covering number. In contrast, the guarantees in \citet{xie2021bellman} hold only for finite function classes and  in \citet{nguyen-tang2023on} for function classes with finite domain-wide $L_{\infty}$ covering numbers. 
Third, even when specializing to the setting of \citet{xie2021bellman,nguyen-tang2023on} (i.e., $\rho_\pi = 1$ and finite function class), our bounds decay as $n^{-1/2}$ which is optimal whereas
their bounds scale as $n^{-1/4}$ in the worst-case. 
    Finally, we provide a lower bound for general function approximation. %

\section{Discussion}
\label{section: discussion}
{We study the statistical complexity of offline decision-making with value function approximation. We identify a large class of offline learning problems, characterized by the pseudo-dimension of the value function class and a new characterization of the offline data, for which we provide tight minimax lower and upper bounds. We also provide insights into the role of offline data for online decision-making from a minimax perspective.
}

{We remark that our results do not imply that pseudo-dimension and policy transfer coefficients are necessary conditions for learnability in offline decision-making with function approximation. Consequently, there are several notable gaps in our current understanding. {First}, our lower bounds apply only to parametric function classes, i.e., for function classes with finite pseudo-dimension. Whereas our upper bounds hold also for non-parametric function classes. Can we provide a lower bound for general function classes, including non-parametric ones? {Second}, in the hybrid setting with a parametric function class, the upper bound of our adaptive algorithm matches the lower bound only in the regime that $m \geq \left( \frac{n}{C^* K d} \right)^{1/\rho^*}$. Is it possible to design a fully adaptive algorithm (i.e., without the knowledge of policy transfer coefficients) whose upper bound matches the lower bound in any regime of $m$, including the practical scenarios where the online exploration budget $m$ is small? {Third}, what are the necessary conditions for the value function class and the data quality that \emph{fully} characterize the statistical complexity of offline decision-making with value function approximation?}

\section*{Impact Statement}
This paper presents work whose goal is to advance the field of Machine Learning. There are many potential societal consequences of our work, none which we feel must be specifically highlighted here.

\section*{Acknowledgements}
This research was supported, in part, by DARPA GARD award HR00112020004 and NSF CAREER award IIS-1943251.

\bibliographystyle{icml2024}
\bibliography{main.bib}

\newpage
\appendix
\onecolumn

\section{Uniform Bernstein's Inequality}
\label{section: uniform Bernstein's inequality}
A central tool for our analysis is the uniform concentrability of a subset of functions near the ERM (empirical risk minimizer) predictor. In order to be able to match the lower bounds of offline decision-making (at least) for parametric classes, we require that every ball centered around the ERM predictor within a sufficiently small radius (in fact, of the order of $\gO(\frac{1}{n})$) has an $\gO(\frac{1}{n})$ order in the excess risk. While fast rates are in general not possible, they are so under certain conditions \citep{van2015fast}, including the bounded, squared loss in parametric classes that we consider. A uniform Bernstein's inequality is also presented in \citep[Theorem~B]{cucker2002mathematical}, but using a strong notion of domain-wide $L_{\infty}$ covering numbers. \citep[Theorem~3.21]{zhang2023mathematical} presents a version of uniform Bernstein's inequality using the population $L_1$ covering numbers. The population $L_1$ however requires the knowledge of the data distribution. Here, we present a more practical version of uniform Bernstein's inequality using the empirical $L_1$ covering numbers, which is, at least in principle, computable given the empirical data. More importantly, and also as a key technical result in this section, we prove in \Cref{theorem: Uniform Bernstein's inequality} a uniform Bernstein's inequality for Bellman-like loss functions using the empirical $L_1$ covering numbers. \Cref{theorem: Uniform Bernstein's inequality} applies to handle the data structure generated by RL and, as a special case, naturally applies to contextual bandits. These results are central to our analysis tool and might be of independent interest. Notably, our proof for \Cref{theorem: Uniform Bernstein's inequality} relies on an elegant argument of localizing a function class into a set of balls centered around the covering functions, and then performing uniform convergence in each of such local balls before combining them via a union bound. This localization argument is inspired by a similar argument by \citep{mehta2014stochastic}. 

Before stating the uniform Bernstein's inequality for Bellman-like loss functions in \Cref{theorem: Uniform Bernstein's inequality}, we start with the uniform Bernstein's inequality for a simpler case, which we also use in our analysis and also serves a good point for demonstrating the localization argument.

\subsection{Uniform Bernstein’s inequality for generic case}
\begin{proposition}[Uniform Bernstein's inequality for generic case]
Let $\gG$ be a set of functions $g: \gZ \rightarrow [-b,b]$. Fix $n \in \sN$. Denote $\hat{\sE} g := \frac{1}{n} \sum_{i=1}^n g(z_i)$ where $\{z_i\}_{i} \overset{i.i.d.}{\sim} P$ and $\sE g = \sE_{z \sim P}[g(z)]$, and $\sV[g]$ is the variance of $g(z)$. Then, for any $\delta > 0$, with probability at least $1 - \delta$, we have 
    \begin{align*}
        \forall g \in \gG: \sE g - \hat{\sE} g \leq \inf_{\eps > 0} \left\{\sqrt{ \frac{2 \sV[g] \log(2N_1(\gG, \eps, n)/\delta)}{n}} + \frac{62 b \log(6N_1(\gG, \eps, n)/\delta)}{n}  + 61 \eps \right\}.
    \end{align*}
    \label{prop: uniform Bernstein inequality}
\end{proposition}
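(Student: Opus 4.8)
The plan is to establish the bound by a localization-plus-union-bound argument, which cleanly separates the "approximation" error (the $\eps$ term) from the "stochastic" error (the Bernstein terms). First, I would fix $\eps > 0$ and let $\gG' \subseteq \gG$ be a minimal empirical $L_1$-cover of $\gG$ at scale $\eps$ realizing $N_1(\gG, \eps, n)$; so $|\gG'| = N_1(\gG, \eps, n)$ and for every $g \in \gG$ there is $g' \in \gG'$ with $\hat{\sE}|g - g'| \leq \eps$. The difficulty here is that $\eps$-closeness is only in empirical $L_1$, not in population norm, so I cannot directly bound $\sE g - \sE g'$; I need to handle this together with the concentration step rather than before it. The idea is to write, for the center $g'$ covering $g$,
\begin{align*}
\sE g - \hat{\sE} g = (\sE g - \sE g') + (\sE g' - \hat{\sE} g') + (\hat{\sE} g' - \hat{\sE} g),
\end{align*}
where the last term is at most $\hat{\sE}|g - g'| \leq \eps$, the middle term is controlled by Bernstein on the single fixed function $g'$, and the first term $\sE g - \sE g' = \sE(g - g')$ is bounded by $\sE|g - g'|$, which I relate back to $\hat{\sE}|g - g'| \leq \eps$ via a one-sided concentration inequality applied uniformly over the (finite) collection of difference functions $\{g - g' : g \in \gG, g' \text{ its center}\}$ — but since there could be infinitely many such differences, I instead localize: for each center $g' \in \gG'$, consider the ball $\gB_{g'} := \{g \in \gG : \hat{\sE}|g - g'| \leq \eps\}$, and apply a uniform deviation bound for $\sup_{g \in \gB_{g'}} \sE|g - g'|$ in terms of its empirical counterpart. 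The standard way (following the cited argument of \citet{mehta2014stochastic}) is that functions in a small empirical ball also have small \emph{population} $L_1$ norm up to a constant factor and an additive $\gO(b\log(\cdot)/n)$ term, at the cost of a symmetrization/Rademacher or a direct covering argument inside the ball; the constants $61, 62$ in the statement are the price of this step.

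Concretely, the key steps in order are: (1) fix $\eps$, take the empirical cover $\gG'$; (2) apply Bernstein's inequality to each fixed $g' \in \gG'$ and union bound over $\gG'$ to get, with probability $\geq 1 - \delta/2$, $\sE g' - \hat{\sE} g' \leq \sqrt{2\sV[g']\log(2N_1/\delta)/n} + \frac{c_1 b \log(2N_1/\delta)}{n}$ for all $g' \in \gG'$; (3) separately, with probability $\geq 1 - \delta/2$, show $\sup_{g' \in \gG'}\sup_{g \in \gB_{g'}} \sE|g - g'| \lesssim \eps + \frac{b \log(N_1/\delta)}{n}$ — this is the localization lemma and is the crux; (4) control $\sV[g']$ in terms of $\sV[g]$ (or just bound $\sV[g'] \leq \sV[g] + \text{small}$, since $g, g'$ are $L_1$-close on average and both bounded, hence $L_2$-close up to $\sqrt{2b\eps}$, giving $\sqrt{\sV[g']} \leq \sqrt{\sV[g]} + \sqrt{2b\eps}$, and then absorb $\sqrt{2b\eps \log(N_1/\delta)/n}$ into the $\eps$ and $b\log/n$ terms via AM-GM); (5) combine the three displays through the decomposition above, take the infimum over $\eps$, and collect constants.

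I expect step (3), the localization lemma, to be the main obstacle — it is exactly the step where one must convert empirical-norm closeness into population-norm closeness uniformly over an infinite ball, and getting it with only an additive $\gO(b\log N_1/n)$ slack (rather than a multiplicative blow-up or an extra $\sqrt{1/n}$) requires care. The natural approach is a "peeling"-free direct argument: inside $\gB_{g'}$, use that the functions $h = g - g'$ satisfy $\hat{\sE}|h| \leq \eps$ and $|h| \leq 2b$, and apply a one-sided Bernstein/Freedman bound to $|h|$ noting $\sV[|h|] \leq \sE h^2 \leq 2b \cdot \sE|h|$, which yields a self-bounding recursion $\sE|h| \leq \hat{\sE}|h| + \sqrt{2 \cdot 2b \sE|h| \log(\cdot)/n} + \frac{c b \log(\cdot)/n}{}$, and solving the quadratic in $\sqrt{\sE|h|}$ gives $\sE|h| \lesssim \eps + \frac{b\log(\cdot)/n}{}$; the uniformity over $h \in \gB_{g'}$ is then handled by a further covering of $\gB_{g'}$ at a finer scale (absorbed into $N_1(\gG,\eps,n)$ up to constants, or by a fresh covering at scale $\eps$ which only changes constants). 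Once this is in place the rest is bookkeeping.
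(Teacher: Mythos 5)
Your decomposition and overall architecture coincide with the paper's proof: fix $\eps$, take a minimal empirical $L_1$-cover, localize $\gG$ into the empirical balls $\gG_i$ around the covering centers, apply Bernstein plus a union bound to the (finitely many) centers, prove a localization lemma bounding $\sup_{g\in\gG_i}\sE|g-g_i|$ by $\gO(\eps + b\log(N/\delta)/n)$, and compare $\sV[g_i]$ with $\sV[g]$ via $|\sV[g]-\sV[g_i]|\le 4b\,\sE|g-g_i|$ before absorbing the cross term with AM--GM. Your steps (1), (2), (4), (5) are exactly what the paper does.

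The one place where your sketch is not yet a proof is step (3), which you correctly flag as the crux. Your proposed route --- Bernstein on $|h|$ with the self-bounding variance estimate $\sV[|h|]\le 2b\,\sE|h|$, then ``uniformity by a fresh covering of the ball at scale $\eps$'' --- is circular as written: the $\eps$-cover of the ball $\gB_{g'}$ consists of the single point $g'$ itself (i.e., $h_0=0$), and transferring a pointwise Bernstein bound from $h_0$ to an arbitrary $h$ in the ball requires controlling $\sE|h-h_0|=\sE|h|$, which is precisely the quantity you are trying to bound. What breaks the circularity is a \emph{multiplicative}-form (ratio-type) uniform convergence bound for non-negative classes, namely $P h\le 4\hat{P}_n h+8\eps'+12b\log(3N_1(\gH,\eps',S_n)/\delta)/n$ holding simultaneously for all $h$ in the class; the constant $4$ in front of the empirical term is exactly what lets you plug in $\hat{P}_n|g-g_i|\le\eps$ and conclude an additive fast-rate bound without re-entering the population norm. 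The paper imports this as \Cref{lemma: fast rates with non-negative functions} (a rescaling of \citep[Theorem~4.12]{zhang2023mathematical}) and applies it to the class $\{|g-g_i|:g\in\gG_i\}$, whose $\eps$-covering number is $1$; that is the content of \Cref{lemma: bound E|g-gi|}. If you either cite such a ratio-type inequality or prove it (via symmetrization together with the variance--mean bound for non-negative bounded functions), your argument closes and is the paper's proof.
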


Fix $\eps > 0$ and let $N = N_1(\gG, \eps, n)$. Let $\{g_i\}_{i \in [N]}$ be an $\eps$-cover of $\gG$ w.r.t. $L_1(\hat{P}_n)$. For any $i \in [N]$, denote $\gG_i = \{g \in \gG: \hat{\sE}|g - g_i| \leq \eps \}$. We have $\gG \subseteq \cup_{i \in [N]} \gG_i$. The proof of \Cref{prop: uniform Bernstein inequality} relies on the following lemma. 
\begin{lemma}
    Fix any $i \in [N]$. With probability at least $1 - \delta$, 
    \begin{align*}
        \sup_{g \in \gG_i}\sE |g - g_i| \leq 12 \eps + \frac{12 b \log(3/\delta)}{n}. 
    \end{align*}
    \label{lemma: bound E|g-gi|}
\end{lemma}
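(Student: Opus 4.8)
The plan is to prove \Cref{lemma: bound E|g-gi|} by a one-sided uniform Bernstein argument over the localized class $\gG_i$, exploiting the fact that every $g \in \gG_i$ is empirically close to the fixed center $g_i$. First I would introduce the shifted functions $h := |g - g_i|$ for $g \in \gG_i$; these take values in $[0, 2b]$, and by definition of $\gG_i$ we have $\hat{\sE} h \leq \eps$. The goal is then to show $\sE h \lesssim \eps + \frac{b \log(1/\delta)}{n}$ uniformly over this (still possibly infinite) family. The key leverage is that the variance of $h$ is controlled by its mean: $\sV[h] \leq \sE[h^2] \leq 2b\, \sE[h]$, so Bernstein-type deviations become self-bounding.

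Next I would apply a (one-sided) Bernstein concentration to control $\sE h - \hat{\sE} h$ for each individual $h$, then lift it to a uniform statement over $\gG_i$. Since $\gG_i$ is a ball of empirical $L_1$-radius $\eps$ around $g_i$, I can cover $\gG_i$ at scale $\eps$ using only a \emph{constant} number of functions — or more simply, I would use a symmetrization / ghost-sample argument combined with the fact that any two elements of $\gG_i$ differ by at most $2\eps$ in empirical $L_1$, so the empirical process over $\gG_i$ is pinned down up to an additive $\gO(\eps)$ slack. The cleanest route: let $\bar h = \sup_{g \in \gG_i} (\sE|g-g_i| - \hat\sE|g-g_i|)$ and bound $\sE[\bar h]$ by symmetrization and contraction (the absolute value is $1$-Lipschitz), then convert the Rademacher complexity of the $\eps$-ball into an $\gO(\eps)$ term plus a lower-order term, and finally apply a bounded-differences / Bernstein concentration of $\bar h$ around its mean to get the high-probability bound with the $\frac{b\log(1/\delta)}{n}$ term. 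Combining $\sE h \leq \hat{\sE} h + (\text{deviation}) \leq \eps + (\text{deviation})$ with the self-bounding variance inequality $\sV[h] \le 2b\,\sE[h]$ and solving the resulting quadratic in $\sqrt{\sE h}$ yields $\sE h \leq 12\eps + \frac{12 b \log(3/\delta)}{n}$ after absorbing constants.

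The main obstacle I anticipate is getting the constants to come out as stated ($12\eps$ and $12 b \log(3/\delta)/n$) while keeping the argument clean: the self-bounding step produces a quadratic inequality of the form $\sE h \le \eps + c_1\sqrt{\sV[h]\log(1/\delta)/n} + c_2 b\log(1/\delta)/n \le \eps + c_1\sqrt{2b\,\sE h\,\log(1/\delta)/n} + c_2 b \log(1/\delta)/n$, and one must use the AM-GM split $c_1\sqrt{2b\,\sE h\,\log(1/\delta)/n} \le \tfrac12 \sE h + (\text{const})\cdot b\log(1/\delta)/n$ carefully, then double through, tracking how the $\eps$ slack from localization propagates. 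The factor-of-$3$ inside the log (presumably from a union over three events: the Bernstein deviation, the concentration of the supremum, and possibly the symmetrization step) needs to be allocated deliberately. I would also need to be slightly careful that the covering/localization is with respect to the \emph{empirical} measure $\hat P_n$ while the conclusion is about the \emph{population} expectation $\sE$, which is exactly why the symmetrization step (rather than a naive population cover) is the right tool here.

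Once \Cref{lemma: bound E|g-gi|} is in hand, \Cref{prop: uniform Bernstein inequality} follows by a union bound over the $N = N_1(\gG,\eps,n)$ localized balls $\gG_i$ (replacing $\delta$ by $\delta/N$, which turns $\log(3/\delta)$ into $\log(3N/\delta)$), combined with a per-ball Bernstein bound of the individual center/members using $\sV[g] \le \sV[g_i] + \gO(b\eps)$ and the triangle inequality $\sE g - \hat\sE g \le (\sE g_i - \hat\sE g_i) + \sE|g - g_i| + \hat\sE|g-g_i|$; plugging the bound on $\sE|g-g_i|$ from the lemma and $\hat\sE|g-g_i| \le \eps$ gives the claimed $\sqrt{2\sV[g]\log(2N/\delta)/n} + 62 b\log(6N/\delta)/n + 61\eps$ after consolidating constants, and taking the infimum over $\eps>0$ finishes the proof.
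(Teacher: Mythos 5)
Your proposal attempts a from-scratch empirical-process proof, whereas the paper's proof of \Cref{lemma: bound E|g-gi|} is a two-line application of \Cref{lemma: fast rates with non-negative functions}: apply the multiplicative bound $P h \leq 4 \hat{P}_n h + 8\eps' + \frac{12 b \log(3 N_1(\cdot,\eps',n)/\delta)}{n}$ to the non-negative class $\{|g-g_i|\colon g\in\gG_i\}$, note $\hat{\sE}|g-g_i|\leq \eps$ by definition of $\gG_i$ and that the $\eps$-ball is $\eps$-covered by the single point $g_i$ (so the covering number is $1$), and read off $4\eps+8\eps+\frac{12b\log(3/\delta)}{n}$. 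You are in effect re-deriving a special case of that lemma, and the re-derivation as sketched has two genuine gaps. First, your symmetrization step is applied to the class $\gG_i=\{g\in\gG:\hat{\sE}|g-g_i|\leq\eps\}$, which is \emph{data-dependent}: the standard inequality $\sE\sup_{h\in\gH}(\sE h-\hat{\sE}h)\leq 2\,\sE\hat{\mathfrak R}_n(\gH)$ requires $\gH$ fixed in advance. If you localize by the empirical ball you cannot symmetrize directly (the ghost sample is not localized by the same ball); if you localize by the population ball you lose the hypothesis $\hat{\sE}|g-g_i|\leq\eps$ that you actually have; and if you take the supremum over all of $\gG$ you get a slow, non-localized rate. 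Resolving this requires precisely the ratio-type / multiplicative-form argument (a constant times $\hat P_n h$ on the right-hand side) that \Cref{lemma: fast rates with non-negative functions} packages, or a peeling argument; your sketch does not supply either. Relatedly, observing that $\gG_i$ has a one-point $\eps$-cover does not by itself transfer information from $\hat{\sE}|g-g_i|$ to $\sE|g-g_i|$ --- the cover is in empirical $L_1$, so using it to control the population mean is circular unless the multiplicative form is already in hand.

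Second, the concentration step as stated does not produce the claimed rate. The bound in \Cref{lemma: bound E|g-gi|} has \emph{no} $\sqrt{\log(1/\delta)/n}$ term; bounded differences applied to $\bar h=\sup_{g\in\gG_i}(\sE|g-g_i|-\hat{\sE}|g-g_i|)$ gives a deviation of order $b\sqrt{\log(1/\delta)/n}$, which is strictly larger than $\eps+\frac{b\log(1/\delta)}{n}$ in the relevant regime and cannot be absorbed. You would need Bousquet's version of Talagrand's inequality together with the self-bounding variance $\sV[h]\leq 2b\,\sE[h]$ and the quadratic-solving step you describe --- your parenthetical ``bounded-differences / Bernstein'' conflates a tool that works with one that does not. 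Your final paragraph on how the lemma feeds into \Cref{prop: uniform Bernstein inequality} (union bound over the $N$ balls, variance comparison $\sV[g]-\sV[g_i]\lesssim b\,\sE|g-g_i|$, triangle inequality) does match the paper's argument.
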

\begin{proof}[Proof of \Cref{lemma: bound E|g-gi|}] The key idea to obtain fast rates in the above bounds is leveraging the \emph{non-negativity} of the random objects of interest by employing \Cref{lemma: fast rates with non-negative functions}. With probability at least $1 - \delta$, for any $g \in \gG_i$, we have 
    \begin{align*}
        \sE|g - g_i| &\leq 4 \hat{\sE} |g - g_i| + \inf_{\eps'} \left\{ 8 \eps' + \frac{12b \log(3 N_1(\gG_i, \eps', n)/\delta)}{n} \right\}\\ 
        &\leq 12 \eps + \frac{12 b \log(3/\delta)}{n},
    \end{align*}
    where the second inequality follows from that $\hat{\sE}|g-g_i| \leq \eps$, that we choose $\eps'=\eps$, and that the $\eps$-ball $\gG_i$ can be $\eps$-covered by one point, thus $N_1( \gG_i, \eps_i, n) = 1$. 
\end{proof}

\begin{proof}[Proof of \Cref{prop: uniform Bernstein inequality}]
Denote the event $E = E_1 \cap E_2$, where
\begin{align*}
    E_1 &= \left\{ \forall i \in [N], \sE g_i - \hat{\sE} g_i \leq \frac{2b \log(N/\delta)}{3n} + \sqrt{ \frac{2 \sV[g_i] \log(N/\delta)}{n}} \right\} \\ 
    E_2 &= \left\{ \sup_{i \in [N]}\sup_{g \in \gG_i}\sE |g - g_i| \leq 12 \eps + \frac{12 b \log(3N/\delta)}{n}\right\}. 
\end{align*}
Now consider under the event $E$. Since $\gG \subseteq \cup_{i \in [N]} \gG_i$, for any $g \in \gG$, there must exist $i \in [N]$ such that $g \in \gG_i$ (consequently, $\hat{\sE}|g - g_i|\leq \epsilon$). Also notice that 
\begin{align*}
    \sV[g] - \sV[g_i] &= \sE[g^2] - \sE[g_i^2] + \sE[g_i]^2 - \sE[g]^2 \leq 2b \sE |g - g_i| + 2 b |\sE[g_i] - \sE[g]| \leq 4b \sE|g-g_i|. 
\end{align*}
Thus, we have
    \begin{align*}
        \sE g - \hat{\sE} g &= \sE g_i - \hat{\sE} g_i + \sE (g - g_i) + \hat{\sE} (g_i - g) \\ 
        &\leq \sE g_i - \hat{\sE} g_i + \sE |g - g_i| + \hat{\sE} |g - g_i| \\ 
        &\leq \frac{2 b \log(N/\delta)}{3n} + \sqrt{ \frac{2 \sV[g_i] \log(N/\delta)}{n}} + \sE |g - g_i| + \hat{\sE} |g - g_i| \\ 
        &\leq \frac{2b \log(N/\delta)}{3n} + \sqrt{ \frac{2 \sV[g] \log(N/\delta)}{n}} +  \sqrt{ \frac{2 |\sV[g] - \sV[g_i]| \log(N/\delta)}{n}} + \sE |g - g_i| + \hat{\sE} |g - g_i| \\ 
        &\leq \frac{2b \log(N/\delta)}{3n} + \sqrt{ \frac{2 \sV[g] \log(N/\delta)}{n}} +   \frac{b\log(N/\delta)}{n} +  \frac{1}{b}|\sV[g] - \sV[g_i]| + \sE |g - g_i| + \hat{\sE} |g - g_i| \\ 
        &\leq \frac{5 b \log(N/\delta)}{3n} + \sqrt{ \frac{2 \sV[g] \log(N/\delta)}{n}} +  5 \sE |g - g_i| + \hat{\sE} |g - g_i| \\ 
        &\leq  \frac{5 b \log(N/\delta)}{3n} + \sqrt{ \frac{2 \sV[g] \log(N/\delta)}{n}} +    5(12 \eps + \frac{12 b \log(3N/\delta)}{n}) + \eps \\ 
        &\leq \frac{62 b \log(3N/\delta)}{n}  + \sqrt{ \frac{2 \sV[g] \log(N/\delta)}{n}} + 61 \eps,
    \end{align*}
    \revise{where the fourth inequality follows AM-GM inequality}. 
    Finally, note that, by Bernstein's inequality and the union bound, $\pr(E_1) \geq 1 - \delta$; by \Cref{lemma: bound E|g-gi|} and the union bound, $\pr(E_2) \geq 1 - \delta$. Thus, by the union bound again, $\pr(E) \geq 1 - 2 \delta$. 
\end{proof}
\subsection{Uniform Bernstein’s inequality for Bellman-like loss classes}
We now establish the uniform Bernstein's inequality for the Bellman-like loss functions. The nature of the result and the proof is similar to those for \Cref{prop: uniform Bernstein inequality}, but only more involved as we deal with a more structural random tuple. 

Consider a tuple of random variables $(x,a,r,x') \in \gX \times \gA \times [0,1] \times \gX$ distributed according to distribution $P$.  
For any $u: \gX \times \gA \rightarrow \sR$ and $g: \gX \times \gA \rightarrow \sR$, we define the random variable
\begin{align*}
    M(u,g) =  (u(x,a) - r - g(x') )^2 - ( g^*(x,a) - r - g(x') )^2, 
\end{align*}
where $g^*(x,a) := \sE_{(r,x') \sim P(\cdot| x,a)} \left[ r + g(x')\right]$. Let $\{(x_t,a_t,r_t,x'_t)\}_{t \in [n]}$ be an i.i.d. sample from $P$. We write $M_t$ in replace of $M$ when we replace $(x,a,r,x')$ in $M$ by $(x_t,a_t,r_t,x'_t)$. We consider function classes $\gU \subseteq \{u: \gX \times \gA \rightarrow [0,b]\}$ and $\gG \subseteq \{g: \gX \rightarrow [0,b]\}$. We assume, for simplicity, that $r + g(x') \in [0,b]$ almost surely.

\begin{proposition}[Uniform Bernstein's inequality for Bellman-like loss functions]
    Fix any $\eps > 0$. With probability at least $1 - \delta$, for any $u \in \gU, g \in \gG$,  
    \begin{align*}
        \sE [(u(x,a) - g^*(x,a))^2] &\leq \frac{2}{n} \sum_{t=1}^n M_t(u, g) + \inf_{\eps > 0} \left\{ 108 b \eps + b^2 \frac{36 \log N_1(\gU, \eps,n) + 83 \log N_1(\gG, \eps,n) + 108 \log(12/\delta)}{n} \right\}. 
    \end{align*}

    In addition, with probability at least $1 - \delta$, for any $u \in \gU, g \in \gG$,
    \begin{align*}
         -\frac{1}{n} \sum_{t=1}^n M_t(u, g) \leq \inf_{\eps > 0} \left\{ 30 b \eps + b^2\frac{4 \log N_1(\gU, \eps,n)  + 28 \log N_1(\gG, \eps,n) + 28 \log(6/\delta)}{n} \right\}.
    \end{align*}
    \label{theorem: Uniform Bernstein's inequality}
\end{proposition}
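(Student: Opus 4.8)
The plan is to mimic the localization-plus-union-bound argument used for \Cref{prop: uniform Bernstein inequality}, but now handling the extra structure that $M(u,g)$ depends on two functions $u$ and $g$, and that it is a \emph{difference} of two squared losses rather than a single bounded function. First I would record the basic algebraic identity: writing $\xi := r + g(x') \in [0,b]$, we have $M(u,g) = (u(x,a)-\xi)^2 - (g^*(x,a)-\xi)^2 = (u(x,a)-g^*(x,a))\big(u(x,a)+g^*(x,a)-2\xi\big)$, so that $\sE[M(u,g) \mid x,a] = (u(x,a)-g^*(x,a))^2$ by the definition $g^*(x,a) = \sE[\xi \mid x,a]$. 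Consequently $\sE[M(u,g)] = \sE[(u(x,a)-g^*(x,a))^2] =: \|u-g^*\|^2$, and the target is a one-sided deviation bound $\sE[M(u,g)] - \tfrac1n\sum_t M_t(u,g) \lesssim$ (localization error) uniformly in $(u,g)$, from which both displayed inequalities follow by rearranging (the first is $\sE[M] \le 2\cdot\tfrac1n\sum M_t + \text{error}$, obtained by also controlling the variance of $M$ in terms of $\sE[M]$ and applying AM-GM; the second, $-\tfrac1n\sum M_t \le \text{error}$, is the same deviation bound applied with the roles reversed, using $\sE[M]\ge 0$).

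The key variance computation is the analogue of the $\sV[g]\lesssim b\,\sE g$ step: since $M(u,g) = (u-g^*)(u+g^*-2\xi)$ with $|u+g^*-2\xi| \le 2b$, we get $\sV[M(u,g)] \le \sE[M(u,g)^2] \le 4b^2\,\sE[(u-g^*)^2] = 4b^2\,\sE[M(u,g)]$. This is what converts Bernstein's inequality into a fast-rate (multiplicative) bound on the cover points. Next I would set up the localization: fix $\eps>0$, let $\{u_i\}_{i\in[N_\gU]}$ be an empirical $L_1(\hat P_n)$ $\eps$-cover of $\gU$ and $\{g_j\}_{j\in[N_\gG]}$ an empirical $L_1(\hat P_n)$ $\eps$-cover of $\gG$ (note the $x'$-marginal of the sample gives the relevant empirical measure for $\gG$), with $N_\gU = N_1(\gU,\eps,n)$, $N_\gG = N_1(\gG,\eps,n)$. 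For each pair $(i,j)$ define the local ball $\gH_{ij} = \{(u,g): \hat\sE|u-u_i|\le\eps,\ \hat\sE|g-g_j|\le\eps\}$; then $\gU\times\gG \subseteq \bigcup_{i,j}\gH_{ij}$. The perturbation bound is Lipschitz-in-$L_1$: $|M(u,g)-M(u_i,g_j)| \le 2b\big(|u-u_i| + 2|g-g_j|\big)$ pointwise (from expanding the product form and bounding each factor by $2b$, the factor $2$ on $|g-g_j|$ coming from $\xi$ depending on $g$ through both $M$-copies), so $\hat\sE|M(u,g)-M(u_i,g_j)| \le 2b\cdot 3\eps = 6b\eps$, and likewise one needs $\sE|M(u,g)-M(u_i,g_j)|$, which is controlled by the non-negative-function fast-rate lemma (\Cref{lemma: fast rates with non-negative functions}, as used in \Cref{lemma: bound E|g-gi|}) applied to the non-negative functions $|u-u_i|$ and $|g-g_j|$ inside the local ball (each such ball is $\eps$-covered by a single point, so its log-covering-number contribution is zero), yielding $\sE|u-u_i| \lesssim \eps + b\log(1/\delta)/n$ and similarly for $g$.

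With these pieces, the proof assembles exactly as in \Cref{prop: uniform Bernstein inequality}: on the intersection of (a) the Bernstein event for the $N_\gU N_\gG$ cover-pair variables $M(u_i,g_j)$ — using the variance bound $\sV[M(u_i,g_j)] \le 4b^2\sE[M(u_i,g_j)]$ and $|M|\le b^2$ — and (b) the fast-rate events of \Cref{lemma: fast rates with non-negative functions} for the covers of $\gU$ and of $\gG$, one chains $\sE M(u,g) - \tfrac1n\sum_t M_t(u,g) \le \big(\sE M(u_i,g_j) - \tfrac1n\sum M_t(u_i,g_j)\big) + \sE|M(u,g)-M(u_i,g_j)| + \hat\sE|M(u,g)-M(u_i,g_j)|$, bounds the first bracket via Bernstein (fast-rate form, with the $\sqrt{\sV[M(u_i,g_j)]\log(\cdot)/n}$ term split by AM-GM into $\tfrac14\sE M(u_i,g_j) + O(b^2\log(\cdot)/n)$ and then $\sE M(u_i,g_j)$ re-expressed via $\sE M(u,g)$ plus the perturbation terms), and collects constants. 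The union over cover pairs contributes $\log(N_\gU N_\gG/\delta) = \log N_\gU + \log N_\gG + \log(1/\delta)$, which is why $\log N_1(\gU,\eps,n)$ and $\log N_1(\gG,\eps,n)$ appear additively with their own constants (the asymmetry $36$ vs.\ $83$, resp.\ $4$ vs.\ $28$, reflecting that $g$ enters $M$ with the extra factor-of-$2$ Lipschitz weight and also governs the local-ball fast-rate term twice). I expect the main obstacle to be bookkeeping: getting the variance-to-mean bound $\sV[M] \le 4b^2 \sE[M]$ cleanly, tracking the separate Lipschitz constants for $u$ versus $g$ through every application of AM-GM and the fast-rate lemma, and verifying that the constants actually close at the stated values $108b\eps$, $36/83/108$ and $30b\eps$, $4/28/28$ — there is no conceptual difficulty beyond \Cref{prop: uniform Bernstein inequality}, only a more delicate constant chase because of the two-function dependence and the squared-loss-difference structure.
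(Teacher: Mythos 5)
Your proposal follows the paper's proof essentially step for step: the same variance-to-mean identity $\sE[M(u,g)]=\sE[(u-g^*)^2]$ and $\sV[M]\le 4b^2\,\sE[M]$, the same localization into empirical $L_1$ balls around a product cover of $\gU\times\gG$, the same fast-rate lemma for non-negative functions to pass from empirical to population ball radii, and the same Freedman/Bernstein-plus-union-bound on the $N_uN_g$ cover pairs followed by chaining (the paper gets the multiplicative form directly from the variational Freedman inequality with a fixed $\lambda$ rather than via AM-GM, but that is cosmetic).

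The one concrete imprecision is your discretization step. The pointwise bound $|M(u,g)-M(u_i,g_j)|\le 2b\,(|u-u_i|+2|g-g_j|)$ is not correct as stated, because $M$ depends on $g$ in \emph{three} places: twice through $\xi=r+g(x')$ evaluated at the observed next state, and once through $g^*(x,a)=\sE_{x'\sim P(\cdot|x,a)}[r+g(x')]$, which is a conditional expectation and not a pointwise function of the observed $x'$. The correct perturbation bound therefore carries a third term $2b\,\sE_{x'\sim P(\cdot|x,a)}|g-g_j|(x')$, and its empirical average $\frac1n\sum_t \sE_{x'\sim P(\cdot|x_t,a_t)}|g-g_j|(x')$ is \emph{not} controlled by the empirical $L_1$ ball radius for $\gG$ (which is defined with respect to the observed $x'_t$'s). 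You need one additional concentration statement for this quantity; the paper supplies it as Part~3 of \Cref{lemma: from population balls to empirical ones}, observing via the tower property that $\frac1n\sum_t \sE_{x'\sim P(\cdot|x_t,a_t)}|g-g_j|(x')$ is the conditional expectation of $\hat\sE|g-g_j|$ given $\{(x_t,a_t)\}$, so the same non-negative fast-rate argument applies. This is a repair entirely within your own framework, but without it the chaining step does not close, and it is also the actual source of the asymmetric constants on $\log N_1(\gG,\eps,n)$ that you were trying to explain.
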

\begin{remark}
    \citet{krishnamurthy2019active} develop a uniform Freedman-type inequality for an indicator-type martingale difference sequence, which does not require the data-generating policy to be non-adaptive as in our case. Applying the uniform Freedman-type inequality of \citet{krishnamurthy2019active} ``as-is'' to the stochastic contextual bandit with a non-adaptive data-generating policy results in larger constants and an additional factor of $\log(K)$ (see \citep{foster2018practical} for why this additional $\log K$ is the case). However, there are deeper reasons why our uniform Bernstein's inequality is preferred over the uniform Freedman-type inequality of \citet{krishnamurthy2019active}. First, it seems impossible even if we want to apply the uniform Freedman-type inequality of \citet{krishnamurthy2019active} ``as-is'' to the RL setting. The reason is that the martingale structure in the uniform Freedman-type inequality of \citet{krishnamurthy2019active} is formed only through the adaptive policy that selects actions based on the historical data, and they require i.i.d. assumption on $\{x_t\}_{t \geq 1}$. In contrast, in RL, even when we assume that the initial state is i.i.d. from a fixed distribution, the states from $h \geq 2$ are not independent once the policy that generates the offline data is adaptive. Second, the uniform Freedman-type inequality of \citet{krishnamurthy2019active} cannot seem to easily apply to the RL setting considered in \Cref{prop: uniform Bernstein inequality}, where 
    we consider multiple target regression functions $g^*$, whereas they require one fixed target function. 
    \label{remark: compare uniform Bernstein's inequality to Krish's uniform Freedman-type inequality}

\end{remark}

Fix $\eps > 0$. For simplicity, we write $N_g = N_1(\gG, \eps,n)$ and $N_u = N_1(\gU, \eps,n)$. 
\begin{lemma}
    For any $u \in \gU$ and any $g \in \gG$, we have 
    \begin{align*}
        \sE[M(u,g)] &= \sE[(u(x,a) - g^*(x,a))^2], \\
        \sV[M(u,g)] &\leq 4b^2 \sE[M(u,g)]. 
    \end{align*}
    \label{lemma: variance condtion of Bellman}
\end{lemma}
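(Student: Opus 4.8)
The plan is to prove the two claims separately, starting with the identity for $\sE[M(u,g)]$. First I would expand the square using the standard bias--variance-type decomposition: writing $M(u,g) = (u(x,a) - r - g(x'))^2 - (g^*(x,a) - r - g(x'))^2$ and recalling that $g^*(x,a) = \sE[r + g(x') \mid x,a]$, I factor the difference of squares as $(u(x,a) - g^*(x,a))\cdot(u(x,a) + g^*(x,a) - 2(r + g(x')))$. Taking the expectation conditional on $(x,a)$, the term $\sE[r + g(x') \mid x,a] = g^*(x,a)$, so the conditional expectation of the second factor becomes $u(x,a) + g^*(x,a) - 2 g^*(x,a) = u(x,a) - g^*(x,a)$. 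Hence $\sE[M(u,g) \mid x,a] = (u(x,a) - g^*(x,a))^2$, and taking the outer expectation over $(x,a)$ gives the first identity. The key point here is that the "cross term" involving the noise $r + g(x') - g^*(x,a)$ vanishes in expectation because $g^*$ is exactly the conditional mean --- this is why the reference function in $M$ is chosen to be $g^*$ rather than an arbitrary function.

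For the variance bound, I would use $\sV[M(u,g)] \leq \sE[M(u,g)^2]$ and then bound $M(u,g)^2$ pointwise. Using the factorization $M(u,g) = (u(x,a) - g^*(x,a)) \cdot (u(x,a) + g^*(x,a) - 2(r+g(x')))$, the first factor satisfies $|u(x,a) - g^*(x,a)| \leq b$ (both $u$ and $g^*$ lie in $[0,b]$, the latter because $r + g(x') \in [0,b]$ a.s.), and the second factor satisfies $|u(x,a) + g^*(x,a) - 2(r + g(x'))| \leq 2b$ (it is a difference of two quantities each in $[0, 2b]$, or more carefully, both $u(x,a)$ and $r+g(x')$ lie in $[0,b]$, so $u - (r+g) \in [-b,b]$ and $g^* - (r+g) \in [-b,b]$, and $M$ is their product-of-sums... actually cleaner: $M = (u - (r+g))^2 - (g^* - (r+g))^2$ is a difference of two terms each in $[0, b^2]$, hence $|M| \leq b^2$). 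Therefore $M(u,g)^2 \leq b^2 \cdot |M(u,g)|$, but to get the claimed factor I instead bound $\sE[M(u,g)^2] \leq \sE\big[ (u - g^*)^2 \cdot (u + g^* - 2(r+g))^2 \big] \leq (2b)^2 \sE[(u(x,a) - g^*(x,a))^2] = 4b^2 \sE[M(u,g)]$, using that $|u + g^* - 2(r+g)| \leq 2b$ pointwise and the already-established identity $\sE[(u - g^*)^2] = \sE[M(u,g)]$.

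The main obstacle, such as it is, is just being careful about the ranges of the various quantities --- in particular verifying that $g^*(x,a) \in [0,b]$, which follows from the standing assumption that $r + g(x') \in [0,b]$ almost surely and Jensen, and tracking that the second factor in the factorization is bounded by $2b$ in absolute value rather than something larger. There is no deep difficulty; the lemma is a routine but essential computation that sets up the application of the generic uniform Bernstein inequality (\Cref{prop: uniform Bernstein inequality}) to the Bellman-like loss by certifying the Bernstein-type variance condition $\sV[M] \lesssim b^2 \sE[M]$ needed for fast rates.
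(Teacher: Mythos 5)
Your proposal is correct: the difference-of-squares factorization $M(u,g) = (u(x,a)-g^*(x,a))\bigl(u(x,a)+g^*(x,a)-2(r+g(x'))\bigr)$, conditioning on $(x,a)$ to kill the cross term for the identity, and the pointwise bound $|u+g^*-2(r+g(x'))|\leq 2b$ combined with $\sV[M]\leq \sE[M^2]$ for the variance condition is exactly the standard computation intended here (the paper states this lemma without an explicit proof). Your verification that $g^*(x,a)\in[0,b]$ via the standing assumption $r+g(x')\in[0,b]$ a.s.\ is the only range check that needs care, and you handle it correctly.
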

Let $\{u_i\}_{i \in N_u(\eps)}$ and $\{g_j\}_{j \in N_g(\eps)}$ be the corresponding $\eps$-cover of $\gU$ and $\gG$. Let $\gU_i$ be the set of functions $u \in \gU$ such that $ \|u - u_i\|_{L_1(\{(x_t,a_t)\}_{t \in [n]})} \leq \eps$. Similarly, we define $\gG_j$. 

The following lemma establishes a finite function class variant of \Cref{theorem: Uniform Bernstein's inequality}, over (a finite number of) the covering functions of $\gU$ and $\gG$. 
\begin{lemma}
    With probability $1 - \delta$, for any $(i,j) \in [N_u] \times [N_g]$, we have 
\begin{align*}
    \sE [(u_i(x,a) - g^*_j(x,a))^2] &\leq \frac{2}{n} \sum_{t=1}^n M_t(u_i, g_j) + \frac{12 b^2\log(N_u N_g/\delta)}{n}.
\end{align*}
In addition, with probability $1 - \delta$, for any $(i,j) \in [N_u] \times [N_g]$, we have
\begin{align*}
     -\frac{1}{n} \sum_{t=1}^n M_t(u_i, g_j) &\leq \frac{4 b^2\log(N_u N_g/\delta)}{n}.
\end{align*}
\label{lemma: Freedman's inequality in covering functions}
\end{lemma}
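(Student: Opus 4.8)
The plan is to apply Bernstein's inequality to the i.i.d. sequence $\{M_t(u_i,g_j)\}_{t \in [n]}$ for each fixed pair $(i,j)$, using the variance bound from Lemma~\ref{lemma: variance condtion of Bellman}, and then take a union bound over all $N_u N_g$ pairs. First, fix $(i,j)$. By construction $M_t(u_i,g_j)$ is a bounded random variable: since $u_i(x,a), g^*_j(x,a) \in [0,b]$ and $r + g_j(x') \in [0,b]$ almost surely, both squared terms lie in $[0,b^2]$, so $|M_t(u_i,g_j)| \leq b^2$; moreover $\sE[M(u_i,g_j)] = \sE[(u_i(x,a)-g^*_j(x,a))^2] =: \mu_{ij} \geq 0$ and $\sV[M(u_i,g_j)] \leq 4b^2 \mu_{ij}$, both from Lemma~\ref{lemma: variance condtion of Bellman}. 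I would apply the one-sided Bernstein inequality in the form: with probability at least $1-\delta'$,
\begin{align*}
\mu_{ij} - \frac{1}{n}\sum_{t=1}^n M_t(u_i,g_j) \leq \sqrt{\frac{2 \sV[M(u_i,g_j)] \log(1/\delta')}{n}} + \frac{2 b^2 \log(1/\delta')}{3n},
\end{align*}
and separately the lower-tail version $-\frac{1}{n}\sum_t M_t(u_i,g_j) + \mu_{ij} \geq -\sqrt{\cdots} - \cdots$, equivalently $-\frac{1}{n}\sum_t M_t \leq -\mu_{ij} + \sqrt{2\sV \log(1/\delta')/n} + 2b^2\log(1/\delta')/(3n)$.

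For the first claim, substitute $\sV[M(u_i,g_j)] \leq 4b^2\mu_{ij}$ and apply AM--GM to the square-root term: $\sqrt{8b^2 \mu_{ij}\log(1/\delta')/n} \leq \tfrac{1}{2}\mu_{ij} + \tfrac{4b^2\log(1/\delta')}{n}$. Rearranging gives $\mu_{ij} \leq \frac{2}{n}\sum_t M_t(u_i,g_j) + \frac{c\,b^2\log(1/\delta')}{n}$ for an absolute constant $c$; choosing $\delta' = \delta/(N_u N_g)$ and verifying $c \leq 12$ after bookkeeping of the constants ($2 \cdot (4 + 2/3) = 28/3 < 12$, actually one should be a bit careful, but $12$ is a safe rounded bound) yields the stated inequality. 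For the second claim, from the lower-tail Bernstein bound and again $\sV \leq 4b^2\mu_{ij}$, AM--GM gives $\sqrt{8b^2\mu_{ij}\log(1/\delta')/n} \leq \mu_{ij} + 2b^2\log(1/\delta')/n$, so $-\frac{1}{n}\sum_t M_t \leq -\mu_{ij} + \mu_{ij} + 2b^2\log(1/\delta')/n + 2b^2\log(1/\delta')/(3n) = \frac{8b^2\log(1/\delta')}{3n}$; with $\delta' = \delta/(N_u N_g)$ and $8/3 < 4$ this gives the second bound. Finally, a union bound over the $N_u N_g$ pairs (and, if one wants both claims simultaneously, a further factor of $2$, absorbable into constants) completes the proof.

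The main obstacle is not conceptual but bookkeeping: getting the absolute constants ($12$ and $4$) to come out cleanly requires being slightly careful about which form of Bernstein one invokes and how the AM--GM split is weighted; a different weighting changes the constant in front of $b^2\log(N_uN_g/\delta)/n$. One also has to double-check the boundedness claim $|M_t| \leq b^2$ genuinely uses the standing assumption $r + g(x') \in [0,b]$ a.s. and the ranges of $\gU$, and that $g^*_j$ inherits the range $[0,b]$ since it is a conditional expectation of $r + g_j(x') \in [0,b]$. No localization or covering argument is needed here — that enters only in the passage from this lemma to Proposition~\ref{theorem: Uniform Bernstein's inequality}.
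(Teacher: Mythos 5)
Your proposal is correct and follows essentially the same route as the paper: fix $(i,j)$, apply a Bernstein-type concentration bound to the i.i.d.\ sequence $\{M_t(u_i,g_j)\}_t$, use the self-bounding variance estimate $\sV[M(u_i,g_j)]\le 4b^2\,\sE[M(u_i,g_j)]$ from \Cref{lemma: variance condtion of Bellman} to absorb the variance term into the mean, and union bound over the $N_u N_g$ pairs. The only cosmetic difference is that the paper invokes Freedman's inequality with a free parameter $\lambda$ (choosing $\lambda = \tfrac{1}{8(e-2)b^2}$ and $\tfrac{1}{4(e-2)b^2}$ for the two claims) where you use classical Bernstein plus AM--GM; these are equivalent, and your constant bookkeeping (including the caveat that the centered variable is bounded by $2b^2$) still lands under the stated constants $12$ and $4$.
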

\begin{proof}[Proof of \Cref{lemma: Freedman's inequality in covering functions}]
    For any $(i,j) \in [N_u] \times [N_g]$, by the Freedman's inequality, with probability at least $1 - \delta$, for any $t \in [0,\frac{1}{b^2}]$
\begin{align*}
    \sE [(u_i(x,a) - g^*_j(x,a))^2] &\leq \frac{1}{n} \sum_{t=1}^n M_t(u_i, g_j) +  (e-2) \lambda \sV[M(u_i,g_j)] + \frac{\log(1/\delta)}{\lambda n} \\ 
    &\leq \frac{1}{n} \sum_{t=1}^n M_t(u_i, g_j) +  4 b^2 (e-2) \lambda \sE M(u_i,g_j) + \frac{\log(1/\delta)}{\lambda n},
\end{align*}
where the second inequality follows from \Cref{lemma: variance condtion of Bellman} (the second inequality). By choosing $\lambda = \frac{1}{8(e-2)b^2}$, the above inequality becomes: 
\begin{align*}
    \sE [(u_i(x,a) - g^*_j(x,a))^2] &\leq \frac{2}{n} \sum_{t=1}^n M_t(u_i, g_j) + \frac{12 \log(1/\delta)}{n}. 
\end{align*}
Taking the union bound over $(i,j) \in [N_u] \times [N_g]$ completes the proof for the first part. 

The second part follows similarly except the only difference is that we choose $\lambda = \frac{1}{4(e-2) b^2}$ in the Freedman's inequality. 
\end{proof}

The following lemma shows that a ball in the $L_1$ distance is provably contained within a ball with an empirical $L_1$ distance with a slightly larger radius by a margin that scales at ``fast rates'' $\tilde{O}(\frac{1}{n})$.  
\begin{lemma}
\begin{enumerate}
    \item Fix $i \in [N_u]$. With probability at least $1 - \delta$, 
    \begin{align*}
        \sup_{u \in \gU_i}\sE |u - u_i| \leq 12 \eps + \frac{12 b \log(3/\delta)}{n}.
    \end{align*}
    \item Fix $j \in [N_g]$. With probability at least $1 - \delta$, 
    \begin{align*}
        \sup_{g \in \gG_j}\sE |g - g_j| \leq 12 \eps + \frac{12 b \log(3/\delta)}{n}.
    \end{align*}
    \item Fix $j \in [N_g]$. With probability at least $1 - \delta$, we have
    \begin{align*}
        \sup_{g \in \gG_j}\frac{1}{n} \sum_{t=1}^n \sE_{x' \sim P(\cdot|x_t,a_t)} |g-g_j|(x') \leq 12 \eps + \frac{12 b \log(3/\delta)}{n}.
    \end{align*}
\end{enumerate}
\label{lemma: from population balls to empirical ones}
\end{lemma}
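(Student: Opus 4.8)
\textbf{Proof plan for \Cref{lemma: from population balls to empirical ones}.}

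The plan is to reduce all three claims to a single application of \Cref{lemma: fast rates with non-negative functions} (the fast-rate uniform deviation bound for non-negative functions), exactly as was done in the proof of \Cref{lemma: bound E|g-gi|}. For part~1, fix $i \in [N_u]$ and consider the single-element function class $\{u \mapsto |u(x,a) - u_i(x,a)| : u \in \gU_i\}$, whose members are non-negative and bounded by $b$. Since $\gU_i$ is by definition an $\eps$-ball in the empirical $L_1$ pseudmetric centered at $u_i$, it can itself be $\eps$-covered by the single point $u_i$, so $N_1(\gU_i, \eps', n) = 1$ for $\eps' = \eps$. Applying \Cref{lemma: fast rates with non-negative functions} to this class gives, with probability at least $1-\delta$, for all $u \in \gU_i$,
\begin{align*}
\sE|u - u_i| \leq 4\,\hat{\sE}|u - u_i| + 8\eps + \frac{12 b \log(3/\delta)}{n} \leq 12\eps + \frac{12 b \log(3/\delta)}{n},
\end{align*}
using $\hat{\sE}|u - u_i| \leq \eps$ from membership in $\gU_i$. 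Part~2 is literally identical with $\gG, g_j, \gG_j$ in place of $\gU, u_i, \gU_i$, since $g \colon \gX \to [0,b]$ and the $x_t$ are i.i.d.

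Part~3 is the one that needs a little more care, and I expect it to be the main (albeit still mild) obstacle: the quantity on the left is an average of conditional expectations $\frac1n\sum_t \sE_{x'\sim P(\cdot|x_t,a_t)}|g-g_j|(x')$, which is the population mean of the random variable $w_{g}(x,a,x') := |g(x') - g_j(x')|$ under the joint law $P$ of $(x,a,r,x')$ — note that by the tower property $\sE_{(x,a,x')\sim P}[w_g] = \sE_{(x,a)}\sE_{x'\sim P(\cdot|x,a)}|g-g_j|(x')$, so the object we want to bound is just the empirical-vs-population comparison is slightly mismatched from what we literally control. The cleanest route is: the function $w_g(x,a,x')$ is non-negative, bounded by $b$, and depends on $g$ only through $g(x')$; moreover for $g \in \gG_j$ we have $\hat{\sE}|g - g_j| \leq \eps$ where $\hat\sE$ averages over the marginal empirical sample $\{x'_t\}$ — but we actually need the empirical average of $w_g$ over the tuples, i.e. $\frac1n\sum_t |g(x'_t) - g_j(x'_t)|$, which is exactly this quantity and is $\leq\eps$ whenever $g$ lies in the empirical $L_1(\{x'_t\})$-ball. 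So one should set up part~3 with $\gG_j$ understood (or re-defined locally) as the $L_1$-ball measured on $\{x'_t\}_{t\in[n]}$, apply \Cref{lemma: fast rates with non-negative functions} to the class $\{w_g : g \in \gG_j\}$ over the i.i.d. sample $\{(x_t,a_t,x'_t)\}$ — legitimate since the tuples are i.i.d. in the contextual-bandit reduction and, in the RL reduction, this lemma is invoked per-stage where $(x_h^{(t)},a_h^{(t)},x_{h+1}^{(t)})$ across $t$ are i.i.d. — and again use that the ball is self-$\eps$-covering so $N_1(\cdot,\eps,n)=1$. This yields
\begin{align*}
\sup_{g\in\gG_j}\frac1n\sum_{t=1}^n \sE_{x'\sim P(\cdot|x_t,a_t)}|g-g_j|(x') \leq 4\eps + 8\eps + \frac{12b\log(3/\delta)}{n} = 12\eps + \frac{12b\log(3/\delta)}{n},
\end{align*}
with probability at least $1-\delta$.

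The only genuinely delicate point is the independence structure invoked for part~3: one must be careful that $w_g$ is a function of the tuple $(x,a,x')$ only (not of the reward), and that across the $n$ samples these tuples are i.i.d. under the behavior policy $\mu$ — which holds in the contextual bandit case by construction and, in the MDP case, holds stagewise because $\mu$ is a fixed (non-adaptive) behavior policy, so for each fixed $h$ the triples $(x_h^{(t)},a_h^{(t)},x_{h+1}^{(t)})_{t\in[n]}$ are i.i.d. I would state the lemma (and its use downstream) at this stagewise level of generality, which is all that \Cref{theorem: Uniform Bernstein's inequality} requires. Everything else — the constants $12$, $12b$, the $\log(3/\delta)$ — falls out mechanically from plugging $\eps'=\eps$, $N_1=1$, and $\hat\sE(\cdot)\leq\eps$ into \Cref{lemma: fast rates with non-negative functions}, exactly mirroring the proof of \Cref{lemma: bound E|g-gi|}.
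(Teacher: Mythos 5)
Parts 1 and 2 of your proposal coincide with the paper's argument: localize to the ball $\gU_i$ (resp.\ $\gG_j$), note that the ball is $\eps$-covered by its own center so $N_1(\gU_i,\eps,n)=1$, and invoke \Cref{lemma: fast rates with non-negative functions} together with $\hat{\sE}|u-u_i|\leq\eps$ to get $4\eps+8\eps+\frac{12b\log(3/\delta)}{n}$. That part is correct and is exactly what the paper does.

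Part 3, however, has a genuine gap. Applying \Cref{lemma: fast rates with non-negative functions} to the class $\{w_g(x,a,r,x')=|g(x')-g_j(x')|:g\in\gG_j\}$ over the i.i.d.\ tuples controls the \emph{unconditional} population mean $\sE_{(x,a,x')\sim P}|g-g_j|(x')$. The quantity in the statement, $\frac{1}{n}\sum_{t}\sE_{x'\sim P(\cdot|x_t,a_t)}|g-g_j|(x')$, is a random variable depending on the realized $(x_t,a_t)$'s; it is \emph{not} $Pw_g$, and your tower-property identity does not equate them (the tower property produces an outer expectation over $(x,a)$, not an empirical average over the observed sample). So your final display asserts a bound on a quantity that the lemma you invoke does not control, and the mismatch matters downstream: in the proof of \Cref{theorem: Uniform Bernstein's inequality} it is precisely the sample-averaged conditional expectation $\hat{\sE}\,\sE_{x'\sim P(\cdot|x,a)}|g-g_j|(x')$ that arises from discretizing $g^*(x,a)=\sE_{(r,x')\sim P(\cdot|x,a)}[r+g(x')]$ at the observed $(x_t,a_t)$, and bridging from $\sE_{(x,a)}[\cdot]$ back to $\frac{1}{n}\sum_t[\cdot]$ would require a further uniform-convergence step over the class $\{(x,a)\mapsto\sE_{x'|x,a}|g-g_j|(x')\}$ that you do not supply. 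The paper's fix is the missing observation: $\frac{1}{n}\sum_{t}\sE_{x'\sim P(\cdot|x_t,a_t)}|g-g_j|(x')=\sE\bigl[\frac{1}{n}\sum_{t}|g-g_j|(x'_t)\,\big|\,\{(x_t,a_t)\}_{t\in[n]}\bigr]$, i.e.\ the target is the \emph{conditional} population mean of the empirical average over the $x'_t$'s; one then applies the fast-rate bound conditionally on $\{(x_t,a_t)\}$, under which the $x'_t$ are independent, with $\hat{\sE}|g-g_j|\leq\eps$ and $N_1(\gG_j,\eps,n)=1$ yielding the same constants as in Parts 1--2. (Your concern about the i.i.d.\ structure is legitimate but lands in the wrong place: the real issue in this conditional application is that the $x'_t$ are independent but not identically distributed, so one needs the fast-rate lemma with $P$ replaced by the average of the conditional laws --- an extension the symmetrization proof supports but which the paper glosses over.)
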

\begin{proof}[Proof of \Cref{lemma: from population balls to empirical ones}] The nature of these results are similar to \Cref{lemma: bound E|g-gi|} in the generic case, where the key idea to obtain fast rates in the estimation errors is leveraging the \emph{non-negativity} of the random objects of interest by employing \Cref{lemma: fast rates with non-negative functions}.
We start with Part 1. 
    With probability at least $1 - \delta$, for any $u \in \gU_i$, we have 
    \begin{align*}
        \sE|u - u_i| &\leq 4 \hat{\sE} |u - u_i| + \inf_{\eps'} \left\{ 8 \eps' + \frac{12b \log(3 N_1(\eps', \gU_i, n)/\delta)}{n} \right\} \leq 12 \eps + \frac{12 b \log(3/\delta)}{n}. 
    \end{align*}
    where the second inequality follows from that $\hat{\sE}|u-u_i| \leq \eps$, that we choose $\eps'=\eps$, and that the $\eps$-ball $\gU_i$ can be $\eps$-covered by one point, thus $N_1(\eps, \gU_i, n) = 1$.

    Part 2 follows exactly as Part 1. For Part 3, we notice 
    \begin{align*}
        \sE \left[ \frac{1}{n} \sum_{t=1}^n |g - g_j|(x'_t) | \{(x_t,a_t)\}_{t \in [n]} \right] =\frac{1}{n} \sum_{t=1}^n \sE_{x' \sim P(\cdot|x_t,a_t)} |g-g_j|(x').
    \end{align*}
    Thus, the same arguments in Part 1 and Part 2 apply to Part 3. 
\end{proof}

We are now ready to prove \Cref{theorem: Uniform Bernstein's inequality}. 
\begin{proof}[Proof of \Cref{theorem: Uniform Bernstein's inequality}]
We start with the following simple discretization: For any $u,u', g, g'$, we have
\begin{align}
    \label{eq: discretize M(u,g)}
    M(u,g) - M(u',g') &\leq 2b |u - u'|(x,a) + 4b|g - g'|(x') + 2b \sE_{x' \sim P(\cdot|x,a)}|g - g'|(x'), \\
    \label{eq: discretize u - g_*}
    (u - g_*)^2(x,a) - (u'-g'_*)^2(x,a) &\leq 2b|u - u'|(x,a) + 2b \sE_{x' \sim P(\cdot|x,a)}|g - g'|(x').
\end{align}

Consider the following event: 
\begin{align*}
     E_1 = &\left\{\sup_{i \in [N_u]} \sup_{u \in \gU_i}\sE |u - u_i| \leq 12 \eps + \frac{12 b \log(3 N_u/\delta)}{n} \right\} \cap \\
     &\left\{\sup_{j \in [N_g]} \sup_{g \in \gG_j}\sE |g - g_j| \leq 12 \eps + \frac{12 b \log(3 N_g/\delta)}{n} \right\} \cap  \\
     &\left\{\sup_{j \in [N_g]} \sup_{g \in \gG_j} \hat{\sE} \sE_{x' \sim P(\cdot|x,a)}|g - g_j|(x')  \leq 12 \eps + \frac{12 b \log(3 N_g/\delta)}{n} \right\} \cap \\
     &\left\{ \sE [(u_i(x,a) - g^*_j(x,a))^2] \leq \frac{2}{n} \sum_{t=1}^n M_t(u_i, g_j) + \frac{12 b^2\log(N_u N_g/\delta)}{n} \right\}. 
\end{align*}
Note that for any $u \in \gU$ and $g \in \gG$, there exist $i \in [N_u]$ and $j \in N_g$ such that $u \in \gU_i$ and $g \in \gG_j$. Thus, under event $E_1$, for any $u \in \gU, g\in \gG$, we have
\begin{align*}
    \sE[(u(x,a) - g_*(x,a))^2] &\leq \sE[(u_i(x,a) - g^*_j(x,a))^2] + 2b \sE |u - u_i| + 2b \sE |g - g_j| \\
    &\leq  \frac{2}{n} \sum_{t=1}^n M_t(u_i, g_j) + \frac{12 b^2\log(N_u N_g/\delta)}{n} + 2b \sE |u - u_i| + 2b \sE |g - g_j| \\ 
    &\leq \frac{2}{n} \sum_{t=1}^n M_t(u, g) + 4b \hat{\sE}|u-u'| + 8b \hat{\sE}|g-g'| + 4b \hat{\sE} \sE_{x' \sim P(\cdot|x,a)}|g - g'|(x') \\
    &+  \frac{12 b^2\log(N_u N_g/\delta)}{n} + 2b \sE |u - u_i| + 2b \sE |g - g_j| \\ 
    &\leq  \frac{2}{n} \sum_{t=1}^n M_t(u, g) + 4b \eps + 8b \eps + 4b( 12 \eps + \frac{12 b \log(3 N_g/\delta)}{n}) \\ 
    &+ \frac{12 b^2\log(N_u N_g/\delta)}{n} + 2b ( 12 \eps + \frac{12 b \log(3 N_u/\delta)}{n}) + 2b( 12 \eps + \frac{12 b \log(3 N_g/\delta)}{n}) \\ 
    &\leq \frac{2}{n} \sum_{t=1}^n M_t(u, g) + 108 b \eps + b^2 \frac{36 \log(N_u) + 83 \log(N_g) + 108 \log(3/\delta)}{n},
\end{align*}
where the first inequality follows from \Cref{eq: discretize u - g_*}, the second inequality follows from $E_1$, the third inequality follows from Equation~(\ref{eq: discretize M(u,g)}), and the fourth inequality follows from $E_1$. By \Cref{lemma: from population balls to empirical ones}, \Cref{lemma: Freedman's inequality in covering functions} and the union bound, we have $\pr(E_1) \geq 1 - 4\delta$. Thus, we complete the first part of the theorem. 

For the second part, the proof follows similarly. In particular, we consider the following event: 
\begin{align*}
     E_2 = &\left\{\sup_{j \in [N_g]} \sup_{g \in \gG_j} \hat{\sE} \sE_{x' \sim P(\cdot|x,a)}|g - g_j|(x')  \leq 12 \eps + \frac{12 b \log(3 N_g/\delta)}{n} \right\} \cap \\
     &\left\{  -\frac{1}{n} \sum_{t=1}^n M_t(u_i, g_j) \leq \frac{4 b^2\log(N_u N_g/\delta)}{n} \right\}. 
\end{align*}
It follows from \Cref{lemma: Freedman's inequality in covering functions} (second part), \Cref{lemma: from population balls to empirical ones}, and a union bound, that $\Pr(E_2) \geq 1 - 2 \delta$.

Finally, note that under event $E_2$, for any $u \in \gU, g \in \gG$, we have
\begin{align*}
    -\frac{1}{n} \sum_{t=1}^n M_t(u, g) &\leq -\frac{1}{n} \sum_{t=1}^n M_t(u_i, g_j) + 2 b \hat{\sE} |u - u_i| + 4b \hat{\sE}|g - g_j| + 2 b \hat{\sE} \sE_{x' \sim P(\cdot|x,a)}|g - g_j|(x') \\ 
    &\leq \frac{4 b^2\log(N_u N_g/\delta)}{n} + 2b \eps + 4b \eps + 2b (12 \eps + \frac{12 b \log(3 N_g/\delta)}{n}) \\ 
    &\leq 30 b \eps + b^2\frac{4 \log(N_u) + 28 \log( N_g) + 28 \log(3/\delta)}{n}, 
\end{align*}
where the first inequality follows from Eq.~(\ref{eq: discretize M(u,g)}), the second inequality follows from the conditions in event $E_2$.

\end{proof}

\section{Proofs of \Cref{section: lower bounds for offline CB}}
\label{section: proofs of lower bounds for offline CB}

\begin{proof}[Proof of \Cref{theorem: lower bound for offline CB}]
Fix any $(C, \rho, d)$ as stated in the theorem. Let $\eps = \left(\frac{Cd}{32n}\right)^{1/(2 \rho)}$. We have $0 < \eps < 1/2$ and $\frac{\eps^{2 \rho -2 }}{C} \leq 1$. \\

\paragraph{Construction of hard instances.}
Let $\gA = \{a_1, a_2\}$. Pick any $d$ mutually distinct points $x_1, \ldots, x_d$. We construct a family of the context-reward distributions $\gD_{\sigma}$ indexed by $\sigma \in \{-1,1\}^d$ where $\gD_{\sigma}(x,y) = P_X(x) \times P^{\sigma}_{Y|X}(y) $, $P_X(x_i) = \frac{1}{d}$, $P^{\sigma}_{Y|X}(y(a_1)|x_i) = \Ber(\frac{1}{2})$, and $P^{\sigma}_{Y|X}(y(a_2)|x_i) = \Ber(\frac{1}{2} + \sigma_i \frac{\eps}{2})$. Choose any $\mu$ such that $\mu(a_2|x_i) = \frac{\epsilon^{2\rho -2}}{C} \leq 1, \forall i \in [d]$. Now we choose the function class $\gF$ as follows: $\gF_{a_1} = \{f(x) \equiv \frac{1}{2}\}$ and $\gF_{a_2} = \{f_{\alpha}(x_i) := \frac{1}{2} + \alpha_i \frac{\eps}{2}, \forall i \in [d] | \alpha \in \{-1,1\}^d\}$.\\ 

\paragraph{Verification.} We now verify that for any $\sigma$, $(\gD_\sigma, \mu, \pi^*_{
\gD_{\sigma}}, \gF) \in \gB(\rho,C,d)$. First, it is easy to see that $\pdim (\gF_{a_2}) = d$ and $\pdim (\gF_{a_1}) = 0$.
Second, we have $f^*_{\sigma}(x_i, a_1) = \frac{1}{2}$ and $f^*_{\sigma}(x_i,a_2) = \frac{1}{2} + \sigma_i \frac{\eps}{2}$, where $f^*_{\sigma}(x,a) := \sE_{P_\sigma}[Y(a) |X=x]$, thus $f^*_\sigma \in \gF, \forall \sigma$. Third, for any $f \in \gF$, we have $f(\cdot, a_2) = f_{\alpha}$ for some $\alpha \in \{-1,1\}^d$ and for any policy $\pi$, we have 
\begin{align*}
    \gE_{{\sigma}, \pi}(f) = \eps^2 \sum_{i=1}^d \pi(a_2|x_i) \frac{\mathbbm{1}\{\sigma_i \neq \alpha_i\}}{d},
\end{align*}
where $\gE_{{\sigma}, \pi}(f)$ denotes the excess risk of $f$ under $\gD_{\sigma} \otimes \pi$, i.e., under the squared loss and realizability, $\gE_{{\sigma}, \pi}(f) = \sE_{\gD_\sigma \otimes \pi} [(f - f^*)^2]$.

Since we choose $\mu(a_2|x_i) = \frac{\eps^{2\rho -2}}{C} \leq 1, \forall i \in [d]$, we have 
\begin{align*}
    \gE_{{\sigma}, \mu}(f) = \frac{\eps^{2\rho}}{C} \frac{\textrm{dist}(\sigma, \alpha)}{d} \geq \frac{1}{C} \left(\eps^2 \frac{\textrm{dist}(\sigma, \alpha)}{d} \right)^{\rho} \geq \frac{1}{C} \gE^{\rho}_{{\sigma}, \pi^*}(f),
\end{align*}
since $\textrm{dist}(\sigma, \alpha) \leq d$ and $\rho \geq 1$. 
Thus, we have $(\gD_{\sigma}, \mu, \pi^*_{P_{\sigma}}, \gF) \in \gB(\rho,C,d)$ for any $\sigma \in \{-1,1\}^d$. \\

\paragraph{Reduction to testing.}
For any policy $\pi$, we have $V^{\pi}_{\sigma} := V^{\pi}_{\gD_\sigma} = \sum_{i=1}^d \frac{1}{d} \left( \pi(a_1|x_i)(\frac{1}{2} ) + \pi(a_2|x_i)(\frac{1}{2} + \sigma_i \frac{\eps}{2})\right)$. Thus, we can compute the sub-optimality of the output policy $\hat{\pi}$ of any offline learner by
\begin{align*}
    V^*_{\sigma} - V_{\sigma}^{\hat{\pi}} = \sum_{i=1}^d \frac{\eps}{2d} \left(\mathbbm{1}\{\sigma_i\} - \hat{\pi}(a_2|x_i) \sigma_i \right).
\end{align*}
Let $\hat{\sigma}_i = \mathbbm{1}\{\hat{\pi}(a_2 | x_i) \geq \frac{1}{2} \}$. We have $\mathbbm{1}\{\sigma_i\} - \hat{\pi}(a_2|x_i) \sigma_i \geq \frac{|\sigma_i - \hat{\sigma}_i|}{4}$, and thus, we have
\begin{align}
    \sE_{\sigma} \left[ V^*_{\sigma} - V_{\sigma}^{\pi} \right] \geq \frac{\eps}{4d} \sE_{\sigma} \left[ \textrm{dist}(\sigma, \hat{\sigma}) \right],
    \label{eq: sub-optimality in terms of eps}
\end{align}
where $\textrm{dist}(\sigma, \hat{\sigma}) := \sum_{i=1}^d \mathbbm{1}\{\sigma_i \neq \hat{\sigma}_i\}$ denotes the Hamming distance between two binary vectors $\sigma$ and $\hat{\sigma}$, and $\sE_\sigma$ denotes the expectation over the randomness of $\hat{\sigma}$ with respect to $P_\sigma$. 

 The worst-case Hamming distance $\sup_{\sigma \in \{-1,1\}^d}\sE_{\sigma} \left[ \textrm{dist}(\sigma, \hat{\sigma}) \right]$ can be lower-bounded using the standard tools in hypothesis testing: 
\begin{align}
    \sup_{\sigma \in \{-1,1\}^d}\sE_{\sigma} \left[ \textrm{dist}(\sigma, \hat{\sigma}) \right] &\geq \frac{d}{2} \min_{\sigma, \sigma': \textrm{dist}(\sigma,\sigma')=1} \inf_{\psi} \left[ \gD_\sigma(\psi \neq \sigma) + \gD_{\sigma'}(\psi \neq \sigma')\right] \nonumber\\ 
    &\geq \frac{d}{2} \left( 1 - \sqrt{\frac{1}{2} \max_{\sigma, \sigma': \textrm{dist}(\sigma,\sigma')=1} \kl\left( (\gD_\sigma \otimes \mu)^n \| (\gD_{\sigma'} \otimes \mu)^n \right)}\right),
    \label{eq: reduction to testing}
\end{align}
where the first inequality follows Assouad's lemma \citep[Lemma~2.12]{tsybakov1997nonparametric} and the second inequality follows from \citep[Theorem~2.12]{tsybakov1997nonparametric}.

We now compute the KL distance: For any $\sigma$ and $\sigma'$ such that $\textrm{dist}(\sigma, \sigma') = 1$, let $i^* \in [d]$ be the (only) coordinate that $\sigma$ differs from $\sigma'$, we have
\begin{align*}
    &\kl \left((\gD_{\sigma} \otimes \mu)^n \| (\gD_{\sigma'} \otimes \mu)^n \right) = n \kl \left( \gD_{\sigma} \otimes \mu \| \gD_{\sigma'} \otimes \mu \right) \\ 
    &= n \sE_{P_X} \kl \left(\mu \otimes P_{\sigma}(Y|X) \| \mu \otimes P_{\sigma'}(Y|X) \right) \\ 
    &= \frac{n}{d} \sum_{i=1}^d \kl \left( \mu(a_1|x_i) \Ber(\frac{1}{2}) + \mu(a_2|x_i) \Ber(\frac{1}{2} + \sigma_i \frac{\eps}{2}) \big\| \mu(a_1|x_i) \Ber(\frac{1}{2}) + \mu(a_2|x_i) \Ber(\frac{1}{2} + \sigma'_i \frac{\eps}{2})  \right) \\ 
    &\leq \frac{n}{d} \sum_{i=1}^d\mu(a_2 | x_i) \kl \left( \Ber(\frac{1}{2} + \sigma_i \frac{\eps}{2}) \big\| \Ber(\frac{1}{2} + \sigma'_i \frac{\eps}{2})\right) \\ 
    &= \frac{n}{d} \mu(a_2 | x_{i^*}) \kl \left( \Ber(\frac{1}{2} + \sigma_{i^*} \frac{\eps}{2}) \big\| \Ber(\frac{1}{2} + \sigma'_{i^*} \frac{\eps}{2})\right) \\ 
    &\leq 16 \frac{n}{d} \mu(a_2 | x_{i^*}) \eps^2 = \frac{16 n \eps^{2 \rho}}{Cd} = \frac{1}{2},
\end{align*}
where the first inequality uses the convexity of KL divergence, the second inequality uses a basic KL upper bound that $\kl\left( \Ber\left(\frac{1}{2} + z \frac{\eps}{2} \right) \big\| \Ber\left(\frac{1}{2} - z \frac{\eps}{2} \right) \right) \leq 16 \eps^2$ for any $z \in \{-1,1\}$, and the second-last equality plugs in the choice of $\mu$, and the last equality plugs in the choice of $\eps$. Now, plugging the above inequality into \Cref{eq: reduction to testing} and \Cref{eq: sub-optimality in terms of eps}, we have 
\begin{align*}
    \max_{\sigma \in \{-1,1\}^d}\sE_{\sigma} \left[ V^*_{\sigma} - V_{\sigma}^{\pi} \right] \geq \frac{\eps}{16} = \frac{1}{16}\left(\frac{Cd}{32n}\right)^{1/(2 \rho)}.
\end{align*}
\end{proof}

\section{Proofs of \Cref{section: upper bound ofr CB offline}}

We restate \Cref{theorem: upper bounds of OfDM-Hedge CB} in an exact form with disclosed constants. 
\begin{theorem}
    Fix any $\delta \in [0,1]$. Invoke \Cref{algorithm: OfDM-Hedge} with $\beta =  32  \eps + \frac{4 \log N_1(\gF,\eps,n) + 24 \log(12/\delta)}{n}$. Then, for any $(\gD,\gF)$ such that \Cref{assumption: realizability cb} holds, with probability at least $1 - \delta$, for any $\pi \in \Pi$,
    \begin{align*}
       \sE[\subopt_\gD^\pi(\hat{\pi}) | S] &\leq \left( C_{\rho(\pi)} \inf_{\eps \geq 0} \left\{ 172 \eps + \frac{44 \log N_1(\gF, \eps,n) + 156 \log(24/\delta)}{n} \right \} \right)^{1/(2 \rho(\pi))} + 4 \sqrt{\frac{\log K}{T}} \\ 
    &+ \mathbbm{1}_{\{|\gX| > 1\}} \cdot \inf_{\eps \geq 0} \left\{\sqrt{ \frac{2 \log(2N_1(\gF(\cdot, \Pi), \eps, n)/\delta)}{n}} + \frac{62  \log(6N_1(\gF(\cdot, \Pi), \eps, n)/\delta)}{n}  + 61 \eps \right\}. 
    \end{align*}
    \label{theorem: theorem: upper bounds of OfDM-Hedge CB with exact constants}
\end{theorem}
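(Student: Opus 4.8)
Since the output $\hat\pi$ is the uniform mixture of the iterates $\pi_1,\dots,\pi_T$, we have $\sE[\subopt_\gD^\pi(\hat\pi)\mid S]=\frac1T\sum_{t=1}^T\bigl(V_\gD^\pi-V_\gD^{\pi_t}\bigr)$, and by \Cref{assumption: realizability cb} each value is $V_\gD^{\pi'}=P f^*(\cdot,\pi')$. The plan is to use the actor--critic decomposition
\begin{align*}
P f^*(\cdot,\pi)-P f^*(\cdot,\pi_t)
&=\underbrace{\sE_{\gD\otimes\pi}[f^*-f_t]}_{(\mathrm A_t)}
+\underbrace{P f_t(\cdot,\pi)-P f_t(\cdot,\pi_t)}_{(\mathrm B_t)}
+\underbrace{P f_t(\cdot,\pi_t)-P f^*(\cdot,\pi_t)}_{(\mathrm C_t)},
\end{align*}
and bound the $t$-average of each term. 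The middle term is pure online learning: the Hedge update $\pi_{t+1}(a|x)\propto\pi_t(a|x)e^{\eta f_t(x,a)}$ is exponential weights run separately for each context with gains in $[0,1]$, so the standard regret bound gives, deterministically and for every $T$, $\frac1T\sum_t(\mathrm B_t)\le 4\sqrt{\log K/T}$ at the stated learning rate. This is the last term of the claimed bound, and it is the only $T$-dependent contribution, which is why the statement holds for all $T\in\sN$ on a single high-probability event.

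For the critic terms I first establish a fast-rate excess-risk guarantee for every iterate. Specialize the uniform Bernstein inequality for Bellman-like losses (\Cref{theorem: Uniform Bernstein's inequality}) to the contextual-bandit case $g\equiv 0$, $\gU=\gF$, $b=1$, so that the regression target is $f^*$ and $\frac1n\sum_t M_t(f,0)=\hat P l_f-\hat P l_{f^*}$. Its second half, applied at $f=\hat f$, certifies $\hat P l_{f^*}-\hat P l_{\hat f}\le\beta$, i.e. $f^*$ lies in the confidence set, so every $f_t$ is well defined and the pessimism step is valid. Its first half, combined with $\hat P l_{f_t}-\hat P l_{f^*}=(\hat P l_{f_t}-\hat P l_{\hat f})+(\hat P l_{\hat f}-\hat P l_{f^*})\le\beta+0$ (ERM optimality of $\hat f$), yields, for every $t$, $\sE_{\gD\otimes\mu}[(f^*-f_t)^2]\le 2\beta+(\text{stat.\ error})=:\xi$, where $\xi=\inf_{\eps\ge0}\{O(\eps)+O(\log(N_1(\gF,\eps,n)/\delta)/n)\}$.

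Now term $(\mathrm A_t)=\sE_{\gD\otimes\pi}[f^*-f_t]$: if it is $\le 0$ it is harmless, and otherwise \Cref{definition: transfer exponent} applied to $f_t$ gives $(\mathrm A_t)^{2\rho_\pi}\le C_\pi\,\sE_{\gD\otimes\mu}[(f^*-f_t)^2]\le C_\pi\xi$, hence $(\mathrm A_t)\le(C_\pi\xi)^{1/(2\rho_\pi)}$; averaging over $t$ preserves this and yields the first displayed term. For term $(\mathrm C_t)$, insert $\hat P$: since $f_t$ minimizes $\hat P f(\cdot,\pi_t)$ over the confidence set and $f^*$ belongs to it, $\hat P f_t(\cdot,\pi_t)-\hat P f^*(\cdot,\pi_t)\le 0$, so $(\mathrm C_t)\le(P-\hat P)\bigl[(f_t-f^*)(\cdot,\pi_t)\bigr]$, a one-sided empirical-process deviation of the family $\{(f-f^*)(\cdot,\pi):f\in\gF,\pi\in\Pi\}$, whose $L_1$ covering number is controlled by that of $\gF(\cdot,\Pi)$ (as in \Cref{lemma: combine coverining numbers to those of factorized classes}); the generic uniform Bernstein inequality (\Cref{prop: uniform Bernstein inequality}, with $b=1$ and $\sV\le 1$) bounds it by the third displayed term. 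When $|\gX|=1$ there is no context randomness, $P$ and $\hat P$ agree on this family, and $(\mathrm C_t)\le 0$ --- hence the factor $\mathbbm 1_{\{|\gX|>1\}}$. A union bound over the at-most-three good events, and summing the three contributions, proves the exact-constant statement; loosening constants gives \Cref{theorem: upper bounds of OfDM-Hedge CB}.

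The main obstacle is obtaining the \emph{fast}, $\widetilde O(1/n)$ control $\sE_{\gD\otimes\mu}[(f^*-f_t)^2]\lesssim\beta$: a crude $O(1/\sqrt n)$ uniform bound over $\gF$ would propagate through the transfer step and degrade the rate to $n^{-1/(4\rho_\pi)}$, missing \Cref{theorem: lower bound for offline CB}. Delivering that fast rate in the form needed here --- with empirical (not population) $L_1$ covering numbers, and robustly enough to also serve the Bellman-structured loss of the MDP extension --- is exactly what \Cref{theorem: Uniform Bernstein's inequality} provides, via localization around the covering functions. Granting it, the rest is the bookkeeping above, the one genuinely new observation being that each $\pi_t$ enters only through $f(\cdot,\pi_t)$, so the statistical error is governed by $\gF(\cdot,\Pi)$ rather than by a $T$-fold product of policy classes.
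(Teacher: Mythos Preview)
Your proposal is correct and follows essentially the same approach as the paper: the same actor--critic decomposition into a transfer term, a Hedge term, and a pessimism/deviation term, controlled respectively by \Cref{definition: transfer exponent} together with \Cref{lemma: Uniform Bernstein's inequality for CB structure}, the pointwise exponential-weights regret (\Cref{lemma: exponential weights update}), and \Cref{prop: uniform Bernstein inequality}. The only cosmetic difference is where you insert the empirical measure: the paper writes the Hedge term at the empirical level $\hat P$ and keeps two separate deviation terms $(P-\hat P)f_t(\cdot,\pi)$ and $(\hat P-P)f^*(\cdot,\pi_t)$, whereas you keep Hedge at the population level (valid because the regret bound holds pointwise in $x$) and collapse the deviation into the single term $(P-\hat P)[(f_t-f^*)(\cdot,\pi_t)]$; both routes are bounded by the same uniform deviation over $\gF(\cdot,\Pi)$ and lead to the same result.
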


Our proof relies on the following lemmas.

\begin{lemma}
    Fix any $\eps > 0$, and $\delta \in [0,1]$. Define the version space 
    \begin{align*}
        \gF(\beta) := \{f \in \gF: \hat{P} l_f - \hat{P} l_{\hat{f}} \leq \beta \},
    \end{align*}
    where $\beta =  32  \eps + \frac{4 \log N_1(\gF,\eps,n) + 24 \log(12/\delta)}{n}$. Then, with probability at least $1 - \delta$, $f^* \in \gF(\beta)$, and 
    \begin{align*}
        \sup_{f \in \gF(\beta) }\sE_{\gD \otimes \mu}[(f - f^*)^2] \leq 172 \eps + \frac{44 \log N_1(\gF, \eps,n) + 156 \log(24/\delta)}{n}. 
    \end{align*}
    \label{lemma: Uniform Bernstein's inequality for CB structure}
\end{lemma}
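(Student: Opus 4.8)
The plan is to derive this lemma as a corollary of the uniform Bernstein's inequality for Bellman-like losses, \Cref{theorem: Uniform Bernstein's inequality}, specialized to the degenerate ``no next state'' situation that the contextual bandit model represents. First I would instantiate that proposition with $\gU = \gF$, with $b = 1$ (legitimate since $\gF \subset [0,1]^{\gX \times \gA}$ and $r \in [0,1]$), and with $\gG$ the singleton class $\{g \equiv 0\}$, so that $N_1(\gG, \eps, n) = 1$ and $\log N_1(\gG, \eps, n) = 0$. Under this choice the random variable $M(u, 0)$ equals $(u(x,a) - r)^2 - (f^*(x,a) - r)^2 = l_u - l_{f^*}$ on every sample, and---this is where \Cref{assumption: realizability cb} enters---the regression target is $g^*(x,a) = \sE[r \mid x, a] = f^*(x,a)$, so that $\sE[M(u,0)] = \sE_{\gD \otimes \mu}[(u - f^*)^2]$ and $\frac{1}{n} \sum_{t=1}^n M_t(u, 0) = \hat{P} l_u - \hat{P} l_{f^*}$. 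Invoking both conclusions of \Cref{theorem: Uniform Bernstein's inequality} at confidence $\delta/2$ apiece and union-bounding, I obtain an event of probability at least $1 - \delta$ on which, for all $f \in \gF$ and the fixed scale $\eps$,
\begin{align*}
\sE_{\gD \otimes \mu}[(f - f^*)^2] &\le \frac{2}{n} \sum_{t=1}^n M_t(f, 0) + 108 \eps + \frac{36 \log N_1(\gF, \eps, n) + 108 \log(24/\delta)}{n}, \\
- \frac{1}{n} \sum_{t=1}^n M_t(f, 0) &\le 30 \eps + \frac{4 \log N_1(\gF, \eps, n) + 28 \log(12/\delta)}{n};
\end{align*}
call these events $E_1$ and $E_2$.

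Next I would verify $f^* \in \gF(\beta)$. Since the empirical risk minimizer $\hat f$ satisfies $\hat{P} l_{\hat f} \le \hat{P} l_{f^*}$, we have $0 \le \hat{P} l_{f^*} - \hat{P} l_{\hat f} = -\frac{1}{n} \sum_{t=1}^n M_t(\hat f, 0)$, and on $E_2$ the right-hand side is at most $30 \eps + \frac{4 \log N_1(\gF, \eps, n) + 28 \log(12/\delta)}{n}$, which is $\le \beta$---the constants in $\beta$ being calibrated precisely so that this holds. Hence $f^*$ meets the version-space constraint. For the radius bound, fix any $f \in \gF(\beta)$; then $\hat{P} l_f - \hat{P} l_{\hat f} \le \beta$, and using $\hat{P} l_{\hat f} \le \hat{P} l_{f^*}$ once more, $\frac{1}{n} \sum_{t=1}^n M_t(f, 0) = \hat{P} l_f - \hat{P} l_{f^*} \le \beta$. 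Substituting this into $E_1$ gives $\sE_{\gD \otimes \mu}[(f - f^*)^2] \le 2\beta + 108 \eps + \frac{36 \log N_1(\gF, \eps, n) + 108 \log(24/\delta)}{n}$; plugging in $\beta = 32\eps + \frac{4 \log N_1(\gF, \eps, n) + 24 \log(12/\delta)}{n}$ and collecting terms produces $172 \eps + \frac{44 \log N_1(\gF, \eps, n) + 156 \log(24/\delta)}{n}$, and taking the supremum over $f \in \gF(\beta)$ finishes the proof.

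The substantive content all sits in \Cref{theorem: Uniform Bernstein's inequality}: the point is that the localization radius decays at the fast rate $\widetilde{\gO}(1/n)$ rather than $\gO(1/\sqrt{n})$, which is exactly what a naive single-function Bernstein bound applied to $l_{f^*}$ cannot deliver (its variance is $\Theta(1)$ in general because of reward noise), and it is this fast rate that later lets the upper bound match the lower bound for parametric classes. Granting that proposition, the only delicate points here are (i) that realizability is what turns $\sE[M(u,0)]$ into a genuine non-negative $\ell_2$ distance and identifies $g^* = f^*$, (ii) the $\delta/2$ split across $E_1$ and $E_2$, and (iii) tracking the absolute constants so that they reproduce $\beta$ and the stated radius exactly. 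I do not anticipate a real obstacle beyond this bookkeeping.
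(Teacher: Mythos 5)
Your proposal is correct and follows exactly the route the paper takes: the paper's own proof is a one-line remark that \Cref{lemma: Uniform Bernstein's inequality for CB structure} is the instantiation of \Cref{theorem: Uniform Bernstein's inequality} with $\gG=\{0\}$ (so that $g^* = f^*$ under realizability and the $N_1(\gG,\eps,n)$ terms vanish), combined with the ERM property $\hat{P}l_{\hat f}\le \hat{P}l_{f^*}$, and you have simply filled in the bookkeeping the paper omits. The only discrepancy is in the absolute constants (e.g., the second part of the proposition gives $28\log(\cdot)/n$ where $\beta$ carries $24\log(\cdot)/n$), but the paper's own constants are internally inconsistent at this level of precision, so this does not reflect a gap in your argument.
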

\begin{proof}[Proof of \Cref{lemma: Uniform Bernstein's inequality for CB structure}]
    \Cref{lemma: Uniform Bernstein's inequality for CB structure} is an instantiation of \Cref{theorem: Uniform Bernstein's inequality} for the contextual bandit case where we have a tuple of random variables $(x,a,r)$ instead of having an additional transition variable $x'$. In the contextual bandit case, we simply use $\gG =\{0\}$ the set of the zero function, and remove all the terms regarding $N_1(\gG, \eps,n)$ (which is $1$ in this case) in the RHS of \Cref{theorem: Uniform Bernstein's inequality}. 
\end{proof}

\begin{lemma}
    Choose $\eta = \sqrt{\frac{\log K}{4(e-2)T}}$ and $T \geq \frac{\log K}{e-2}$. Then we have
    \begin{align*}
        \forall \pi, \sum_{t=1}^T \hat{P}(f_t(\cdot,\pi) - f_t(\cdot,\pi_t)) \leq 4\sqrt{T \log K}. 
    \end{align*}
    \label{lemma: exponential weights update}
\end{lemma}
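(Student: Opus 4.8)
The plan is to recognize that the Hedge update in \Cref{algorithm: OfDM-Hedge} decouples across contexts: for each fixed $x \in \gX$, the iterates $\{\pi_t(\cdot|x)\}_{t \in [T]}$ are exactly the exponential-weights updates over the $K$ actions in $\gA$ driven by the gain vectors $f_t(x,\cdot) \in [0,1]^K$ (recall $\gF \subseteq [0,1]^{\gX\times\gA}$), initialized at the uniform distribution. So it suffices to establish the classical exponential-weights regret bound context-by-context against an arbitrary comparator distribution $p = \pi(\cdot|x)$, and then average over the empirical measure $\hat P$. This per-context view is precisely the observation emphasized in \Cref{remark: negligible hedging cost}.

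Fix $x$ and set $w_t(a) := \exp\bigl(\eta \sum_{s=1}^{t-1} f_s(x,a)\bigr)$ and $W_t := \sum_{a \in \gA} w_t(a)$, so that $w_1 \equiv 1$, $W_1 = K$, and $\pi_t(a|x) = w_t(a)/W_t$ reproduces the algorithm's update. I would bound the log-potential increase $\log W_{T+1} - \log W_1$ in two ways. For the lower bound, write $G_T(a) := \sum_{s=1}^T f_s(x,a)$; the Gibbs variational inequality $\log\sum_a e^{\eta G_T(a)} \ge \sum_a p(a)\log\bigl(e^{\eta G_T(a)}/p(a)\bigr) = \eta\langle p, G_T\rangle + H(p) \ge \eta\langle p, G_T\rangle$ (with $H(p)\ge 0$ the Shannon entropy) gives $\log W_{T+1} - \log W_1 \ge \eta\langle p, G_T\rangle - \log K$. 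For the upper bound, I would use $e^z \le 1 + z + (e-2)z^2$ valid for $z \in [0,1]$ — and this is where the hypothesis $T \ge \log K/(e-2)$ enters, since it forces $\eta = \sqrt{\log K/(4(e-2)T)} \le 1/2 \le 1$ so that $z = \eta f_t(x,a) \in [0,1]$ — to obtain $W_{t+1}/W_t = \sum_a \pi_t(a|x) e^{\eta f_t(x,a)} \le 1 + \eta\langle\pi_t(\cdot|x),f_t(x,\cdot)\rangle + (e-2)\eta^2 \le \exp\bigl(\eta\langle\pi_t(\cdot|x),f_t(x,\cdot)\rangle + (e-2)\eta^2\bigr)$, using $\sum_a \pi_t(a|x) f_t(x,a)^2 \le 1$. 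Telescoping over $t$ yields $\log W_{T+1} - \log W_1 \le \eta\sum_{t=1}^T\langle\pi_t(\cdot|x),f_t(x,\cdot)\rangle + (e-2)\eta^2 T$.

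Combining the two bounds and dividing by $\eta$ gives, for every $x$ and every policy $\pi$ (recalling $f_t(x,\pi) = \langle\pi(\cdot|x), f_t(x,\cdot)\rangle$),
\[
\sum_{t=1}^T \bigl(f_t(x,\pi) - f_t(x,\pi_t)\bigr) \le \frac{\log K}{\eta} + (e-2)\eta T.
\]
Averaging over $x \in \{x_1,\dots,x_n\}$, i.e. applying $\hat P$, preserves this inequality, so $\sum_{t=1}^T \hat P\bigl(f_t(\cdot,\pi) - f_t(\cdot,\pi_t)\bigr) \le \log K/\eta + (e-2)\eta T$. Finally, substituting $\eta = \sqrt{\log K/(4(e-2)T)}$ makes the right-hand side equal $2\sqrt{(e-2)T\log K} + \tfrac12\sqrt{(e-2)T\log K} = \tfrac52\sqrt{(e-2)T\log K}$, which is at most $4\sqrt{T\log K}$ since $e-2 < 1$.

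I do not expect a genuine obstacle here — this is essentially the textbook exponential-weights analysis. The three points that need care are: (i) the per-context decoupling, which is what allows the comparator to be an \emph{arbitrary} policy rather than a single fixed distribution; (ii) checking that the condition $T \ge \log K/(e-2)$ keeps $\eta$ small enough for the quadratic upper bound on $e^z$ to apply; and (iii) the final arithmetic, confirming that the constant $5/2$ (after absorbing $\sqrt{e-2} < 1$) stays below the stated constant $4$.
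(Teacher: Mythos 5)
Your proof is correct and follows essentially the same route as the paper: the paper likewise reduces to the pointwise statement $\sum_{t=1}^T \bigl(f_t(x,\pi) - f_t(x,\pi_t)\bigr) \leq 4\sqrt{T\log K}$ for each fixed $x$ and then cites a standard Hedge guarantee, which your potential-function argument (with $e^z \le 1+z+(e-2)z^2$ and the check that $T \ge \log K/(e-2)$ forces $\eta \le 1/2$) supplies in full; your arithmetic yielding $\tfrac{5}{2}\sqrt{(e-2)T\log K} \le 4\sqrt{T\log K}$ checks out.
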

\begin{proof}[Proof of \Cref{lemma: exponential weights update}]
It suffices to show a stronger result, that any $x \in \gX$ and $\pi \in \Pi$,  
\begin{align*}
    \sum_{t=1}^T f_t(x,\pi) - f_t(x, \pi_t) \leq 4\sqrt{T \log K}. 
\end{align*}
This is a standard guarantee of the Hedge algorithm, where a complete proof can be found at \citep[Lemma~B.6]{nguyen-tang2023on}. 
\end{proof}

\begin{proof}[Proof of \Cref{theorem: theorem: upper bounds of OfDM-Hedge CB with exact constants}]
    \begin{align*}
    V^{\pi} - V^{\pi_t} &= P f^*(\cdot, \pi) - P f^*(\cdot,\pi_t) \\
    &=\sE_{\gD \otimes \pi}[f^* - f_t]
    + (P - \hat{P})f_t(\cdot,\pi)
    + \hat{P}(f_t(\cdot,\pi) - f_t(\cdot,\pi_t))
    +  (\hat{P} - P) f^*(\cdot,\pi_t) +\hat{P} (f_t(\cdot,\pi_t) - f^*(\cdot,\pi_t)),  \\ 
    &\leq \underbrace{\sE_{\gD \otimes \pi}[f^* - f_t]}_{I_1}  + \underbrace{\hat{P}(f_t(\cdot,\pi) - f_t(\cdot,\pi_t))}_{I_2} + 2 \underbrace{\sup_{g \in \gF(\cdot, \Pi)}|(P - \hat{P}) g|}_{I_3} + \underbrace{\hat{P} (f_t(\cdot,\pi_t) - f^*(\cdot,\pi_t))}_{I_4}. 
\end{align*}
We bound each term $I_1, I_2, I_3, I_4$ separately. Regarding term $I_2$, by \Cref{lemma: exponential weights update}, we have 
\begin{align*}
    \sum_{t=1}^T \hat{P}(f_t(\cdot,\pi) - f_t(\cdot,\pi_t)) \leq 4 \sqrt{T \log K}.
\end{align*}

Consider the event $E = E_1 \cap E_2 \cap E_3$, where 
\begin{align*}
    E_1 \!&:= \!\!\left\{\! \sup_{g \in \gF(\cdot, \Pi)} \!\!|(P - \hat{P}) g| \leq  \mathbbm{1}_{\{|\gX| > 1\}}  \inf_{\eps > 0} \left\{\sqrt{ \frac{2 \log(2N_1(\gF(\cdot, \Pi), \eps, n)/\delta)}{n}} + \frac{62  \log(6N_1(\gF(\cdot, \Pi), \eps, n)/\delta)}{n}  + 61 \eps \right\} \right\}, \\ 
    E_2 &:= \left\{ f^* \in \gF(\beta)\right\}, \\
    E_3 &:= \left\{ \sup_{f \in \gF(\beta) }\sE_{\gD \otimes \mu}[(f - f^*)^2] \leq 172 \eps + \frac{44 \log N_1(\gF, \eps,n) + 156 \log(24/\delta)}{n} \right\}. 
\end{align*}

Due to pessimism of \Cref{algorithm: OfDM-Hedge}, $f_t = \displaystyle\argmin_{f \in \gF: \hat{P} l_f - \hat{P} l_{\hat{f}} \leq \beta} \hat{P} f(\cdot, \pi_t)$. Thus, under $E_2$, $I_4 = \hat{P} f_t(\cdot, \pi_t) - \hat{P} f^*(\cdot,\pi_t) \leq 0$. By the policy transfer definition, we have 
\begin{align*}
    I_1 &= \sE_{\gD \otimes \pi} [f^* - f_t] \leq \left( C_\pi \sE_{\gD \otimes \mu}[(f^* - f)^2] \right)^{1/(2 \rho_\pi)}. 
\end{align*}
Thus, under event $E$, we have 
\begin{align*}
    V^\pi - \frac{1}{T} \sum_{t=1}^T V^{\pi_t} &\leq \left( C_\pi \left( 172 \eps + \frac{44 \log N_1(\gF, \eps,n) + 156 \log(24/\delta)}{n} \right) \right)^{1/(2 \rho_\pi)} + 4 \sqrt{\frac{\log K}{T}} \\ 
    &+ \mathbbm{1}\{|\gX| > 1\} \land \inf_{\eps > 0} \left\{\sqrt{ \frac{2 \log(2N_1(\gF(\cdot, \Pi), \eps, n)/\delta)}{n}} + \frac{62  \log(6N_1(\gF(\cdot, \Pi), \eps, n)/\delta)}{n}  + 61 \eps \right\}. 
\end{align*}

Now we bound the probability of each of the events $E_1, E_2, E_3$ and combine them via the union bound for the probability of $E$. Note that if $|\gX| = 1$, we have a trivial inequality $I_3 = 0$. 
Combining with \Cref{prop: uniform Bernstein inequality}, we have $\pr(E_1) \geq 1 - \delta$. By \Cref{lemma: Uniform Bernstein's inequality for CB structure}, we have $\pr(E_2 \cap E_3) \geq 1 - \delta$. Thus, $\pr(E) \geq 1 - 2 \delta$.

\end{proof}

\section{Proofs for \Cref{section: extension to hybrid setting}}

\subsection{The expected sub-optimality guarantees for FALCON+ \citep[Algorithm~2]{simchi2022bypassing}}
\label{subsection: digest for FALCON+}
When we employ FALCON+ \citep[Algorithm~2]{simchi2022bypassing} for the online learning step in \Cref{algorithm: hybrid learner}, the returned policy $\hat{\pi}_{\textrm{on}}$ has the expected regret over $m/2$ rounds of order $\sqrt{K \gE_{\gF,\delta/\log(m/2)}(m) } m$, where $\gE_{\gF,\delta}(n)$  is a learning guarantee for the offline regression oracle (see their Assumption 2). 
\begin{assumption}[{\citep[Assumption~2]{simchi2022bypassing}}]
    Let $\pi$ be an arbitrary policy. Given $n$ training samples of the form $(x_i, a_i, r_i(a_i))$ i.i.d. drawn according to $(x_i,r_i) \sim \gD$ and $a_i \sim \pi(\cdot|x_i)$, the offline regression oracle returns a predictor $\hat{f}: \gX \times \gA \rightarrow \sR$. For any $\delta > 0$, with probability at least $1 - \delta$, we have 
    \begin{align*}
        \sE_{x \sim \gD_\gX, a \sim \pi(\cdot|x)} \left[(\hat{f}(x,a) - f^*(x,a))^2 \right] \leq \gE_{\gF,\delta}(n). 
    \end{align*}
\end{assumption}
\revise{Note that, the output for running FALCON++ for $m/2$ iterates is a sequence of $m/2$ policies. $\hat{\pi}_{\textrm{on}}$ is an uniform distribution over such $m/2$ iterates.}

Since the context $x$ is i.i.d. from $\gD_{\gX}$, we can transform the expected regret $\sqrt{K \gE_{\gF,\delta/\log(m/2)}(m) } m$ into the expected sub-optimality bound of order $\sqrt{K \gE_{\gF,\delta/\log(m/2)}(m) } + \frac{1}{m}$ using an (improved) online-to-batch argument (e.g., \citep{nguyentang2022instancedependent}), wherein $\frac{1}{m}$ is the cost of converting a regret bound into a sub-optimality bound. It is known that if the closure of $\gF$ is convex (which we assume here for simplicity of comparison), the sample complexity of learning $\gF$ using squared loss is $\frac{\pdim(\gF)}{\eps}(\log(1/\eps) + \log(1/\delta))$ \citep{lee1996importance}. Thus,  $\gE_{\gF,\delta}(n) \leq \frac{\pdim(\gF)}{n} \log(n/\delta)$. Finally, note that $\pdim(\gF) \leq K d$.

\subsection{Proof of \Cref{theorem: lower bound for the hybrid setting}}
\begin{proof}[Proof of \Cref{theorem: lower bound for the hybrid setting}]
Construct the hard problem instances exactly like the ones in the proof of \Cref{theorem: lower bound for offline CB}, except that we now choose 
\begin{align*}
    \eps = \min\left\{\left(\frac{Cd}{64n}\right)^{1/(2 \rho)}, \frac{1}{8}\sqrt{\frac{d}{m}} \right\}. 
\end{align*}
The verification and the reduction to testing follow exactly, except for the step in which we compute the KL divergence of the observations between two different models. Specifically, denote $Q_\sigma$ as the probability of the pre-collected data and the online data under the model $P_\sigma$. Let $S_{\textrm{on}} = \{\tilde{x}_i, \tilde{a}_i, \tilde{r}_i\}_{i \in [m]}$ be the random online data collected during the online phase by a hybrid algorithm $\textrm{ALG}$. Note that $\tilde{a}_t$ depends on $\tilde{x}_t, \{\tilde{x}_i, \tilde{a}_i, \tilde{r}_i\}_{i \in [t-1]}$, and $S_{\textrm{off}}$ for any $t \in [m]$. We denote this conditional distribution by $P_{\textrm{ALG}}(\tilde{a}_t | \tilde{x}_t, \{\tilde{x}_i, \tilde{a}_i, \tilde{r}_i\}_{i \in [t-1]}, S_{\textrm{off}})$. Note that the conditional distribution $P_{\textrm{ALG}}$ depends only on the algorithm $\textrm{ALG}$ and invariant to any underlying model $P_\sigma$, thus we have 
\begin{align*}
    & \frac{ Q_{\sigma}( S_{\textrm{on}})}{ Q_{\sigma'}( S_{\textrm{on}})} =  \prod_{i \in [m]}\frac{P_\sigma(\tilde{r}_i | \tilde{x}_i, \tilde{a}_i)}{P_{\sigma'}(\tilde{r}_i | \tilde{x}_i, \tilde{a}_i)}.
\end{align*}
Consider any $\sigma$ and $\sigma'$ such that $\textrm{dist}(\sigma, \sigma') = 1$. We thus have
\begin{align*}
    \sum_{S_{\textrm{off}} \cup S_{\textrm{on}}}Q_{\sigma}(S_{\textrm{on}}) \log \frac{ Q_{\sigma}( S_{\textrm{on}})}{Q_{\sigma'}( S_{\textrm{on}})} 
    &=  \sum_{S_{\textrm{off}} \cup S_{\textrm{on}}} Q_{\sigma}(S_{\textrm{on}}) \prod_{i \in [m]}\frac{P_\sigma(\tilde{r}_i | \tilde{x}_i, \tilde{a}_i)}{P_{\sigma'}(\tilde{r}_i | \tilde{x}_i, \tilde{a}_i)}\\
    &\leq  \frac{m}{d}\sum_{i=1}^d \kl \left( \Ber(\frac{1}{2} + \sigma_i \frac{\eps}{2}) \big\| \Ber(\frac{1}{2} + \sigma'_i \frac{\eps}{2})\right)  \leq \frac{m}{d} 16 \eps^2.
\end{align*}
Hence, we have
\begin{align*}
    \kl(Q_{\sigma} \| Q_{\sigma'}) &= \sum_{S_{\textrm{off}} \cup S_{\textrm{on}}}Q_{\sigma}(S_{\textrm{off}} \cup S_{\textrm{on}})  \log \frac{ Q_{\sigma}(S_{\textrm{off}} \cup S_{\textrm{on}})}{ Q_{\sigma'}(S_{\textrm{off}} \cup S_{\textrm{on}})} \\ 
    &= \sum_{S_{\textrm{off}}}Q_{\sigma}(S_{\textrm{off}}) \log \frac{ Q_{\sigma}(S_{\textrm{off}})}{ Q_{\sigma'}(S_{\textrm{off}})} + \sum_{S_{\textrm{off}} \cup S_{\textrm{on}}}Q_{\sigma}(S_{\textrm{on}}) \log \frac{ Q_{\sigma}( S_{\textrm{on}})}{Q_{\sigma'}( S_{\textrm{on}})} \\ 
    &\leq \frac{16 n \eps^{2\rho}}{Cd} + \frac{16 m \eps^2}{d} \leq 1/4 + 1/4 = 1/2,
\end{align*}
where the first term in the first inequality follows from the upper bound of KL of two distributions of the offline data in the proof of \Cref{theorem: lower bound for offline CB}, the second term of the first inequality follows from a direct calculation, and the last inequality follows from the choice of $\eps$ presented at the beginning of the proof.

 The worst-case Hamming distance $\sup_{\sigma \in \{-1,1\}^d}\sE_{\sigma} \left[ \textrm{dist}(\sigma, \hat{\sigma}) \right]$ can be lower-bounded using the standard tools in hypothesis testing: 
\begin{align*}
    \sup_{\sigma \in \{-1,1\}^d}\sE_{\sigma} \left[ \textrm{dist}(\sigma, \hat{\sigma}) \right] &\geq \frac{d}{2} \min_{\sigma, \sigma': \textrm{dist}(\sigma,\sigma')=1} \inf_{\psi} \left[ Q_\sigma(\psi \neq \sigma) + Q_{\sigma'}(\psi \neq \sigma')\right] \nonumber\\ 
    &\geq \frac{d}{2} \left( 1 - \sqrt{\frac{1}{2} \max_{\sigma, \sigma': \textrm{dist}(\sigma,\sigma')=1} \kl\left( Q_\sigma \| Q_{\sigma'} \right)}\right) \\ 
    &\geq \frac{d}{4},
\end{align*}
where the first inequality follows Assouad's lemma \citep[Lemma~2.12]{tsybakov1997nonparametric} and the second inequality follows from \citep[Theorem~2.12]{tsybakov1997nonparametric}. 

Let $\hat{\sigma}_i = \mathbbm{1}\{\hat{\pi}(a_2 | x_i) \geq \frac{1}{2} \}$. We have $\mathbbm{1}\{\sigma_i\} - \hat{\pi}(a_2|x_i) \sigma_i \geq \frac{|\sigma_i - \hat{\sigma}_i|}{4}$, and thus, we have
\begin{align*}
     \sup_{\sigma \in \{-1,1\}^d} \sE_{\sigma} \left[ V^*_{\sigma} - V_{\sigma}^{\pi} \right] \geq \frac{\eps}{4d}  \sup_{\sigma \in \{-1,1\}^d} \sE_{\sigma} \left[ \textrm{dist}(\sigma, \hat{\sigma}) \right] \geq \frac{\eps}{16} = \frac{1}{16}  \min\left\{\left(\frac{Cd}{64n}\right)^{1/(2 \rho)}, \frac{1}{8}\sqrt{\frac{d}{m}} \right\}. 
\end{align*}

\end{proof}

\section{Proofs for \Cref{section: extension to MDPs}}

\subsection{Proof of \Cref{theorem: lower bound for MDPs}}

\begin{proof}[Proof of \Cref{theorem: lower bound for MDPs}]
The proof structure follows similarly as that of \Cref{theorem: lower bound for offline CB}. 
\paragraph{Construction of a family of hard MDPs.} Each MDP $M_\sigma$ is characterized by $\sigma \in \{-1,1\}^d$. For any $\sigma$, $M_\sigma$ is a deterministic MDP with the state space is $\gS = \{x_1, \bar{x}_1, \tilde{x}_1, \ldots, x_d, \bar{x}_d, \tilde{x}_d\}$, the action space $\gA = \{a_1, a_2\}$ (see \Cref{fig:hard MDPs}). Each MDP $M_\sigma$ starts uniformly at one of $d$ states $x_1, \ldots, x_d$. Form each state $x_i$ for any $i \in [d]$, one can follow the ``blue'' path by taking action $a_1$ or the ``red'' path by taking action $a_2$. This always lead to an absorbing state ($\bar{x}_i$ in the blue path and $\tilde{x}_i$ in the red path). Taking any action from an absorbing state leads to the same state. If one starts from $x_i$ for any $i\in[d]$, the reward for every blue arrow in the graph (resp. red arrow) is $\frac{1}{2}$ (resp. $\frac{1}{2} + \sigma_i \frac{\eps}{2}$). Also note that all the MDPs in the family share the same (deterministic) dynamics (they are only different by reward labelling). 
\begin{figure}[h!]
    \centering
    \includegraphics[scale=0.5]{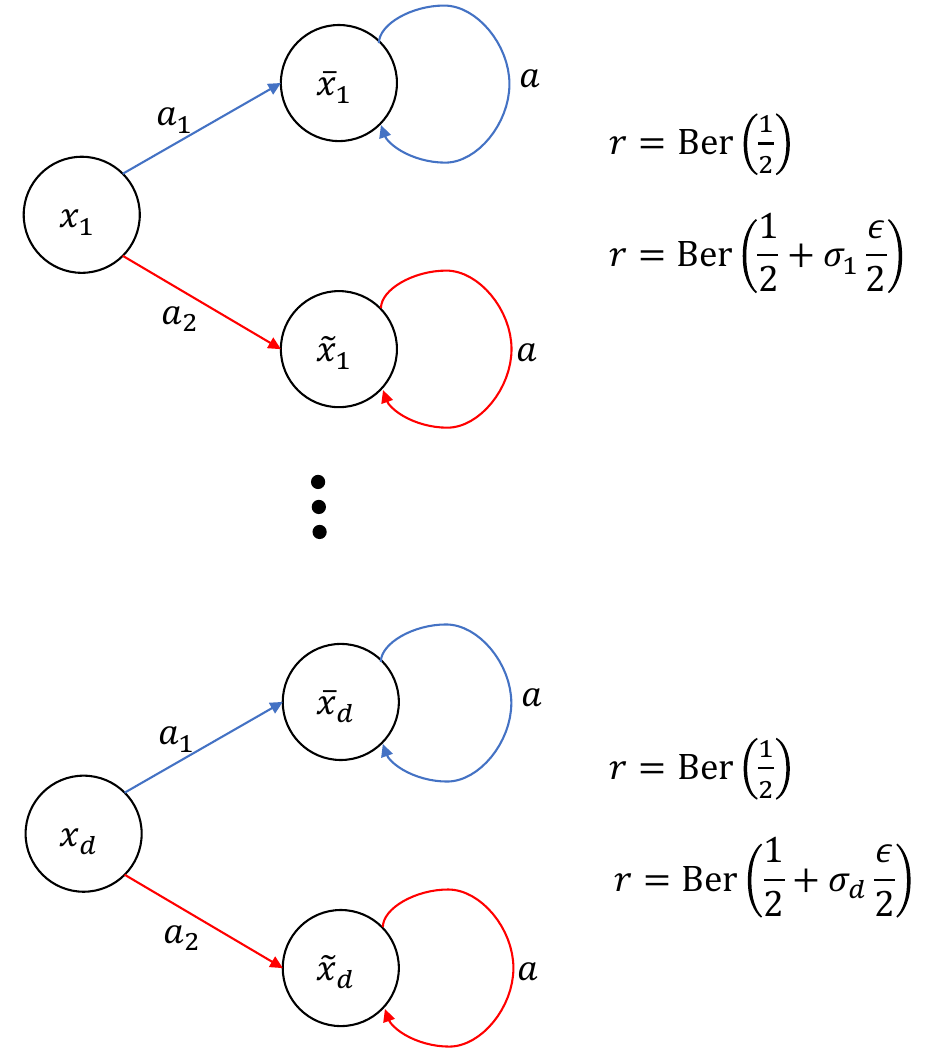}
    \caption{Hard MDPs}
    \label{fig:hard MDPs}
\end{figure}

We can compute exactly Q-functions of each policy under each MDP $M_\sigma$. Note that starting from $h \geq 2$, the value function does not depend on the action being taking. The total reward in any trajectory essentially depends on which initial state one starts with and which action one takes from the initial state. Thus under any MDP $M_\sigma$, its Q-value functions under a policy $\pi$ have the property that they are completely agnostic to $\pi$ -- which comes handy in satisfying the value realizability and Bellman completeness. This property is inspired by the construction by \cite{foster2021offline}, though our constructions are different and much simpler.

Specifically, for any $\pi$, we have 
\begin{align*}
    Q^{\pi}_h(\bar{x}_i, a) &= \frac{H - h +1}{2}, \forall h \geq 1, a \in \gA, i \in [d] \\ 
    Q^{\pi}_h(\tilde{x}_i, a) &= (H + 1 -h) (\frac{1}{2} + \sigma_i \frac{\eps}{2}), \forall h \geq 1, a \in \gA, i \in [d] \\ 
    Q_1^{\pi}(x_i, a_1) &= \frac{H}{2}, \forall i \in [d] \\ 
    Q_1^{\pi}(x_i, a_2) &= H (\frac{1}{2} + \sigma_i \frac{\eps}{2}), \forall i \in [d].
\end{align*}

We construction the following function class $\gF = \{f^{\sigma}: \sigma \in \{-1,1\}^d\}$ where 
\begin{align*}
    f_1^{\sigma}(x_i, a_1) &= \frac{H}{2}, ~~~~~~~ f^{\sigma}_1(x_i, a_2) = H(\frac{1}{2} + \sigma_i \frac{\eps}{2}), \\
    \forall h \in [2,H], f^{\sigma}_h(\bar{x}_i, a) &= (H + 1 -h) \frac{1}{2}, ~~~~~~~ f^{\sigma}_h(\tilde{x}_i, a) = (H + 1 -h) (\frac{1}{2} + \sigma_i \frac{\eps}{2}). 
\end{align*}
It is easy to see that for any $\sigma \in \{-1,1\}^d$, the pair $(\gF, M_\sigma)$ satisfies the value realizability and the Bellman completness. The value realizability follows from that $\gF$ is constructed as the bare minimum function class that contains all possible Q-value functions of the MDPs in the class. The Bellman completeness follows from the value realizability and that the Bellman backup under any policy $\pi$ on a function in $\gF$ does not depend on $\pi$. 

We have 
\begin{align*}
    V_1^{\pi} = \frac{H}{d} \sum_{i=1}^d \left( \pi_1(a_1|x_i) \frac{1}{2} + \pi_1(a_2|x_i) (\frac{1}{2} + \sigma_i \frac{\eps}{2}) \right).
\end{align*}
Thus, we have 
\begin{align}
    \sE_{\sigma} \left[ V^*_{\sigma} - V_{\sigma}^{\pi} \right] \geq \frac{H \eps}{4d} \sE_{\sigma} \left[ \textrm{dist}(\sigma, \hat{\sigma}) \right],
    \label{eq: sub-optimality in terms of eps for MDPs}
\end{align}
For any $f \in \gF$, we have $f = f^{\alpha}$ for some $\alpha \in \{-1,1\}^d$. For any policy $\pi$, we have 
\begin{align*}
    \forall h \in [H], \sE_{\pi} [(f_h^\sigma - f_h^\alpha)^2] = \eps^2 (H - 1 + 1)^2\sum_{i=1}^d \pi_1(a_2|x_i) \frac{\mathbbm{1}\{\sigma_i \neq \alpha_i\}}{d}. 
\end{align*}
Choose 
\begin{align*}
    \mu(a_2 | x_i) = \frac{\eps^{2\rho -2 } H^{2 \rho -2}}{C}.
\end{align*}
This is valid only when $ C \geq (\eps H)^{2\rho -2}$. Then we have 
\begin{align*}
    \forall h \in [H], C |\sE_{\pi} [f_h^\sigma - f_h^\alpha]|^2 \leq C \sE_{\pi} [(f_h^\sigma - f_h^\alpha)^2] \leq \sE_{\mu} [(f_h^\sigma - f_h^\alpha)^2]^\rho. 
\end{align*}

The offline data can be equivalently reduced into $S_n = \{(s_1^t, a_1^t, r_1^t)\}_{t \in [n]}$ because the information in the first timestep $h=1$ fully captures the information in subsequent time steps $h \geq 1$.  For any $\sigma$ and $\sigma'$ such that $\textrm{dist}(\sigma, \sigma') = 1$, let $i^* \in [d]$ be the (only) coordinate that $\sigma$ differs from $\sigma'$, we have
\begin{align*}
    \kl\left(\pr_\sigma(S_n) \|  \pr_{\sigma'} (S_n)\right) \leq 16 \frac{n}{d} \mu(a_2 | x_{i^*}) \eps^2 \leq \frac{16 n H^{2 \rho - 2} \eps^{2 \rho}}{Cd} = \frac{1}{2},
\end{align*}
where we choose 
\begin{align*}
    \eps = \left( \frac{Cd}{32 n H^{2 \rho - 2}} \right)^{\frac{1}{2 \rho}}. 
\end{align*}

The worst-case Hamming distance $\sup_{\sigma \in \{-1,1\}^d}\sE_{\sigma} \left[ \textrm{dist}(\sigma, \hat{\sigma}) \right]$ can be lower-bounded using the standard tools in hypothesis testing: 
\begin{align*}
    \sup_{\sigma \in \{-1,1\}^d}\sE_{\sigma} \left[ \textrm{dist}(\sigma, \hat{\sigma}) \right] &\geq \frac{d}{2} \min_{\sigma, \sigma': \textrm{dist}(\sigma,\sigma')=1} \inf_{\psi} \left[ \Pr_\sigma(\psi \neq \sigma) + \Pr_{\sigma'}(\psi \neq \sigma')\right] \nonumber\\ 
    &\geq \frac{d}{2} \left( 1 - \sqrt{\frac{1}{2} \max_{\sigma, \sigma': \textrm{dist}(\sigma,\sigma')=1} \kl\left( \Pr_\sigma(S_n) \| \Pr_{\sigma'}(S_n) \right)}\right) \\ 
    &\geq \frac{d}{4},
\end{align*}
where the first inequality follows Assouad's lemma \citep[Lemma~2.12]{tsybakov1997nonparametric} and the second inequality follows from \citep[Theorem~2.12]{tsybakov1997nonparametric}. 

Plugging into \Cref{eq: sub-optimality in terms of eps for MDPs}, we have 
\begin{align*}
      \sup_{\sigma \in \{-1,1\}^d} \sE_{\sigma} \left[ V^*_{\sigma} - V_{\sigma}^{\pi} \right] &\geq \frac{H \eps}{4d}  \sup_{\sigma \in \{-1,1\}^d} \sE_{\sigma} \left[ \textrm{dist}(\sigma, \hat{\sigma}) \right] \\
      &\geq \frac{H \eps}{16} = \frac{1}{16} \left( \frac{H^2 C d}{ 32 n} \right)^{\frac{1}{2 \rho}}. 
\end{align*}
\end{proof}

\subsection{Proof of \Cref{theorem: upper bound in MDPs}}
\label{subsection: upper bound for MDPs}
We first re-state \Cref{theorem: upper bound in MDPs}. 

\begin{theorem}
    Let \Cref{assumption: realizability for mdp}, \Cref{assumption: Bellman completeness}, and \Cref{assumption: parametric class for MDP} hold and assume that $|\gA| = K$. There exists a (possibly randomized) learning algorithm $\hat{\pi}$ such that for any MDP $M$, and any $\delta \in [0,1]$, with probability at least $1 - \delta$, for any $\pi$ such that $\rho_\pi \geq 0.5$, 
    \begin{align*}
        \sE \left[\subopt_M^\pi(\hat{\pi}) | S \right] = C_\pi ^{\frac{1}{2 \rho_\pi}} H^{1 - \frac{1}{2 \rho_\pi}}  \left(H^2 \eps + H^3 \frac{d(\eps, n) + \log(H/\delta)}{n}  \right)^{\frac{1}{2 \rho_\pi}},
    \end{align*}
    where $d(\eps,n) := \max_{h \in [H]}\{\log N_1(\gF_h, \eps, n) \lor  \log N_1(\gF_h(\cdot, \Pi_h), \eps, n)\}$. 
    \label{theorem: upper bound in MDPs restated}
\end{theorem}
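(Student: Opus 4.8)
The plan is to analyze the MDP analogue of \Cref{algorithm: OfDM-Hedge}: an actor-critic scheme that maintains a layerwise policy $\pi_t=(\pi_{t,1},\dots,\pi_{t,H})$ and, at each round $t\in[T]$, runs (i) a \emph{pessimistic critic} step selecting $f_t=(f_{t,1},\dots,f_{t,H})\in\gF$ that minimizes the pessimistic value $f_{1}(s_1,\pi_{t,1})$ over all $f$ whose empirical layerwise Bellman discrepancy $\frac1n\sum_{\tau=1}^n\big[(f_h-r_h-f_{h+1}(\cdot,\pi_{t,h+1}))^2-\min_{g_h\in\gF_h}(g_h-r_h-f_{h+1}(\cdot,\pi_{t,h+1}))^2\big]$ (evaluated on the tuples $(x_h^{(\tau)},a_h^{(\tau)},r_h^{(\tau)},x_{h+1}^{(\tau)})$) is at most $\beta$ for every $h$, and (ii) a per-layer, per-state Hedge update $\pi_{t+1,h}(a|x)\propto\pi_{t,h}(a|x)e^{\eta f_{t,h}(x,a)}$; the output $\hat\pi$ is uniform over $\{\pi_t\}_{t\in[T]}$. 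I would take $T$ large enough that the Hedge error is negligible, which is legitimate because the remaining terms are $T$-independent (cf. \Cref{remark: negligible hedging cost}).

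The engine is a per-round decomposition via the performance-difference identity. Writing $\zeta_{t,h}:=T^{\pi_t}_h f_{t,h+1}-f_{t,h}$ with $f_{t,H+1}\equiv 0$, substituting $r_h=f_{t,h}+\zeta_{t,h}-\sE_{x'\sim P_h}[f_{t,h+1}(x',\pi_{t,h+1})]$ into $V_1^\pi(s_1)=\sE_\pi[\sum_h r_h]$, applying the tower rule and telescoping, one obtains for every comparator $\pi$ and every $t$
\[
V_1^\pi(s_1)-f_{t,1}(s_1,\pi_{t,1})=\sum_{h=1}^H\sE_\pi[\zeta_{t,h}(x_h,a_h)]+\sum_{h=1}^H\sE_\pi\big[\langle f_{t,h}(x_h,\cdot),\pi_h(\cdot|x_h)-\pi_{t,h}(\cdot|x_h)\rangle\big].
\]
By pessimism together with \Cref{assumption: realizability for mdp,assumption: Bellman completeness} (so $Q^{\pi_t}\in\gF$ is feasible with high probability), $f_{t,1}(s_1,\pi_{t,1})\le V_1^{\pi_t}(s_1)$, hence $V_1^\pi-V_1^{\pi_t}$ is bounded by the right-hand side. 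Averaging over $t\in[T]$, the second sum is controlled by the pointwise Hedge guarantee (\Cref{lemma: exponential weights update}, rescaled to the reward range $[0,H]$ and summed over $H$ layers), contributing $\widetilde{\gO}(H^2\sqrt{\log K/T})$, which vanishes as $T\to\infty$.

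For the first sum I would invoke the MDP policy-transfer inequality (\Cref{definition: transfer exponent for MDP}): since both $T^{\pi_t}_h f_{t,h+1}\in\gF_h$ (Bellman completeness) and $f_{t,h}\in\gF_h$, we get $|\sE_\pi[\zeta_{t,h}]|^{2\rho_\pi}\le C_\pi\,\sE_\mu[\zeta_{t,h}^2]$. It then remains to bound the population squared Bellman residual $\sE_\mu[\zeta_{t,h}^2]$ uniformly over the data-dependent version space \emph{and} over every Hedge iterate $\pi_t$ without incurring a $T$-dependence --- this is the main obstacle, and it is exactly what \Cref{theorem: Uniform Bernstein's inequality} is built to deliver. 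I apply it at each layer $h$ to the tuple $(x_h,a_h,r_h,x_{h+1})$, which is i.i.d. across the $n$ trajectories, with $\gU=\gF_h$ and $\gG=\gF_{h+1}(\cdot,\Pi_{h+1})$; the latter is where the ``no $T$-blowup'' trick enters, since $\pi_{t,h+1}$ affects the residual only through $f_{t,h+1}(\cdot,\pi_{t,h+1})\in\gF_{h+1}(\cdot,\Pi_{h+1})$, which is also why $d(\eps,n)$ must involve both $N_1(\gF_h,\eps,n)$ and $N_1(\gF_h(\cdot,\Pi_h),\eps,n)$. Because $T^{\pi_t}_h f_{t,h+1}\in\gF_h$ competes in the inner $\min_{g_h}$, the empirical $\frac1n\sum M_t(f_{t,h},f_{t,h+1})$ is at most the empirical discrepancy $\le\beta$; the first inequality of \Cref{theorem: Uniform Bernstein's inequality} with range parameter $b\asymp H$ then yields $\sE_\mu[\zeta_{t,h}^2]\lesssim\beta+H\eps+H^2\tfrac{d(\eps,n)+\log(1/\delta)}{n}$, while its second inequality certifies that the empirical discrepancy of $Q^{\pi_t}$ is itself $\lesssim H\eps+H^2\tfrac{d(\eps,n)+\log(1/\delta)}{n}$, so $\beta$ can be chosen of this order (guaranteeing feasibility of $Q^{\pi_t}$). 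A union bound over $h\in[H]$ (replacing $\delta$ by $\delta/H$) gives $\sE_\mu[\zeta_{t,h}^2]\lesssim H\eps+H^2\tfrac{d(\eps,n)+\log(H/\delta)}{n}$ for all $t,h$ simultaneously.

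To conclude, $|\sE_\pi[\zeta_{t,h}]|\le(C_\pi\sE_\mu[\zeta_{t,h}^2])^{1/(2\rho_\pi)}$, and since $\rho_\pi\ge\tfrac12$ the map $x\mapsto x^{1/(2\rho_\pi)}$ is concave, whence
\[
\sum_{h=1}^H\big(C_\pi\sE_\mu[\zeta_{t,h}^2]\big)^{1/(2\rho_\pi)}\le H^{1-1/(2\rho_\pi)}\Big(C_\pi\sum_{h=1}^H\sE_\mu[\zeta_{t,h}^2]\Big)^{1/(2\rho_\pi)}\lesssim C_\pi^{1/(2\rho_\pi)}H^{1-1/(2\rho_\pi)}\Big(H^2\eps+H^3\tfrac{d(\eps,n)+\log(H/\delta)}{n}\Big)^{1/(2\rho_\pi)},
\]
using $H\cdot(H\eps+H^2\tfrac{\cdot}{n})=H^2\eps+H^3\tfrac{\cdot}{n}$. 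Since this is independent of $t$, averaging over $t\in[T]$ and letting $T\to\infty$ gives the stated bound. The single genuine difficulty is the uniform Bellman-residual concentration that must be simultaneously uniform over the version space, free of any dependence on the number of Hedge iterations, and robust to double sampling; these are handled, respectively, by the localization argument behind \Cref{theorem: Uniform Bernstein's inequality}, by only covering $\gF_{h+1}(\cdot,\Pi_{h+1})$, and by the min-discrepancy loss combined with Bellman completeness, while the rest of the argument parallels the contextual-bandit proof of \Cref{theorem: theorem: upper bounds of OfDM-Hedge CB with exact constants}.
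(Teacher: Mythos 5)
Your proposal is correct and follows essentially the same route as the paper: the same pessimistic actor--critic algorithm, the same three-way decomposition (your direct telescoping identity is exactly what the paper's induced-MDP error-decomposition lemma packages), pessimism plus Bellman completeness to kill the middle term, the layerwise policy-transfer inequality with Jensen's inequality for $x\mapsto x^{1/(2\rho_\pi)}$, and the uniform Bernstein inequality for Bellman-like losses applied with $\gU=\gF_h$ and $\gG=\gF_{h+1}(\cdot,\Pi_{h+1})$ to avoid any dependence on the number of Hedge iterations. The only cosmetic deviation is your per-layer version-space constraint versus the paper's summed-over-$h$ constraint, which changes nothing beyond constants in $\beta$.
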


We provide a specific algorithm, \Cref{algorithm: OfDM-Hedge-MDP}, that obtains the bounds in \Cref{theorem: upper bound in MDPs restated}. \Cref{algorithm: OfDM-Hedge-MDP} is essentially the OfDM-Hedge algorithm (\Cref{algorithm: OfDM-Hedge}) extended to MDPs. Note that \Cref{algorithm: OfDM-Hedge-MDP} also already appears in the prior works of \cite{xie2021bellman,nguyen-tang2023on}.

\begin{algorithm}
   \caption{Hedge for Offline Decision-Making in MDP (OfDM-Hedge-MDP)}
\begin{algorithmic}[1]
   \STATE {\bfseries Input:} Offline data $S$, function class $\gF$
   \STATE {\bfseries Hyperparameters:} Confidence parameter $\beta$, learning rate $\eta$, iteration number $T$
   \STATE Initialize $\pi^{(1)} = \{ \pi^{(1)}_h \}_{h \in [H]}$, where $\pi^{(1)}_h(\cdot|x) = \textrm{Uniform}(\gA)$, $\forall x \in \gX_h$
   \FOR{$t=1$ {\bfseries to} $T$}
   \STATE Pessimism: $f^{(t)} = \displaystyle\argmin_{f \in \gF(\beta, \pi^{(t)})}  f_1(s_1, \pi_1^{(t)})$ where 
   \begin{align*}
       \gF(\beta, \pi^{(t)}) := \left\{f \in \gF: \sum_{i \in [n]} \sum_{h \in [H]}  L_i(f_h, f_{h+1},\pi^{(t)}) - \inf_{g \in \gF} \sum_{i \in [n]} \sum_{h \in [H]}  L_i(g_h, f_{h+1},\pi^{(t)}) \leq \beta \right\}
   \end{align*}
   \STATE Hedge: $\pi^{(t+1)}_h(a|x) \propto \pi_h^{(t)}(a|x) e^{\eta f_h^{(t)}(x,a)}, \forall (x,a, h)$
   \ENDFOR
   \STATE {\bfseries Output:} A randomized policy $\hat{\pi}$ as a uniform distribution over $\{\pi^{(t)}\}_{t \in [T]}$.
   \label{algorithm: OfDM-Hedge-MDP}
\end{algorithmic}
\end{algorithm}

\paragraph{Notations.} For convenience, we denote the element-wise functionals indexed by functions and policies: 
\begin{align*}
    L_i(f_h, f_{h+1},\pi) &:= (f_h(x_h^{(i)},a_h^{(i)} ) - r_h^{(i)} - f_{h+1}(x^{(i)}_{h+1}, \pi_{h+1}))^2, \\ 
    Z_i(f_h, f_{h+1},\pi) &:= L_i(f_h, f_{h+1},\pi) -  L_i(T^{\pi}_h f_{h+1} , f_{h+1},\pi),\\ 
    \gE_h^{{\pi}}(f_h, f_{h+1})(x,a) &:= (T^{\pi}_h f_h - f_{h+1})(x,a).
\end{align*}

A key starting point for the proof of \Cref{theorem: upper bound in MDPs restated} of our upper bounds in this section is the error decomposition lemma that relies on a notion of induced MDPs, used originally in \citep{zanette2021provable} and adopted in \citep{nguyen-tang2023on}. 

\begin{definition}[Induced MDPs]
For any policy $\pi$ and any sequence of functions $Q = \{Q_h\}_{h \in [H]} \in \{\gX \times \gA \rightarrow \sR\}^H$, the $(Q,\pi)$-induced MDPs, denoted by $M(Q,\pi)$ is the MDP that is identical to the original MDP $M$ except only that the expected reward of $M(Q,\pi)$ is given by $\{r_h^{\pi,Q}\}_{h \in [H]}$, where 
\begin{align*}
    r_h^{\pi,Q}(x,a) := r_h(x,a) - (T^{\pi}_h f_h - f_{h+1})(x, a).
\end{align*}
\label{defn: induced mdp}
\end{definition}

By definition of $M(\pi,Q)$, $Q$ is the fixed point of the Bellman equation $Q_h = T^{\pi}_{h,M(\pi,Q)} Q_{h+1}$. 
\begin{lemma}
    For any policy $\pi$ and any sequence of functions $Q = \{Q_h\}_{h \in [H]} \in \{\gX \times \gA \rightarrow \sR\}^H$, we have 
\begin{align*}
    Q^{\pi}_{M(\pi, Q)} = Q,
\end{align*}
where $M(\pi, Q)$ is the induced MDP given in \Cref{defn: induced mdp}. 
\label{lemma: exact value function under the induced MDP}
\end{lemma}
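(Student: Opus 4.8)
The plan is to establish $Q^{\pi}_{M(\pi, Q)} = Q$ by backward induction on the stage index $h$, from $h = H$ down to $h = 1$, using nothing more than the one-step Bellman consistency equation for the action-value function of the fixed policy $\pi$ in the induced MDP, together with the explicit form of the modified reward $r_h^{\pi,Q}$ from \Cref{defn: induced mdp}.

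First I would observe that, since $M(\pi, Q)$ has exactly the same transition kernels $\{P_h\}_{h \in [H]}$ as $M$ and differs only in its mean rewards, the action-value function of $\pi$ in $M(\pi, Q)$ obeys, for every $h \in [H]$ and all $(x,a)$,
\begin{align*}
Q^{\pi}_{h, M(\pi, Q)}(x,a) = r_h^{\pi,Q}(x,a) + \sE_{x' \sim P_h(\cdot | x,a)}\!\big[\, Q^{\pi}_{h+1, M(\pi, Q)}(x', \pi_{h+1}) \,\big],
\end{align*}
with the terminal conventions $Q^{\pi}_{H+1, M(\pi,Q)} \equiv 0$ and $Q_{H+1} \equiv 0$. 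For the base case $h = H$, note that $[T^{\pi}_H Q_{H+1}](x,a) = r_H(x,a)$ since $Q_{H+1} \equiv 0$, so the modified reward collapses to $r_H^{\pi,Q} = Q_H$, and since there is no stage-$(H+1)$ term, $Q^{\pi}_{H, M(\pi,Q)} = r_H^{\pi,Q} = Q_H$. For the inductive step, assume $Q^{\pi}_{h+1, M(\pi,Q)} = Q_{h+1}$; substituting into the consistency equation above and expanding $r_h^{\pi,Q}$ via \Cref{defn: induced mdp}, the quantity subtracted from $r_h$ is precisely $(T^{\pi}_h Q_{h+1} - Q_h)(x,a)$, and using $[T^{\pi}_h Q_{h+1}](x,a) = r_h(x,a) + \sE_{x' \sim P_h(\cdot|x,a)}[Q_{h+1}(x',\pi_{h+1})]$ makes the reward contribution and the transition contribution cancel, leaving exactly $Q_h(x,a)$. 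This closes the induction and yields the claim.

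I do not anticipate any serious obstacle: the argument is a short telescoping computation, and the subtraction of the Bellman residual in the definition of $r_h^{\pi,Q}$ is designed precisely so that the one-step recursion for $Q^{\pi}_{M(\pi,Q)}$ reproduces $Q_h$ at every level. The only points demanding care are notational — that $T^{\pi}_h$ implicitly averages its argument over $a' \sim \pi_{h+1}(\cdot|x')$, so that $[T^{\pi}_h Q_{h+1}](x,a)$ and the transition term in the induced-MDP recursion refer to the same quantity $\sE_{x' \sim P_h(\cdot|x,a)}[Q_{h+1}(x',\pi_{h+1})]$; that $M(\pi,Q)$ retains the original dynamics, so these expectations are never altered; and that one must invoke the terminal condition $Q_{H+1} \equiv 0$ to make the base case go through cleanly.
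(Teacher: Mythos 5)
Your proof is correct and is exactly the paper's (essentially one-line) argument made explicit: the paper simply remarks that, by construction of $r_h^{\pi,Q}$, the sequence $Q$ is the fixed point of the Bellman equation $Q_h = T^{\pi}_{h,M(\pi,Q)}Q_{h+1}$, which is your backward induction. You also correctly read the Bellman residual in \Cref{defn: induced mdp} as $(T^{\pi}_h Q_{h+1} - Q_h)(x,a)$ despite the swapped indices in the paper's displayed formula, consistent with how the residual is used elsewhere in the paper.
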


A key lemma that we use is the following error decomposition. 
\begin{lemma}[Error decomposition]
    For any action-value functions $Q \in \{\gS \times \gA \rightarrow \sR\}^H$ and any policies $\pi, \tilde{\pi} \in \Pi$, we have 
    \begin{align*}
        \subopt^M_{\pi}(\tilde{\pi}) = \sum_{h=1}^H \sE_{\pi} [\gE_h^{\tilde{\pi}}(Q_h, Q_{h+1})(x_h,a_h)] + Q_1(x_1, \tilde{\pi}_1) - V_{1,M}^{\tilde{\pi}}(x_1) + \subopt_{\pi}^{M(Q, \tilde{\pi})}(\tilde{\pi}). 
    \end{align*}
    \label{lemma: error decomposition}
\end{lemma}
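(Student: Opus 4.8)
The plan is to prove \Cref{lemma: error decomposition} by comparing the true MDP $M$ with the induced MDP $M' := M(Q,\tilde{\pi})$ from \Cref{defn: induced mdp}. Recall that $M'$ has the same transition kernel as $M$ and differs only in its reward, with $r_h^{M'} = r_h - \gE_h^{\tilde{\pi}}(Q_h, Q_{h+1})$. The one elementary fact I will use repeatedly is that for a \emph{fixed} policy $\nu$ and two MDPs $M_1, M_2$ sharing dynamics but with rewards $r^{(1)}, r^{(2)}$, one has $V_{1,M_1}^{\nu}(x_1) - V_{1,M_2}^{\nu}(x_1) = \sE_{\nu}\big[\sum_{h=1}^{H} (r_h^{(1)} - r_h^{(2)})(x_h,a_h)\big]$, which is immediate from $V_{1,M_j}^{\nu}(x_1) = \sE_{\nu}[\sum_{h=1}^{H} r_h^{(j)}(x_h,a_h)]$ since the trajectory law depends only on $\nu$ and the shared dynamics. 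I will also invoke \Cref{lemma: exact value function under the induced MDP}, which gives $Q^{\tilde{\pi}}_{M'} = Q$, hence $V_{1,M'}^{\tilde{\pi}}(x_1) = Q_1(x_1, \tilde{\pi}_1)$.

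With $M' = M(Q,\tilde{\pi})$ in hand, the decomposition is just a two-step telescoping:
\begin{align*}
\subopt^M_{\pi}(\tilde{\pi}) = V_{1,M}^{\pi}(x_1) - V_{1,M}^{\tilde{\pi}}(x_1)
&= \big(V_{1,M}^{\pi}(x_1) - V_{1,M'}^{\pi}(x_1)\big) \\
&\quad + \big(V_{1,M'}^{\pi}(x_1) - V_{1,M'}^{\tilde{\pi}}(x_1)\big) + \big(V_{1,M'}^{\tilde{\pi}}(x_1) - V_{1,M}^{\tilde{\pi}}(x_1)\big).
\end{align*}
The middle term is exactly $\subopt_{\pi}^{M(Q,\tilde{\pi})}(\tilde{\pi})$ by definition of sub-optimality. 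For the first term, apply the reward-difference identity with $\nu = \pi$, $M_1 = M$, $M_2 = M'$ to obtain $\sum_{h=1}^{H} \sE_{\pi}[(r_h - r_h^{M'})(x_h,a_h)] = \sum_{h=1}^{H} \sE_{\pi}[\gE_h^{\tilde{\pi}}(Q_h, Q_{h+1})(x_h,a_h)]$. For the third term, \Cref{lemma: exact value function under the induced MDP} collapses $V_{1,M'}^{\tilde{\pi}}(x_1)$ to $Q_1(x_1,\tilde{\pi}_1)$, so it equals $Q_1(x_1,\tilde{\pi}_1) - V_{1,M}^{\tilde{\pi}}(x_1)$. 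Adding the three pieces gives the claim.

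This is mostly bookkeeping; the points that need care are (a) the direction and sign in the reward-difference identity --- it is the \emph{comparator} policy $\pi$ whose occupancy appears in front of $\gE_h^{\tilde{\pi}}$, because the first term of the telescoping keeps $\pi$ fixed while switching MDPs, and the sign works out because $r_h^{M'} = r_h - \gE_h^{\tilde{\pi}}$; and (b) the asymmetry between $\pi$ and $\tilde{\pi}$ in the final statement, which stems precisely from the fact that the reward perturbation defining $M'$ is built around $\tilde{\pi}$, so only $V_{1,M'}^{\tilde{\pi}}$ --- and not $V_{1,M'}^{\pi}$ --- simplifies via \Cref{lemma: exact value function under the induced MDP}, the latter being absorbed into $\subopt_{\pi}^{M(Q,\tilde{\pi})}(\tilde{\pi})$. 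Before finalizing I would double-check that $\gE_h^{\tilde{\pi}}(Q_h,Q_{h+1})$ as used here is, up to the paper's indexing convention, the same reward correction $T^{\tilde{\pi}}_h Q_{h+1} - Q_h$ appearing in \Cref{defn: induced mdp}, so that the substitution in the first term is exact and no extra terms appear.
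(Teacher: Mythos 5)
Your proof is correct and is essentially identical to the paper's: the same three-term telescoping through the induced MDP $M(Q,\tilde{\pi})$, with the first term handled by the shared-dynamics reward-difference identity and the third collapsed via \Cref{lemma: exact value function under the induced MDP}. Your closing caveat about the indexing of $\gE_h^{\tilde{\pi}}$ is well placed — the paper's \Cref{defn: induced mdp} and the definition of $\gE_h^{\pi}$ contain notational slips ($T^{\pi}_h f_h - f_{h+1}$ where $T^{\pi}_h Q_{h+1} - Q_h$ is intended), but the argument goes through exactly as you wrote it once that convention is fixed.
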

\begin{proof}[Proof of \Cref{lemma: error decomposition}]
We have 
\begin{align*}
    &\subopt^M_{\pi}(\tilde{\pi}) = V_1^{\pi}(x_1) - V_1^{\tilde{\pi}}(x_1) \\ 
    &= \left(V_1^{\pi}(x_1) - V_{1,M(Q,\tilde{\pi})}^{\pi}(x_1) \right)+ \left(V_{1,M(Q,\tilde{\pi})}^{\tilde{\pi}}(x_1) - V_1^{\tilde{\pi}}(x_1) \right) + \left(V_{1,M(Q,\tilde{\pi})}^{\pi}(x_1)  - V_{1,M(Q,\tilde{\pi})}^{\tilde{\pi}}(x_1) \right) \\ 
    &=\sum_{h=1}^H \sE_{\pi} [\gE_h^{\tilde{\pi}}(Q_h, Q_{h+1})(x_h,a_h)] + Q_1(x_1, \tilde{\pi}_1) - V_1^{\tilde{\pi}}(x_1) + \subopt_{\pi}^{M(Q, \tilde{\pi})}(\tilde{\pi}), 
\end{align*}
where in the last equality, for the first term, we use, by \Cref{defn: induced mdp}, that 
\begin{align*}
    V_1^{\pi}(x_1) - V_{1,M(Q,\tilde{\pi})}^{\pi}(x_1) &= \sum_{h=1}^H \sE_{\pi} \left[ r_h(x_h, a_h) - r_h^{\tilde{\pi},Q}(x_h, a_h) \right] =\sum_{h=1}^H \sE_{\pi} [\gE_h^{\tilde{\pi}}(Q_h, Q_{h+1})(x_h,a_h)],
\end{align*}
for the second term, we use, by \Cref{lemma: exact value function under the induced MDP}, that 
\begin{align*}
    V_{1,M(Q,\tilde{\pi})}^{\pi}(x_1) = Q_{1,M(Q,\tilde{\pi})}^{\pi}(x_1, \tilde{\pi}_1) = Q_1(x_1, \tilde{\pi}). 
\end{align*}
\end{proof}

\begin{lemma}[\cite{nguyen-tang2023on}]
Consider an arbitrary sequence of value functions $\{Q^t\}_{t \in [T]}$ such that $\max_{h,t}\| Q^t_h \|_{\infty} \leq b$ and define the following sequence of policies $\{\pi^t\}_{t \in [T+1]}$ where 
\begin{align*}
    \pi^1(\cdot|s) &= \unif(\gA), \forall s, \\ 
    \pi^{t+1}_h(a|s) &\propto \pi^t_h(a|s) \exp \left( \eta Q^t_h(s,a) \right), \forall (s,a,h,t). 
\end{align*}
Suppose $\eta = \sqrt{\frac{\ln K}{4(e-2) b^2 T}}$ and $T \geq \frac{\ln K}{(e-2)}$. For an arbitrary policy $\pi \in \Pi$, we have
\begin{align*}
    \sum_{t=1}^T \left( V^{\pi}_{1,M(\pi^t, Q^t)}(x_1) - V_{1,M(\pi^t, Q^t)}^{\pi^t}(x_1) \right) \leq 4H b \sqrt{T \log K}.
\end{align*}
\label{lemma: bounding value difference under soft policy update}
\end{lemma}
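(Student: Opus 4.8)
The plan is to reduce the claimed bound to a per-state application of the standard Hedge (exponential-weights) regret guarantee, exploiting the fact that every induced MDP $M(\pi^t, Q^t)$ (\Cref{defn: induced mdp}) shares the transition dynamics of $M$ and differs from it only in the reward labelling.

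First I would fix the comparator $\pi \in \Pi$ and a single index $t$, and analyze the value gap inside the induced MDP $M(\pi^t,Q^t)$. By \Cref{lemma: exact value function under the induced MDP}, the action-value function of $\pi^t$ in its own induced MDP is exactly $Q^t$, i.e.\ $Q^{\pi^t}_{h,M(\pi^t,Q^t)} = Q^t_h$ for all $h$. Because $M(\pi^t,Q^t)$ has the same transition kernel as $M$, the step-$h$ state-visitation distribution $d^\pi_h$ of $\pi$ is identical in $M$ and in every induced MDP, so it does not depend on $t$. Applying the standard performance-difference lemma in $M(\pi^t,Q^t)$ to the pair $(\pi,\pi^t)$ therefore gives
$$
V^\pi_{1,M(\pi^t,Q^t)}(x_1) - V^{\pi^t}_{1,M(\pi^t,Q^t)}(x_1) = \sum_{h=1}^H \sE_{x_h \sim d^\pi_h}\bigl[ \langle Q^t_h(x_h,\cdot),\, \pi_h(\cdot|x_h) - \pi^t_h(\cdot|x_h)\rangle \bigr].
$$

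Next I would sum this identity over $t \in [T]$ and exchange the finite sums, writing the left-hand side as $\sum_{h=1}^H \sE_{x_h \sim d^\pi_h}\bigl[ \sum_{t=1}^T \langle Q^t_h(x_h,\cdot), \pi_h(\cdot|x_h) - \pi^t_h(\cdot|x_h)\rangle\bigr]$. For each fixed state $s$ and step $h$, the inner sum is exactly the regret of the exponential-weights iterates $\{\pi^t_h(\cdot|s)\}_t$ — which by construction obey $\pi^{t+1}_h(\cdot|s) \propto \pi^t_h(\cdot|s)\exp(\eta Q^t_h(s,\cdot))$ — against the fixed comparator $\pi_h(\cdot|s)$, with per-round gain vectors $Q^t_h(s,\cdot) \in [0,b]^K$. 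The same pointwise Hedge guarantee invoked in \Cref{lemma: exponential weights update}, rescaled by the gain bound $b$, yields regret at most $\tfrac{\ln K}{\eta} + \eta(e-2)b^2 T$; the hypothesis $T \ge \tfrac{\ln K}{e-2}$ guarantees $\eta b \le \tfrac12 \le 1$ (so that the quadratic bound on $\exp$ applies), and substituting $\eta = \sqrt{\tfrac{\ln K}{4(e-2)b^2 T}}$ gives a value of $\tfrac{5b}{2}\sqrt{(e-2)T\ln K} \le 4b\sqrt{T\log K}$, uniformly over $s$ and $h$. Since this bound is deterministic and uniform in $s$, it survives the expectation $\sE_{x_h \sim d^\pi_h}[\cdot]$, and summing the $H$ resulting terms produces the claimed $4Hb\sqrt{T\log K}$.

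The main obstacle is conceptual rather than computational and lies in the first step: correctly setting up the performance-difference decomposition inside the induced MDP and isolating the two facts that make it collapse into a Hedge regret — namely that $Q^t$ is precisely $\pi^t$'s action-value in $M(\pi^t,Q^t)$ (\Cref{lemma: exact value function under the induced MDP}) and that, because only rewards are modified, the visitation measure $d^\pi_h$ is common to all the induced MDPs and can thus be pulled outside the sum over $t$. Once these are established, the remainder is the textbook analysis of exponential weights applied state-by-state, with the extra factors $H$ (from the horizon sum) and $b$ (from the gain range) being the only bookkeeping.
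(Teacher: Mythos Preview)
Your proposal is correct. The paper does not supply its own proof of this lemma---it is quoted directly from \cite{nguyen-tang2023on}---and your route (performance-difference lemma inside $M(\pi^t,Q^t)$, using \Cref{lemma: exact value function under the induced MDP} to identify $Q^{\pi^t}_{M(\pi^t,Q^t)}=Q^t$ and the shared dynamics to make $d^\pi_h$ independent of $t$, then a per-state Hedge bound summed over $h$) is exactly the standard argument behind that cited result.
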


\begin{proof}[Proof of \Cref{theorem: upper bound in MDPs restated}]
    By \Cref{lemma: error decomposition}, for every $\pi, t$, we have
    \begin{align*}
        \subopt^M_{\pi}(\pi^{(t)}) = \underbrace{\sum_{h=1}^H \sE_{\pi} [(T^{\pi^{(t)}}_h f^{(t)}_{h+1} - f^{(t)}_h)(s_h,a_h)]}_{A_t} + \underbrace{f^{(t)}_1(s_1, \pi^{(t)}_1) - V_{1,M}^{\pi^{(t)}}(s_1)}_{B_t} + \underbrace{\subopt_{\pi}^{M(f^{(t)}, \pi^{(t)})}(\pi^{(t)})}_{C_t}. 
    \end{align*}
    Thus, we have 
    \begin{align*}
        \sE \left[\subopt_M^\pi(\hat{\pi}) | S \right] = \frac{1}{T} \sum_{t=1}^T A_t +B_t + C_t. 
    \end{align*}
    Note that by \Cref{lemma: bounding value difference under soft policy update}, we have 
    \begin{align*}
        \sum_{t=1}^T C_t \leq 4H^2 \sqrt{T \ln K}. 
    \end{align*}
    Thus, we now only need to bound $\sum_{t=1}^T A_t$ and $\sum_{t=1}^T B_t$. Note that for every $\pi$ and $\tilde{\pi}$, we have
    \begin{align}
    \sum_{h=1}^H \sE_{\pi} \left[ (T^{\tilde{\pi}}_h f_{h+1} - f_h)(s_h,a_h) \right] &\leq \sum_{h=1}^H  C_\pi ^{1/(2 \rho_\pi)} \left(\sE_{\mu} \left[ (T^{\tilde{\pi}}_h f_{h+1} - f_h)(s_h,a_h)^2 \right] \right)^{1/(2 \rho_\pi)} \nonumber \\ 
    &\leq C_\pi ^{1/(2 \rho_\pi)} H^{1 - 1/(2 \rho_\pi)}  \left(\sE_{\mu} \left[ \sum_{h=1}^H (T^{\tilde{\pi}}_h f_{h+1} - f_h)(s_h,a_h)^2 \right] \right)^{1/(2 \rho_\pi)},
    \label{equation: bound bellman error by sum and policy transfer}
\end{align}
where the first inequality follows from the Bellman completeness assumption and the transfer exponent definition, the second inequality follows from Jensen's inequality for a concave function $x \mapsto x^{1/(2 \rho_{\pi})}$ as long as $\rho_{\pi} \geq 1/2$. Thus, to bound $\sum_{t=1}^T A_t$, it suffices to bound the in-distribution squared Bellman error $\sE_{\mu} \left[ \sum_{h=1}^H (T_h^{\pi^{(t)}} f^{(t)}_{h+1} - f^{(t)}_{h})(s_h,a_h)^2 \right]$. This relies on the uniform Bernstein's inequality for Bellman-like loss functions we have developed in \Cref{section: uniform Bernstein's inequality}. 

\begin{lemma}[Uniform Bernstein's inequality for Bellman-like loss functions]
    Fix any $\eps > 0$. With probability at least $1 - \delta$, for any $f \in \gF, \pi \in \Pi$,  
    \begin{align*}
        \sE [(f_h - T^{\pi}_h f_{h+1})^2] &\leq \frac{2}{n} \sum_{t=1}^n Z_t(f_h, f_{h+1},\pi) \\
        &+ \inf_{\eps > 0} \left\{ 108 H \eps + H^2 \frac{36 \log N_1(\gF_h, \eps,n) + 83 \log N_1(\gF_{h+1}(\cdot, \Pi_{h+1}), \eps,n) + 108 \log(12/\delta)}{n} \right\}. 
    \end{align*}

    In addition, with probability at least $1 - \delta$, for any $f \in \gF, \pi \in \Pi$,
    \begin{align*}
         -\frac{1}{n} \sum_{t=1}^n Z_t(f_h, f_{h+1},\pi) \leq \inf_{\eps > 0} \left\{ 32 H \eps + H^2\frac{4 \log N_1(\gF_h, \eps,n)  + 28 \log N_1(\gF_{h+1}(\cdot, \Pi_{h+1}), \eps,n) + 24 \log(6/\delta)}{n} \right\}.
    \end{align*}
    \label{lemma: Uniform Bernstein's inequality for MDP}
\end{lemma}

\begin{proof}[Proof of \Cref{lemma: Uniform Bernstein's inequality for MDP}]
This is a direct application of \Cref{theorem: Uniform Bernstein's inequality}.
    
\end{proof}

Now let's fix $\eps \geq 0$ and $\delta \in [0,1]$. We use $c$ to denote an absolute constant that can change its value at every of its appearance, as we are not interested in absolute constant factors and would like to simplify the presentation. Set $\beta$ in \Cref{algorithm: OfDM-Hedge-MDP} by
\begin{align*}
    \beta = c \left(H^2 \eps + H^3 \frac{d(\eps, n) + \log(H/\delta)}{n} \right). 
\end{align*}
By the second part of \Cref{lemma: Uniform Bernstein's inequality for MDP}, \Cref{assumption: realizability for mdp}, and \Cref{assumption: Bellman completeness}, we have 
\begin{align*}
    \pr \left( \forall \pi, Q^{\pi} \in \gF(\beta, \pi) \right) \geq 1 - \delta.
\end{align*}
Thus, we have 
\begin{align*}
    \pr \left( B_t \leq 0, \forall t \right) \geq 1 - \delta. 
\end{align*}

For bounding $\sum_{t=1}^T A_t$, note that we have 
\begin{align*}
    \sum_{i=1}^n Z_i(f_h^{(t)}, f^{(t)}_{h+1}, \pi^{(t)}) \leq 
    \sum_{i \in [n]} \sum_{h \in [H]}  L_i(f^{(t)}_h, f^{(t)}_{h+1},\pi^{(t)}) - \inf_{g \in \gF} \sum_{i \in [n]} \sum_{h \in [H]}  L_i(g_h, f^{(t)}_{h+1},\pi^{(t)}) \leq \beta,
\end{align*}
where the first inequality follows from \Cref{assumption: Bellman completeness} and the second inequality follows from the design of \Cref{algorithm: OfDM-Hedge-MDP}. Thus, by the first part of \Cref{lemma: Uniform Bernstein's inequality for MDP}, with probability at least $1 - \delta$, we have
    \begin{align*}
        \forall t, \sE \left[ \sum_{h=1}^H (T^{\pi^{(t)}}_h f^{(t)}_{h+1} - f^{(t)}_h)(s_h,a_h)^2 \right] \leq c \left(H^2 \eps + H^3 \frac{d(\eps, n) + \log(H/\delta)}{n} \right).
    \end{align*}
    Plugging the above inequality into the RHS of \Cref{equation: bound bellman error by sum and policy transfer} leads to an upper bound on $\sum_{t=1}^T A_t$. 

\end{proof}

\section{Support Lemmas}

\begin{lemma}[Freedman's inequality]
Let $X_1, \ldots, X_T$ be \emph{any} sequence of real-valued random variables. Denote $\sE_t[\cdot] = \sE[\cdot| X_1, \ldots, X_{t-1}]$. Assume that $X_t \leq R$ for some $R > 0$ and $\sE_t[X_t]=0$ for all $t$. Define the random variables
\begin{align*}
    S := \sum_{t=1}^T X_t, \hspace{10pt} V := \sum_{i=1}^T \sE_t[X_t^2]. 
\end{align*}
Then for any $\delta > 0$, with probability at least $1 - \delta$, for any $\lambda \in [0,1/R]$, 
\begin{align*}
    S \leq (e - 2) \lambda V + \frac{\ln(1/\delta)}{\lambda}.
\end{align*}
\label{lemma: Freedman inequality}
\end{lemma}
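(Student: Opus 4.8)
The plan is the textbook exponential-supermartingale argument. Throughout, fix $\lambda \in [0,1/R]$: the statement is to be read -- and is used, e.g.\ in \Cref{lemma: Freedman's inequality in covering functions} with $\lambda$ a specific constant -- as holding for each such $\lambda$ chosen in advance (independently of the data), not simultaneously over all of them. We may assume $0 < \delta < 1$ (otherwise the claim is vacuous) and $\lambda > 0$ (the case $\lambda = 0$ being trivial, as then the right-hand side is $+\infty$). First I would record the elementary inequality $e^x \le 1 + x + (e-2)x^2$ valid for all $x \le 1$, which follows because $x \mapsto (e^x - 1 - x)/x^2$ (set equal to $1/2$ at $x=0$) is nondecreasing on $\mathbb{R}$ and takes the value $e-2$ at $x = 1$.

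Next, since $X_t \le R$ and $\lambda R \le 1$ we have $\lambda X_t \le 1$, so the inequality above gives $e^{\lambda X_t} \le 1 + \lambda X_t + (e-2)\lambda^2 X_t^2$; taking $\sE_t[\cdot]$ and using $\sE_t[X_t] = 0$ together with $1 + u \le e^u$,
\[
\sE_t\!\big[e^{\lambda X_t}\big] \;\le\; 1 + (e-2)\lambda^2\,\sE_t[X_t^2] \;\le\; \exp\!\big((e-2)\lambda^2\,\sE_t[X_t^2]\big).
\]
Define $M_0 := 1$ and $M_t := M_{t-1}\exp\!\big(\lambda X_t - (e-2)\lambda^2\,\sE_t[X_t^2]\big)$. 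Since both $M_{t-1}$ and the factor $\exp(-(e-2)\lambda^2\sE_t[X_t^2])$ are measurable with respect to $X_1,\dots,X_{t-1}$, the display yields $\sE_t[M_t] \le M_{t-1}$, so $(M_t)_{t=0}^T$ is a nonnegative supermartingale with $\sE[M_T] \le \sE[M_0] = 1$. By Markov's inequality, $\pr[M_T \ge 1/\delta] \le \delta$, and on the complementary event $\ln M_T < \ln(1/\delta)$, i.e.\ $\lambda S - (e-2)\lambda^2 V < \ln(1/\delta)$; dividing by $\lambda > 0$ gives $S < (e-2)\lambda V + \ln(1/\delta)/\lambda$, which is the claim.

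I do not expect a genuine obstacle: this is a standard concentration argument. The only subtlety worth flagging is precisely the quantifier order -- the bound cannot hold uniformly over all $\lambda \in [0,1/R]$ at once with the stated constant (that form is equivalent to an ``anytime''/variance-optimized Freedman bound, and would incur extra $\log\log$-type factors from a union bound over a grid of $\lambda$ or from the method of mixtures), so one must keep $\lambda$ fixed in advance, as the downstream uses do. The remaining points -- the monotonicity fact behind the numerical inequality, and that the hypothesis $\lambda R \le 1$ is exactly what licenses applying it at $x = \lambda X_t$ -- are routine.
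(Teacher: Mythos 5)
Your proof is correct: the paper states Freedman's inequality as an imported support lemma and gives no proof of its own, and your exponential-supermartingale argument (the elementary bound $e^x \le 1+x+(e-2)x^2$ for $x\le 1$, the supermartingale $M_t$, and Markov's inequality) is exactly the canonical derivation of this form of the result. Your caveat about the quantifier order is also well-taken and consistent with how the paper actually invokes the lemma -- e.g.\ in \Cref{lemma: Freedman's inequality in covering functions} a single fixed $\lambda$ is chosen in advance -- so the fixed-$\lambda$ version you prove is all that is needed.
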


The following lemma exploits the non-negativity of the function class to obtain a fast estimation error rate when relating the population quantity to the empirical one. 
\begin{lemma}
    Consider any function class $\gH \subseteq \{\gZ \rightarrow [0,b]\}$ for some $b >0$ and let $S_n = \{z_1,\ldots,z_n\}$ be an i.i.d. sample from a distribution $P \in \Delta(\gZ)$. For any $\delta \in (0,1)$, with probability at least $1 - \delta$ over the randomness of $S_n$, we have
    \begin{align*}
        \forall h \in \gH: P h \leq 4 \hat{P}_n h + \inf_{\eps >0} \left[ 8  \eps + \frac{12 b \ln(3 N_1(\gH, \eps, S_n)/\delta)}{n} \right]. 
    \end{align*}
    \label{lemma: fast rates with non-negative functions}
\end{lemma}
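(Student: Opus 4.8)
The plan is to read \Cref{lemma: fast rates with non-negative functions} as a uniform \emph{multiplicative} (relative-deviation) tail bound for nonnegative bounded functions, and to assemble it from a sharp single-function Bernstein estimate together with an $L_1$-covering argument. The one ingredient that makes a fast $\widetilde{O}(1/n)$ rate possible is the variance proxy: since $0\le h\le b$ we have $h^2\le b\,h$ pointwise, hence $\mathrm{Var}(h)\le P(h^2)\le b\,Ph$. Feeding this into the one-sided Bernstein inequality for i.i.d.\ bounded variables gives, for a fixed $h$ and any $t>0$, with probability at least $1-e^{-t}$,
\[
Ph-\hat{P}_n h\;\le\;\sqrt{\tfrac{2b\,Ph\,t}{n}}+\tfrac{b t}{3n}.
\]
Applying AM--GM to the square-root term, $\sqrt{2b\,Ph\,t/n}\le\tfrac12 Ph+bt/n$, moving $\tfrac12 Ph$ to the left-hand side and rearranging yields $Ph\le 2\hat{P}_n h+\tfrac{8 b t}{3n}$ for a single $h$ — already the desired shape, with a small leading constant.

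Next I would make this uniform over $\gH$. Fix $\eps>0$, set $N:=N_1(\gH,\eps,S_n)$, and let $\gH'=\{h_1,\dots,h_N\}$ be a minimal $\eps$-cover of $\gH$ in $L_1(\hat{P}_{S_n})$; run the single-function estimate with $t=\ln(3N/\delta)$ and union-bound over $\gH'$, reserving the rest of the probability mass for the auxiliary event below (this is the source of the $3$ inside $\ln(3N/\delta)$). For an arbitrary $h\in\gH$ pick $h'=\pi(h)\in\gH'$ with $\hat{P}_n|h-h'|\le\eps$; then $\hat{P}_n h'\le\hat{P}_n h+\eps$ and $Ph\le Ph'+P|h-h'|$. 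The first piece is immediately controlled: $Ph'\le 2\hat{P}_n h'+\tfrac{8b\ln(3N/\delta)}{3n}\le 2\hat{P}_n h+2\eps+\tfrac{8b\ln(3N/\delta)}{3n}$. The remaining piece, $P|h-h'|$, is the crux — although $|h-h'|$ is again nonnegative, bounded by $b$, and has \emph{empirical} mean at most $\eps$ by construction, passing from $\hat{P}_n|h-h'|\le\eps$ to a genuine bound on $P|h-h'|$ requires controlling this uniformly over all $h\in\gH$, i.e.\ over the still-infinite class $\{|h-\pi(h)|:h\in\gH\}$ every member of which is empirically $\eps$-small.

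The step I expect to be the main obstacle is precisely this uniform control of $P|h-h'|$, compounded by the fact that the cover $\gH'$ itself is measured in the \emph{data-dependent} metric $L_1(\hat{P}_{S_n})$, so a naive union bound over $\gH'$ (and over the data-dependent difference class) is not licensed. I would resolve both issues in one stroke with the classical ghost-sample symmetrization behind Vapnik-type relative-deviation bounds: introduce an independent copy $S_n'$, bound the probability that $Ph-\hat{P}_n h$ is large by a constant times the probability that $\hat{P}_{S_n'} h-\hat{P}_n h$ is large, then condition on the pooled multiset $S_n\cup S_n'$ and finish with a permutation/Hoeffding argument together with a union bound over a cover of the pooled sample; on the doubled sample the empirical mean of each $|h-\pi(h)|$ over the real half is $\le\eps$ by construction, which is what turns the residual into an $O(\eps)+O(b\log(N/\delta)/n)$ term. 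Carrying the triangle-inequality slack incurred on \emph{both} halves of the pooled sample, together with the factor picked up from symmetrization, is what inflates the bookkeeping constants to the somewhat lossy $4$, $8$, and $12$ in the statement; taking the infimum over $\eps$ at the end then delivers the claimed bound. It is worth noting, as motivation for the downstream use of this lemma, that when $\gH$ is an empirical $\eps$-ball around a point it is $\eps$-covered by that single point, so $N=1$ and the bound collapses to the "population ball from an empirical ball" statements used in the rest of the paper.
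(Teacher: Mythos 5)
Your proposal is correct in substance but takes a genuinely different route from the paper. The paper's proof of \Cref{lemma: fast rates with non-negative functions} is a two-line reduction: it invokes \citep[Theorem~4.12]{zhang2023mathematical} for the unit-range case, applies it to the rescaled class $\gH/b$, uses $N_1(\gH/b,\eps,S_n)\le N_1(\gH,\eps b,S_n)$, and substitutes $\eps\mapsto\eps/b$. You instead reconstruct the cited theorem from first principles. Your single-function step is exactly right and isolates the mechanism behind the fast rate: $0\le h\le b$ gives $\mathrm{Var}(h)\le P(h^2)\le b\,Ph$, and Bernstein plus AM--GM yields $Ph\le 2\hat{P}_n h+\tfrac{8bt}{3n}$. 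You also correctly diagnose that the entire difficulty of the uniform statement lies in the cover being taken in the data-dependent metric $L_1(\hat{P}_{S_n})$, which rules out a naive union bound over the cover (and over the residual class $\{|h-\pi(h)|\}$), and forces a ghost-sample symmetrization with a permutation argument on the pooled sample; that is indeed how relative-deviation bounds of this type are proved. Your route buys self-containedness and a transparent account of where each term comes from; the paper's route buys brevity and constants inherited verbatim from the citation.

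One caveat to track if you execute the symmetrization: that argument naturally yields a covering number of the pooled $2n$-point sample $S_n\cup S_n'$ (or its supremum over all $2n$-point samples), whereas the lemma is stated with $N_1(\gH,\eps,S_n)$, the covering number on the observed sample alone. The distinction is not cosmetic in this paper: the downstream applications (e.g., \Cref{lemma: bound E|g-gi|}) instantiate $\gH$ as a difference class built from an \emph{empirical} $\eps$-ball and use that this ball is covered by a single function --- a fact that holds in the metric induced by $S_n$ but fails in the metric induced by a ghost sample. So you would either need the variant of the argument that retains the cover on $S_n$ itself, or accept a bound in terms of $N_1(\gH,\eps,2n)$ and revisit the $N=1$ step in the applications. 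Modulo that bookkeeping and the exact constants $4$, $8$, $12$, your plan is sound.
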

\begin{remark}
    \Cref{lemma: fast rates with non-negative functions} is a generalization of \citep[Theorem~4.12]{zhang2023mathematical} from a function range $[0,1]$ to an arbitrary function range $[0,b]$, i.e. setting $b=1$ in the above lemma reduces to \citep[Theorem~4.12]{zhang2023mathematical}. 
\end{remark}
\begin{proof}[Proof of \Cref{lemma: fast rates with non-negative functions}]
    We start from \citep[Theorem~4.12, \revise{with $\gamma=0.5$ in their theorem}]{zhang2023mathematical} which corresponds to the case $b=1$ of the above lemma. Let $\gH / b := \{h/b: h \in \gH\}$. Since $\|h'\|_{\infty} \leq 1, \forall h' \in \gH/b$, we can apply \citep[Theorem~4.12]{zhang2023mathematical} to $\gH/b$: With probability at least $1 - \delta$: $\forall h \in \gH$, we have
    \begin{align*}
        P\frac{h}{b} &\leq 4 \hat{P}_n \frac{h}{b} + \inf_{\eps >0} \left[ 8  \eps + \frac{12 \ln(3 N_1( \gH/b,\eps, S_n)/\delta)}{n} \right] \\ 
        &\leq 4 \hat{P}_n \frac{h}{b} + \inf_{\eps >0} \left[ 8  \eps + \frac{12 \ln(3 N_1( \gH,\eps b, S_n)/\delta)}{n} \right].
    \end{align*}
    The above inequality implies that 
    \begin{align*}
        P h &\leq 4 \hat{P}_n h + \inf_{\eps >0} \left[ 8  b \eps + \frac{12 b \ln(3 N_1( \gH, \eps b,S_n)/\delta)}{n} \right] \\ 
        &\leq 4 \hat{P}_n h + \inf_{\eps >0} \left[ 8  \eps + \frac{12 b \ln(3 N_1(\gH,\eps,  S_n)/\delta)}{n} \right],
    \end{align*}
    where the last inequality follows from replacing $\eps$ by $\eps/b$ in the first inequality.
\end{proof}

\begin{remark}
    It is possible to obtain a tighter bound specified by the Rademacher complexity of the function class $\gH$ in the above lemma, if we are willing to make an additional assumption that each function in $\gH$ is smooth (and non-negative, which is already satisfied in the above lemma). \revise{The fast rates are achievable via the optimistic rate framework of \cite{srebro2010optimistic}.} The smooth and non-negative condition is satisfied anyway in our case as we use squared loss. However, this result comes at the cost of a large absolute constant in the upper bound. Also, this stronger upper bound ultimately does not benefit our case as our bounds still depend on log-covering numbers, instead of entirely depending on Rademacher complexity. 
\end{remark}

\begin{lemma}
    Let $\gF: \gX \times \gA \rightarrow \sR$, let $\Pi = \{\gX \rightarrow \Delta(\gA)\}$ be the set of all policies. Suppose that $|\gA| = K$. For any $p \geq 1, n \in \sN$, we have
    \begin{align*}
        \max\left\{N_p( \gF, \eps, n), N_p( \gF(\cdot, \Pi),\eps, n) \right\} \leq \prod_{a \in \gA} N_p( \gF(\cdot,a),\frac{\eps}{K^{1/p}}, n). 
    \end{align*}
    \label{lemma: combine coverining numbers to those of factorized classes}
\end{lemma}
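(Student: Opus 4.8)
The plan is to build an explicit \emph{product cover}: cover each per-action class $\gF(\cdot,a)$, $a\in\gA$, at the finer scale $\eps/K^{1/p}$, glue the chosen per-action cover elements into a single function, and bound the aggregate $L_p$ error by summing $K$ nonnegative per-action contributions, each of size $(\eps/K^{1/p})^p$, for a total of $\eps^p$. The only nontrivial ingredient is the power-mean (Jensen) inequality for $t\mapsto|t|^p$, $p\ge 1$.

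Fix a sample and set $\delta:=\eps/K^{1/p}$. Let $\bar S=\{x_1,\dots,x_n\}$ be the $\gX$-component of the sample. For each $a$ take a minimal $\delta$-cover $G_a$ of $\gF(\cdot,a)$ in $L_p(\hat P_{\bar S})$, so that $|G_a|\le N_p(\gF(\cdot,a),\delta,n)$, and let $\widetilde\gF$ be the finite class, of cardinality $\prod_{a\in\gA}|G_a|$, whose members are the functions $(x,a)\mapsto g_a(x)$ indexed by tuples $(g_a)_{a\in\gA}\in\prod_{a}G_a$. Given $f\in\gF$, choose $g_a\in G_a$ with $\frac1n\sum_{i\in[n]}|f(x_i,a)-g_a(x_i)|^p\le\delta^p$ for each $a$, and set $\tilde f(x,a):=g_a(x)$.

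For $N_p(\gF,\eps,n)$: with the sample written $\{(x_i,a_i)\}_{i\in[n]}$, for each $i$ the quantity $|f(x_i,a_i)-\tilde f(x_i,a_i)|^p$ is one of the $K$ nonnegative summands of $\sum_{a\in\gA}|f(x_i,a)-g_a(x_i)|^p$, hence $\|f-\tilde f\|_{L_p(\hat P_S)}^p\le\sum_{a\in\gA}\frac1n\sum_{i\in[n]}|f(x_i,a)-g_a(x_i)|^p\le K\delta^p=\eps^p$, and taking the supremum over samples gives the first bound. For $N_p(\gF(\cdot,\Pi),\eps,n)$: the sample now lies in $\gX$; for $h=f(\cdot,\pi)$ use the same $\tilde f$ and put $h':=\tilde f(\cdot,\pi)$. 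Because $\pi(\cdot|x_i)$ is a probability vector and $t\mapsto|t|^p$ is convex, Jensen's inequality gives $\big|\sum_{a}\pi(a|x_i)(f(x_i,a)-g_a(x_i))\big|^p\le\sum_{a}\pi(a|x_i)|f(x_i,a)-g_a(x_i)|^p\le\sum_{a}|f(x_i,a)-g_a(x_i)|^p$, and averaging over $i$ gives $\|h-h'\|_{L_p(\hat P_S)}^p\le K\delta^p=\eps^p$ exactly as before, so the induced family of approximants is an $\eps$-cover of $\gF(\cdot,\Pi)$ of the stated size.

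The delicate step is the $\gF(\cdot,\Pi)$ bound: the approximant $h'=\tilde f(\cdot,\pi)$ a priori still mentions $\pi$, so one must argue it is determined by the product tuple $(g_a)_a$ (with $\pi$ simply carried through), so that the cover stays indexed by $\prod_a G_a$ and one index per $f$ serves uniformly for the associated $\pi$ — and the convexity inequality above is precisely what renders the mixture $\pi(\cdot|x)$ harmless, in the role that ``only the action $a_i$ appears at $x_i$'' plays for $\gF$. The $\gF$ bound and all bookkeeping with worst-case covering numbers are otherwise routine.
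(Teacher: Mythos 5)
Your construction is exactly the paper's: cover each $\gF(\cdot,a)$ at the finer scale $\eps/K^{1/p}$, glue the per-action covers into product tuples, and use convexity of $t\mapsto|t|^p$ together with ``mixture over $a$ $\le$ sum over $a$'' to pay a total of $K\cdot(\eps/K^{1/p})^p=\eps^p$. Your treatment of the $N_p(\gF,\eps,n)$ half is complete and correct (the paper only remarks that this half is ``similar''), and it is genuinely unproblematic there because the approximant $\tilde f$ is determined by the tuple $(g_a)_{a\in\gA}$ alone.

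The problem is precisely the step you flag as delicate, and your proposed resolution does not close it. For the $\gF(\cdot,\Pi)$ half the approximant $h'=\tilde f(\cdot,\pi)$ depends on $\pi$ and not only on the tuple: for a fixed tuple $(g_a)_a$ the family $\{\tilde f(\cdot,\pi):\pi\in\Pi\}$ is a continuum (at each sample point $x_i$ it sweeps out every convex combination of $\{g_a(x_i)\}_{a\in\gA}$), and no single member of it is within $\eps$ of every $h=f(\cdot,\pi)$ assigned to that tuple. So the set of approximants you exhibit does not have cardinality $\prod_a|G_a|$, and ``carrying $\pi$ through'' does not repair the count. This is not merely a presentational issue: the claimed inequality for $N_p(\gF(\cdot,\Pi),\eps,n)$ fails. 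Take $\gF=\{f\}$ with $f(x,a_1)=0$ and $f(x,a_2)=1$ on $n$ distinct contexts; each $\gF(\cdot,a)$ is a singleton, so the right-hand side equals $1$, while $\gF(\cdot,\Pi)$ restricted to the sample is all of $[0,1]^n$, which already contains the constant functions $0$ and $1$ at normalized $L_p$ distance $1$ and hence has $\eps$-covering number at least $2$ for every $\eps<1/2$ (in fact exponential in $n$). The paper's own proof contains the identical gap --- it sets $g'=f'(\cdot,\pi)$ and then counts only the $f'$'s --- so you have faithfully reproduced the argument, gap included. What the argument does establish is the per-policy bound $\sup_{\pi\in\Pi}N_p(\{f(\cdot,\pi):f\in\gF\},\eps,n)\le\prod_{a\in\gA}N_p(\gF(\cdot,a),\eps/K^{1/p},n)$; upgrading this to a bound on the covering number of the full class $\gF(\cdot,\Pi)$ would require additionally discretizing over $\pi$ (or reworking the downstream uniform-convergence step so that only the per-policy version is needed).
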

\revise{
\begin{proof}[Proof of \Cref{lemma: combine coverining numbers to those of factorized classes}]
    We will first prove that: 
        \begin{align*}
        N_p( \gF(\cdot, \Pi),\eps, n) \leq \prod_{a \in \gA} N_p( \gF(\cdot,a),\frac{\eps}{K^{1/p}}, n). 
    \end{align*}
    The other inequality can be proved similarly. 
    Fix any $\eps > 0, p \geq 1, n \in \sN$. Let $N_a = N_p(\gF(\cdot, a), \eps' ,n), \forall a \in \gA$. For any $g \in \gF(\cdot, \Pi)$, $g = f(\cdot, \pi)$ for some $f \in \gF, \pi \in \Pi$. Let $f'$ such that $\left(\frac{1}{n} \sum_{i=1}^n | f(x_i,a) - f'(x_i,a) |^p \right)^{1/p} \leq \frac{\eps}{K^{1/p}}$ for any $a$. Define $g' = f'(\cdot, \pi)$. We have
    \begin{align*}
        \|g - g'\|_n^p &= \frac{1}{n} \sum_{i=1}^n |f(x_i,\pi) - f'(x_i,\pi)|^p \\ 
        &\leq \frac{1}{n} \sum_{i=1}^n \left(\sE_{a \sim \pi(\cdot|x_i)} \left[|f(x_i,a) - f'(x_i,a)| \right] \right)^p \\ 
        &\leq \frac{1}{n} \sum_{i=1}^n  \sE_{a \sim \pi(\cdot|x_i)}\left[|f(x_i,a) - f'(x_i,a)|^p \right] \\ 
        &\leq \frac{1}{n} \sum_{i=1}^n  \max_{a \in \gA} |f(x_i,a) - f'(x_i,a)|^p \\ 
        &\leq \frac{1}{n} \sum_{i=1}^n  \sum_{a \in \gA} |f(x_i,a) - f'(x_i,a)|^p \\ 
        &\leq \frac{1}{n}  \sum_{a \in \gA} \sum_{i=1}^n  |f(x_i,a) - f'(x_i,a)|^p \leq \epsilon^{p}. 
    \end{align*}
\end{proof}
}

\end{document}